\newcommand{\abs}[1]{\left| #1\right|}
\newcommand{\norm}[1]{\left\lVert#1\right\rVert}
\newcommand{\br}[1]{\left\{#1\right\}}
\newcommand{\REAL}{\ensuremath{\mathbb{R}}}
\newtheorem{theorem}{Theorem}
\newtheorem{lemma}[theorem]{Lemma}
\newtheorem{observation}[theorem]{Observation}
\newcommand{\iters}{\beta}
\newtheorem{definition}[theorem]{Definition}
\newtheorem{corollary}[theorem]{Corollary}
\newtheorem{claim}{Claim}[theorem]
\newcommand{\dist}{\mathrm{dist}}
\newcommand{\smallest}{\mathrm{smallest}}
\newcommand{\M}{m}
\newcommand{\D}{\mathrm{D}}
\newcommand{\I}{S}
\newcommand{\rot}{\textsc{SO}}
\newcommand{\idmat}{ \textsc{I}}
\newcommand{\NN}{\textsc{NN}}
\newcommand{\linspan}{\mathrm{sp}}
\newcommand{\R}{\mathcal{R}}
\newcommand{\lip}{\ell}
\newcommand{\cost}{\mathrm{cost}}
\newcommand{\proj}{\mathrm{proj}}
\newcommand{\OPT}{\mathrm{OPT}}
\newcommand{\algnamematching}{\textsc{Align-and-Match}}
\newcommand{\algnamegetrot}{\textsc{Get-Rot}}
\newcommand{\algNameProbRot}{\textsc{Prob-Align}}
\newcommand{\algNameApproxAlignment}{\textsc{Prob-Alignment}}
\newcommand{\algname}{\textsc{Approx-Alignment}}
\renewcommand{\paragraph}[1]{\medskip\noindent\textbf{{#1} }}
\newcommand{\alignments}{\textsc{Alignments}}
\renewcommand{\Pi}{\Psi}
\newcommand{\sdim}{\tau}
\newcommand{\jdim}{\tau}
\newcommand{\matchAlg}{\texttt{P-ICP}}
\newcommand{\matchAlgICP}{\texttt{P-ICP-Refined}}
\newcommand{\ICP}{\texttt{ICP}}
\newcommand{\GOICP}{\texttt{GO-ICP}}
\newcommand{\CPD}{\texttt{CPD}}
\DeclareMathOperator*{\argmin}{arg\,min}
\newif\ifinclude
\newif\ifproofs
\begin{document}

\title{Provably Approximated ICP}


\author{
    Ibrahim Jubran$^1$, Alaa Maalouf$^1$, Ron Kimmel$^2$, Dan Feldman$^1$\\
    $^1$University of Haifa, $^2$Technion\\
    {\tt\small \{ibrahim.jub,  alaamalouf12, dannyf.post\}@gmail.com}\\
    {\tt\small \{ron\}@cs.technion.ac.il}
    }

\maketitle


\begin{abstract}
The goal of the \emph{alignment problem} is to align a (given) point cloud $P = \{p_1,\cdots,p_n\}$ to another (observed) point cloud $Q = \{q_1,\cdots,q_n\}$. That is, to compute a rotation matrix $R \in \mathbb{R}^{3 \times 3}$ and a translation vector $t \in \mathbb{R}^{3}$ that minimize the sum of paired distances $\sum_{i=1}^n D(Rp_i-t,q_i)$ for some distance function $D$.
A harder version is the \emph{registration problem}, where the correspondence is unknown, and the minimum is also over all possible correspondence functions from $P$ to $Q$.
Heuristics such as the Iterative Closest Point (ICP) algorithm and its variants were suggested for these problems, but none yield a provable non-trivial approximation for the global optimum.

We prove that there \emph{always} exists a ``witness'' set of $3$ pairs in $P \times Q$ that, via novel alignment algorithm, defines a constant factor approximation (in the worst case) to this global optimum.
We then provide algorithms that recover this witness set and yield the first provable constant factor approximation for the: (i) alignment problem in $O(n)$ expected time, and (ii) registration problem in polynomial time.
Such small witness sets exist for many variants including points in $d$-dimensional space, outlier-resistant cost functions, and different correspondence types.

Extensive experimental results on real and synthetic datasets show that our approximation constants are, in practice, close to $1$, and up to x$10$ times smaller than state-of-the-art algorithms.
\end{abstract}

\section{Introduction} \label{sec:intro}
Consider the set $P$ of known 3D landmarks mounted on a car, and the set $Q$ of the same 3D landmarks as currently observed via an external 3D camera, say, a few seconds later.
Suppose that we wish to compute the new car's position and orientation, relative to its starting point. These can be deduced by recovering the rigid transformation (rotation and translation) that align $P$ to $Q$. In this \emph{alignment problem}, we assume that the correspondence (matching) between every point in $P$ to $Q$ is known. When this matching is unknown, and needs to be computed, the  problem is known as the \emph{registration problem}. It is a fundamental problem in computer vision ~\cite{newcombe2011kinectfusion,makadia2006fully,salvi2007review,whelan2015real} with many applications in robotics~\cite{magnusson2007scan,pomerleau2015review,garcia20043d} and autonomous driving~\cite{zhang2015visual}.

\begin{table*}[t]
\caption{\textbf{Example contributions. }Variants of the problems~\eqref{eqPCAlignment}--\eqref{eqmain3} that we approximate in this paper, either using: (i) Corollary~\ref{corApproxCost} (known correspondence), or (ii) Theorem~\ref{theorem:matching} (unknown correspondence). Let $P=\br{p_1,\cdots,p_n}$ and $Q=\br{q_1,\cdots,q_n}$ be two sets of points in $\REAL^d$, let $z,r, T > 0$, and let $w = d^{\abs{\frac{1}{z}-\frac{1}{2}}}$. Formally, we wish to minimize $\cost(P,Q,(R,t)) = f\left( \lip \left(D\left(Rp_1-t,q_1\right)\right),\cdots, \lip\left(D\left(Rp_n-t,q_n\right)\right)\right)$ for functions $D:\REAL^d\times\REAL^d \to [0,\infty)$, $\lip:[0,\infty) \to [0,\infty)$ and $f:\REAL^n \to [0,\infty)$ as in Definition~\ref{def:cost}.
Rows marked with a $\star$ can also be approximated in linear time with high probability and bigger approximation factors, using Theorem~\ref{lemProbAlignment}.}
\begin{adjustbox}{width=\textwidth}
\small
\begin{tabular}{ | c | c | c | c | c | c | c |}
\hline
Use case & $f(v)$ & $\lip(x)$ & $D(p,q)$ & \makecell{Optimization Problem\\$\cost(P,Q,(R,t))$} & \makecell{Approximation\\Factor} & \makecell{Matching $\M$\\Given?}\\
\hline
\makecell{Sum of distances $\star$} & $\norm{v}_1$ & $x$ & $\norm{p-q}$ & \makecell{$\sum_{i=1}^n \norm{Rp_i-t-q_{\M(i)}}$} & $(1+\sqrt{2})^{d}$ & \makecell{Not\\necessary}\\
\hline
\makecell{Sum of squared distances $\star$} & $\norm{v}_1$ & $x^2$ & $\norm{p-q}$ & $\sum_{i=1}^n \norm{Rp_i-t-q_{\M(i)}}^2$ & $(1+\sqrt{2})^{2d}$ & \makecell{Not\\necessary}\\
\hline
\makecell{Sum of distances with\\noisy data using M-estimators} & $\norm{v}_1$ & \makecell{$\min\br{x,T}$} & $\norm{p-q}$ & \makecell{$\sum_{i=1}^n \min\br{\norm{Rp_i-t-q_{\M(i)}},T}$} & $(1+\sqrt{2})^{d}$ & \makecell{Not\\necessary}\\
\hline
\makecell{Sum of $\ell_z$ distances to the\\power of $r$ $\star$} & $\norm{v}_1$ & $x^r$ & $\norm{p-q}_z$ & $\sum_{i=1}^n \norm{Rp_i-t-q_{\M(i)}}_z^r$ & $w^r(1+\sqrt{2})^{dr}$ & \makecell{Not\\necessary}\\
\hline
\makecell{Sum of $\ell_z$ distances to the\\power of $r$ with\\$k\geq 1$ outliers} & \makecell{Sum of the $n-k$\\smallest entries\\of $v$} & $x^r$ & $\norm{p-q}_z$ & \makecell{$ \displaystyle\sum_{i\in \I \subset \br{1,\cdots, n}, |\I| = n-k}\norm{Rp_i-t-q_{\M(i)}}_z^r$ } & $w^r(1+\sqrt{2})^{dr}$ & \makecell{Yes}\\
\hline
\end{tabular}
\end{adjustbox}
\label{table:contrib}
\end{table*}

\begin{figure}
  \centering
  \includegraphics[width=0.35\textwidth]{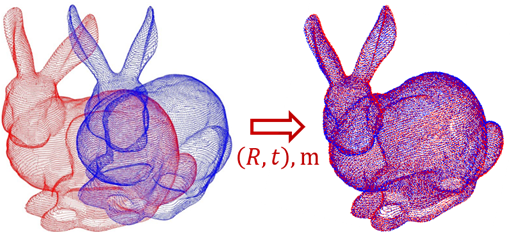}
  \caption{(left) The blue bunny is a translated and rotated version of the red bunny. (right) The registration problem is to recover this transformation, which includes matching each blue point to a corresponding red point.}\label{fig:bunny}
\end{figure}

\paragraph{Alignment. }In the alignment problem the input consists of two ordered sets $P = \br{p_1,\cdots,p_n}$ and $Q = \br{q_1,\cdots,q_n}$ in $\REAL^d$, where $d=3$ in the previous application, and the goal is to minimize
\begin{equation}\label{eqPCAlignment}
\sum_{i=1}^{n} D(Rp_i-t,q_{i}),
\end{equation}
over every \emph{alignment} (rigid transformation) $(R,t)$ consisting of a rotation matrix $R \in \REAL^{d\times d}$ (an orthogonal matrix whose determinant is $1$), and a translation vector $t\in \REAL^d$, and where $D(p,q) = \norm{p-q}$ is the Euclidean ($\ell_2$) distance between a pair of points $p,q \in \REAL^d$.
Here, the sum is over the distance between every point $p_i\in P$ to its corresponding point $q_i\in Q$.
This correspondence may be obtained using some auxiliary information, like point-wise descriptors e.g., SIFT~\cite{lowe1999object},
visual tracking of points~\cite{OptiTrack, Vicon}, or the use of predefined shapes and features~\cite{nasser2015coresets, rabinovich2020cobe}.

To our knowledge, the only provable approximation to the optimal \emph{global minimum} of~\eqref{eqPCAlignment} is for its variant where $D(p,q)$ is replaced by $\lip(D(p,q)) = \norm{p-q}^2$, i.e., \emph{squared} Euclidean distance. In this special case, the optimal solution is unique and easy to compute: $t$ is simply the vector connecting the two centers of mass of $P$ and $Q$, and $R\in\REAL^{d\times d}$ can be computed using Singular Value Decomposition~\cite{golub1971singular} as described in~\cite{kabsch1976solution}. This paper gives the first provable non-trivial approximation algorithm for~\eqref{eqPCAlignment}, while also handling an even wider range of functions.

\paragraph{Registration. }The registration problem does not assume the correspondence between $P$ and $Q$ is given, that is, we do not know which point in $Q$ matches $p_i \in P$.
Therefore, besides the rigid motion, the correspondence needs also to be extracted based solely on the two given point clouds, resulting in a much more complex problem with a large number of local minima; see Fig.~\ref{fig:bunny}. Formally, it aims to minimize
\begin{equation} \label{eqmain1}
\sum_{i=1}^{n}
\lip\left(D(Rp_i-t,q_{\M(i)})\right),
\end{equation}
over every alignment $(R,t)$ and correspondence function $\M:\{1,\ldots,n\}\to\{1,\ldots,n\}$; see recent survey~\cite{tam2012registration}. Here, a natural selection for $\lip$ is $\lip(x) = x^2$. The set $Q$ here is assumed to be of size $n$ for simplicity only, but can be of any different size.

Unlike~\eqref{eqPCAlignment}, we do not know a provable approximation to~\eqref{eqmain1}, even for $\ell(x) = x^2$.
The most commonly used solution for this problem, both in academy and industry, is the Iterative Closest Point (ICP) heuristic~\cite{besl1992method}; see Section~\ref{sec:relatedWork}.
In this paper we provide a ``provable ICP'' version which approximates the global optimum of this problem.

\paragraph{Alternative cost functions. }
When dealing with real-world data, noise and outliers are inevitable.
One may thus consider alternative cost functions, rather than the sum of squared distances (SSD) above, due to its sensitivity to such corrupted input.
A natural more general cost function would be to pick e.g., $\lip(x) = x^r$ for $r > 0$, which is more robust to noise when $r \in (0,1]$.
Alternatively, for handling outliers, a more suitable function would be $\lip(x) = \min\br{x,T}$ for some threshold $T>0$, or the common Tuckey or Huber losses, or any other robust statistics function~\cite{huber2011robust}.

To completely ignore these (unknown) faulty subsets of some paired data we may consider solving
\begin{equation} \label{eqmain3}
\min_{(R,t)} \sum_{i\in \I \subset \br{1,\cdots, n}, |\I| = n-k} \lip\left(D(Rp_i-t,q_{\M(i)})\right),
\end{equation}
where $k\leq n$ is the number of outliers to ignore.

In this paper we suggest a general framework for provably approximating the global minimum of the alignment and registration problems, including formulations~\eqref{eqPCAlignment}--\eqref{eqmain3}.

\subsection{Related Work} \label{sec:relatedWork}
The most common method for solving the registration problem in~\eqref{eqmain1}, for $\ell(x) = x^2$, is the ICP algorithm~\cite{chen1992object,besl1992method}.
The ICP is a local optimization technique, which alternates, until convergence, between solving the correspondence problem and the rigid alignment problem.
Over the years, many variants of the ICP algorithm have been suggested; see surveys in~\cite{pomerleau2013comparing} and references therein.
However, these methods usually converge to local and not global minimum if not initialized properly.

\textbf{Estimation maximization approaches. }
To overcome the ICP limitations, probabilistic methods~\cite{rangarajan1997robust} have been suggested, making use of GMMs, treating one point set as the GMM centroids, and the other as data points~\cite{joshi1995problem,wells1997statistical,cross1998graph,luo2003unified,mcneill2006probabilistic, hermans2011robust,biber2003normal}. This category also includes the widely used Coherent Point Drift (CPD) method~\cite{myronenko2010point}.

\textbf{Learning-based approaches. }
Learning dedicated features for this task was shown to enhance the output alignment~\cite{wang2019deep}.
In~\cite{aoki2019pointnetlk}, a deep learning model was
combined with a modified version of the known Lukas \& Kanade algorithm. Recently, 
an unsupervised deep learning based approach was proposed in~\cite{halimi2019unsupervised}.

\textbf{Alternative approaches. }
Some results make use of the Fourier domain~\cite{makadia2006fully}, and use correlation of kernel density estimates (KDE)~\cite{tsin2004correlation}, which scales poorly as the input size increases.
Other results use a Branch and Bound scheme to try and compute the global minimum~\cite{olsson2008branch, dym2019linearly,yang2015go}.

\textbf{Common limitations. }
The previously mentioned methods share similar properties and either (i) support only the simple sum of squared distances function or $d=3$, (ii) they converge to a local minima due to bad initialization, (iii) give optimality guarantees, if any, only on a sub-task of the registration pipeline, and lack such guarantees relative to the global optimum of the registration problem, (iv) their convergence time is impractical or depends on the data itself, 
or (v) require a lot of training data.
To our knowledge, no provable approximation algorithms have been suggested for tackling~\eqref{eqmain1}, even for $\lip(x) = x$.

\subsection{Our Contribution}
Our contributions in this paper are as follows.
\renewcommand{\labelenumi}{(\roman{enumi})}
\begin{enumerate}[leftmargin=0cm]
\item A proof that \emph{every} input pair of point clouds admits a witness set of $d$ points from $P$ and corresponding $d$ points from $Q$ that defines a constant factor approximation. Such a (usually different) witness set exists for all the versions of the alignment and registration problem, including problems~\eqref{eqPCAlignment}--\eqref{eqmain3}; see Section~\ref{sec:approxScheme}.

%
\item The first provable constant factor approximation algorithms for the above alignment and registration problems, including sum of distances to the power of $r>0$ and sum of M-estimators; see Corollary~\ref{corApproxCost}, Theorem~\ref{theorem:matching} and Table~\ref{table:contrib}. These algorithms run in polynomial time; see Algorithms~\ref{a107} and~\ref{AlgMatching}.

\item A probabilistic \emph{linear time} algorithm for solving the alignment problem, which also supports sum of distances to the power of $r>0$ and sum of M-estimators; see Algorithm~\ref{alg:prob} and Theorem~\ref{lemProbAlignment}.

\item Extensive experimental results on synthetic and real-world datasets which demonstrate the effectiveness and accuracy of both the polynomial time and randomized algorithms, as compared to state of the art methods; see Section~\ref{sec:ER}.
The results show that the approximation factor obtained in practice is much smaller than the theoretically predicted factor.
We provide full open-source code for our algorithms~\cite{opencode}.
\end{enumerate}

\subsection{
             Witness Set} \label{sec:approxScheme}

\begin{figure}
  \centering
  \includegraphics[width=0.4\textwidth]{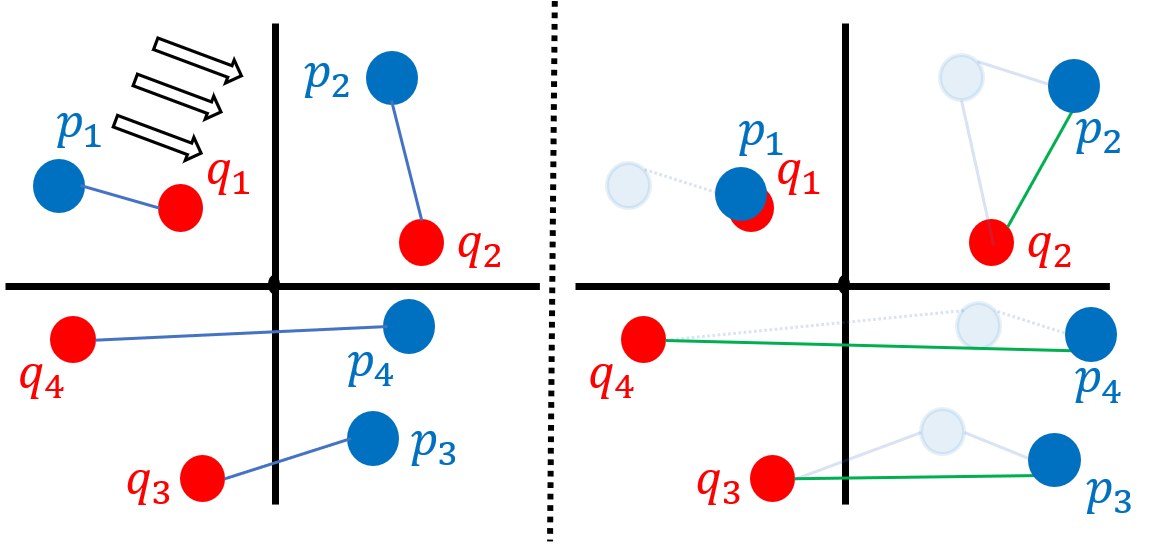}
  \caption{(Left): Two corresponding sets of points $P$ (in blue) and $Q$ (in red), where $p_1$ and $q_1$ have the smallest distance among all pairs.
  (Right): Translating $P$ by $t=p_1-q_1$ (i.e., $p_1$ now intersects $q_1$).
  By the triangle inequality, each distance $\norm{p_i-t-q_i}$ (green lines) is at most $2\cdot \norm{p_i-q_i}$ (blue lines).
  }\label{fig:approxTranslation}
\end{figure}

\begin{figure}
  \centering
  \includegraphics[width=0.4\textwidth]{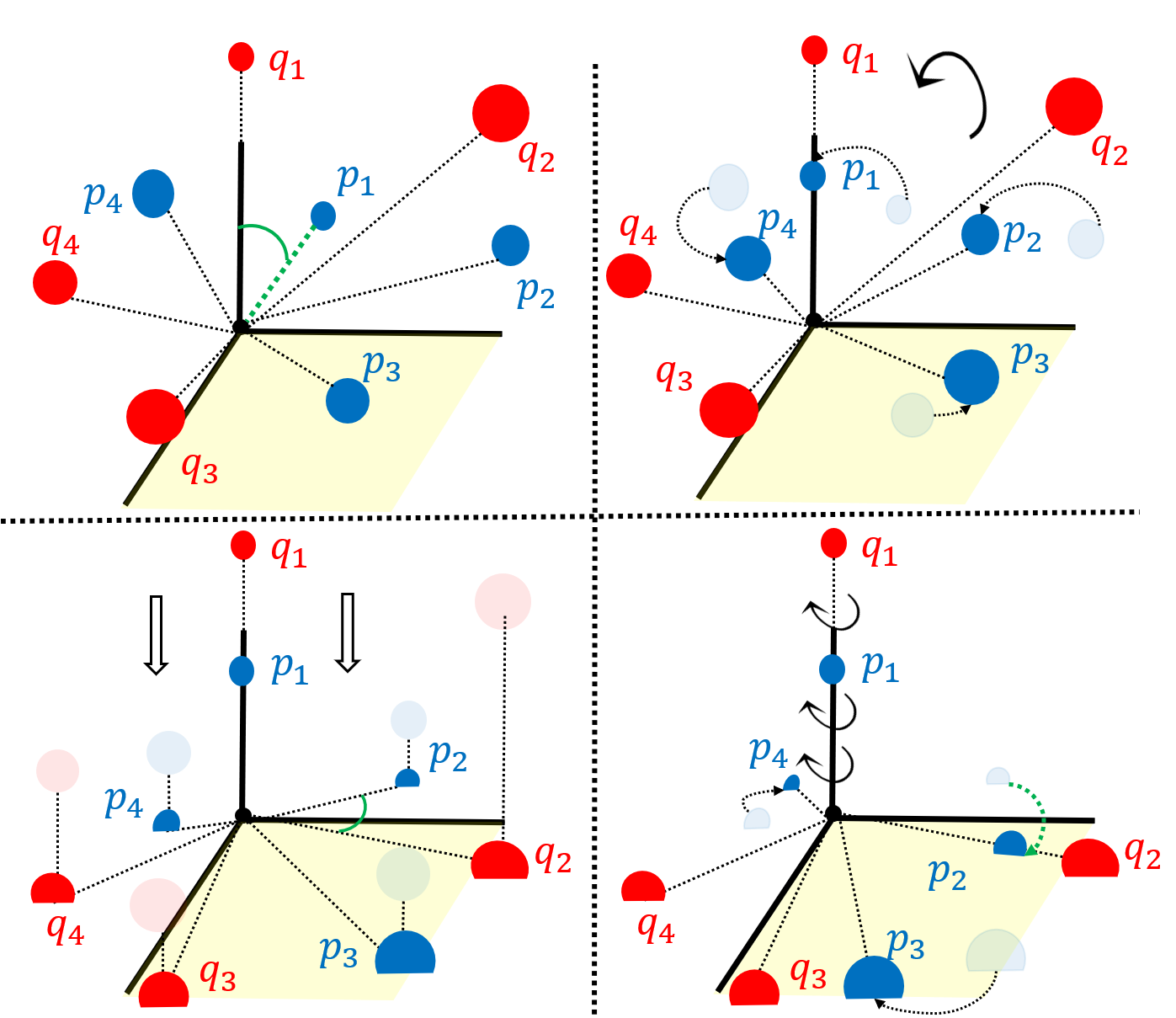}
  \caption{Illustration of Algorithm~\ref{a106}.
  (Top left): Two sets of corresponding points $P$ (in blue) and $Q$ (in red).
  (Top right): Rotating $P$ such that some $p\in P$ aligns with its corresponding $q \in Q$.
  (Bottom left): Projecting the rotated set $P$ and the set $Q$ onto the plane orthogonal to $q_1$. (Bottom right): Rotating the projected $P$ such that one of its points aligns with its corresponding point from $Q$.
  Observe that the initial aligned pair of points $(p_1,q_1)$ are not affected by the proceeding steps.}\label{fig:rotApprox}
\end{figure}

In this section we introduce our novel technique.
For simplicity we assume the correspondence between $P$ and $Q$ is given.
A generalization to the case with unknown correspondence is detailed in Section~\ref{sec:unknownMatching}.

Our main technical result is that for every corresponding ordered point sets $P = \br{p_1,\cdots,p_n},Q = \br{q_1,\cdots,q_n} \subseteq \REAL^d$, every cost function $\cost$ which satisfies some set of properties, e.g., sum of distances (see Definition~\ref{def:cost}), and every possible alignment $(R^*,t^*)$, there is a subset of $d$ points from $P$ and $d$ points from $Q$, which we call a \emph{witness set}.
Those small subsets determine, using our novel algorithm, an alignment $(R',t')$, that approximates the cost of $(R^*,t^*)$ in the following sense
\[
\cost(P,Q,(R',t')) \leq c \cdot \cost(P,Q,(R^*,t^*))
\]
for small constant $c>0$.
Here, $\cost$ assigns a non-negative value for every pair of input point sets and alignment.

For the existence proof's sake, we assume that the alignment $(R^*,t^*)$, which might be the optimal alignment between the two sets, is known beforehand. We drop assumption in practice.
The proof then applies a series of steps which alter this initial alignment $(R^*,t^*)$, until a different alignment $(R',t')$ is obtained, where a (witness) set of points from $P$ and $Q$ satisfies a sufficient number of known constraints, making it feasible (given this witness set) to recover $(R',t')$.
Each step in this series is guaranteed to approximate the cost of it's preceding step.
Hence, the cost of $(R',t')$ approximates the initial (optimal) cost of $(R^*,t^*)$.
The above scheme can also be applied to different problems from other fields; see e.g.,~\cite{jubran2019provable}.
The steps are as follows:
\renewcommand{\labelenumi}{(\roman{enumi})}
\begin{enumerate}[leftmargin=0cm]
  \item Consider the set $P'$ obtained by applying the optimal (unknown) alignment $(R^*,t^*)$ to $P$.
  Now, consider the single corresponding pair of points $p' = R^*p-t^* \in P'$ and $q \in Q$ which have the closest distance $\norm{p' - q}$ between them among all matched pairs.
  Using the triangle inequality, one can show that translating the set $P'$ by $p'-q$ (that is, such that $p'$ now intersects $q$) would not increase the pairwise distances of the other pairs of points by more than a multiplicative factor of $2$; see Fig.~\ref{fig:approxTranslation}.
  Hence, we proved the existence of some translation $t'$ of $P'$, where some $p \in P$ intersects its corresponding $q\in Q$, and where the cost is larger than the initial optimal cost by at most a factor of $2$. We thus narrowed down the space of candidate alignments. For the following step, without loss of generality, assume that $p'$ and $q$ are located at the origin. \label{bullet0}

  \item Similarly, we prove there is a corresponding pair of points $p' \in P'$ and $q \in Q$ such that aligning their direction vectors via a rotation matrix $R'$, i.e., $R'\frac{p'}{\norm{p'}} = \frac{q}{\norm{q}}$, would increase the pairwise distances of the other pairs by at most a small factor. $p'$ and $q$ are the pair with the smallest angle between them; see details in Lemma~\ref{lemRotApprox} in the appendix.
  \label{bullet1}

  \item We can repeat the step described in (ii) above recursively as follows: Find such a pair $(p',q)$, align their direction vectors, project the two sets of points onto the hyperplane orthogonal to the direction vector of $q$, and then continue recursively (at most $d-2$ times). Such a projection insures that the next uncovered rotation will maintain the alignment of $(p',q)$; see Fig.~\ref{fig:rotApprox}.
  Each such step proves \emph{the existence} of yet another corresponding pair of points which contribute at least $1$ constraint on $R'$, without damaging the cost by more than a constant factor. Hence, there exist $d-1$ pairs of points which uniquely determine our approximated rotation $R'$. Now, these (unknown) $d-1$ pairs along with the (unknown) pair from (i) above (which determine the translation $t'$), are called a witness set. Given the witness set, the approximated alignment $(R',t')$ can be recovered. To this end, Algorithm~\ref{a107} iterates over every set of $d$ corresponding pairs from $P$ and $Q$, and computes the alignment they determine. We then pick the alignment $(R',t')$ that minimizes $\cost(P,Q,(R',t'))$, which is guaranteed to approximate $\cost(P,Q,(R^*,t^*))$; see Theorem~\ref{lemApproxAlignment} and Corollary~\ref{corApproxCost}.
  \label{bullet2}
\end{enumerate}

The claims above imply that there is at least one such witness set. However, our experimental results suggest that many such $d$ pairs can serve as witness sets, in practice, and yield a good approximation result.

\section{Approximation for the Alignment Problem} \label{sec:givenMatching}
In this section, we suggest a novel approximation algorithm for the alignment problem; see Algorithm~\ref{a107}.
The algorithm computes the witness set described in Section~\ref{sec:approxScheme}. The formal statement is given in Theorem~\ref{lemApproxAlignment}, and its full proof is placed in the appendix.

\begin{figure}
  \centering
  \includegraphics[width=0.4\textwidth]{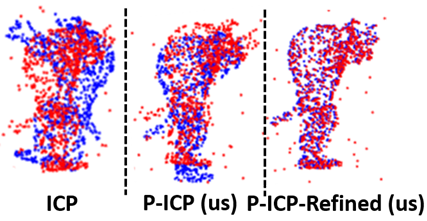}
  \caption{Registration visualization using the Armadillo model, $n=1000$ and $k=20\%$ outliers. (Left) $\ICP(P,Q)$, (middle) $\matchAlg(P,Q, \cost, \iters)$, (right) $\matchAlgICP(P,Q, \cost, \iters)$. $\cost$ is the SSD with $\lip$ as the threshold M-estimator and $\iters=3000$; see Section~\ref{ER:registration}.
} \label{fig:ArmadilloVis}
\end{figure}


\textbf{Notation.}
We denote $[n]=\br{1,\cdots,n}$ for any integer $n \geq 1$ and by $\vec{0}$ the origin of $\REAL^d$.
We assume every vector is a column vector. The $d$ dimensional identity matrix is denoted by $I_d \in \REAL^{d\times d}$.
Let $\rot(d)$ be the set of all rotation matrices in $\REAL^d$.
For $t\in\REAL^d$ and $R\in \rot(d)$, the pair $(R,t)$ is called an \emph{alignment}.
We define $\alignments(d)$ to be the union of all possible $d$-dimensional alignments.

In what follows, given some Linear subspace $X$ of $\REAL^d$, we define $\R_X$ to be the set of all rotation matrices $R$ such that $p \in X$ if and only if $Rp\in X$; see Fig.~\ref{fig:Rx}.
\begin{definition} \label{defRq}
Let $\sdim\in\br{0,\cdots,d}$, let $X$ be a $\sdim$-dimensional subspace of $\REAL^d$, and let $V_X \in \REAL^{d\times d}$ be a unitary arbitrary matrix whose $\sdim$ leftmost columns span $X$.
We define
\[
\R_X = \br{V_X\left( \begin{array}{ccc}
		R & \mathbf{0} \\
		\mathbf{0} & I_{d-\sdim}\end{array} \right)V_X^T \mid R \in \rot(\sdim)},
\]
for $\sdim \geq 2$, and $\R_X = \br{I_d}$ otherwise.
\end{definition}
For example, if $X$ is spanned by the first $\sdim$ axis of $\REAL^d$ then $\R_X$ contains all the rotation matrices in $\rot(d)$ which, when multiplied by any point in $p\in \REAL^d$, affect only the first $\sdim$ coordinates of $p$.
If not, $V_X^T$ aligns $X$ with the first $\sdim$ axis of $\REAL^d$ and $V_X$ does the inverse rotation; see Fig.~\ref{fig:Rx}.

\begin{figure}
  \begin{minipage}[c]{0.15\textwidth}
    \includegraphics[scale=0.33]{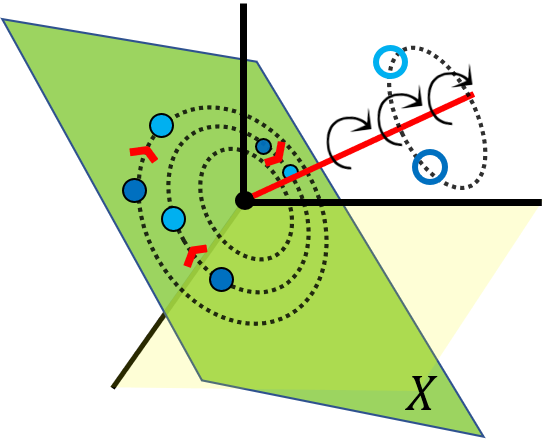}
  \end{minipage}\hfill
  \begin{minipage}[c]{0.31\textwidth}
    \caption{A plane $X$ in $\REAL^3$. $\R_X$ contains all the rotation matrices that map points $p\in X$ (solid dark blue) to other points $p' \in X$ (solid light blue), and points $q \not\in X$ (dark blue circles) to points $q' \not\in X$ (light blue circles).}\label{fig:Rx}
  \end{minipage}
\end{figure}

\textbf{Overview of Algorithm~\ref{a106}.}
Assuming the translation is given, Algorithm~\ref{a106} recovers the rotation determined by some subset of pairs, as described in Section~\ref{sec:approxScheme}.
It gets as input two sets $\br{p_1,\cdots,p_{\jdim-1}}$ and $\br{q_1,\cdots,q_{\jdim-1}}$ of points that are contained in a $\jdim$-dimensional subspace $\pi$ of $\REAL^d$.
In Line~\ref{line:compR}, the algorithm computes a rotation matrix $R\in \R_\pi$ that aligns the directions of $p_1$ and $q_1$. In Line~\ref{linea12} the algorithm computes an orthogonal matrix $W$ whose column space spans the $(\jdim-1)$-dimensional subspace orthogonal to $q_1$. Then, in Line~\ref{lineRecursieRot}, it projects the rotated points $Rp_2,\cdots,Rp_{\jdim-1}$ and the points $q_2,\cdots,q_{\jdim-1}$ onto $\pi'$, and continues recursively. Hence, the recursive call will compute a rotation matrix $R'\in \R_{\pi'}$, which aligns the directions of $WW^TRp_2$ and $WW^Tq_2$, but also guarantees that $RR'$ still aligns $p_1$ and $q_1$. In other words, the rotation computed at the $i$'th recursive call, when multiplied by the previous rotations, will not affect the alignment of the vectors from the previous $i-1$ iterations; see illustration in Fig.~\ref{fig:rotApprox}.
The algorithm terminates after $\jdim-2$ such recursive calls.

\begin{algorithm}
	\caption{ \textsc{\algnamegetrot$(P,Q)$}} \label{a106}
	\SetKwInOut{Input}{Input}
	\SetKwInOut{Output}{Output}
	\Input{Two sets of points $P = \br{p_1,\cdots,p_{\jdim-1}}$ and $Q = \br{q_1,\cdots,q_{\jdim-1}}$ in a $\jdim$-dimensional subspace $\pi$ of $\REAL^d$, such that $\norm{p_1} , \norm{q_1} > 0$.}
	\Output{ A rotation matrix $R \in \R_{\pi}$.}
	
	$R :=$ an arbitrary rotation matrix in $\R_\pi$ that satisfies $\frac{Rp_1}{\norm{p_1}}= \frac{q_1}{\norm{q_1}}$. \label{line:compR}\\
	\If{$\jdim = 2$}{$ \Return\: R$}
	
	$W :=$ an arbitrary matrix in $\REAL^{d\times (d-1)}$ whose columns are mutually orthogonal unit vectors, and its column space spans the hyperplane orthogonal to $q_1$, i.e., $\big[W \mid \frac{q_1}{\norm{q_1}}\big] \in \REAL^{d\times d}$ forms a basis of $\REAL^d$. \label{linea12}\\

$p_i' := WW^TRp_i$ and $q_i' := WW^Tq_i, \forall i\in [\jdim-1]$

$S := \algnamegetrot(\br{p_2', \cdots,p_{\jdim-1}'}, \br{q_2', \cdots, q_{\jdim-1}'})$. \label{lineRecursieRot}

	$ \Return\: S R$
\end{algorithm}

\textbf{Overview of Algorithm~\ref{a107}.} Algorithm~\ref{a107} simply implements the exhaustive (subset) search proposed in Bullet (iii) of Section~\ref{sec:approxScheme}. Observe that this algorithm is ``embarrassingly parallel''~\cite{foster1995designing}.
Furthermore, as empirically demonstrated in Section~\ref{sec:ER}, it suffices to iterate over only a fraction of those subsets to obtain a good approximation.

\begin{algorithm}
	\caption{ {\algname$(P,Q)$} }\label{a107}
	\SetKwInOut{Input}{Input}
	\SetKwInOut{Output}{Output}
	\Input{A pair of sets $P=\br{p_1,\cdots,p_n}$ and $Q=\br{q_1,\cdots,q_n}$ in $\REAL^d$.}
	\Output{A set of alignments; see Theorem~\ref{lemApproxAlignment}}
		$ M := \emptyset$.\\
		\For {every $i \in [n]$ \label{lineESalignment}}
		{		
		    $p_j' := p_j - p_i$ and $q_j' := q_j-q_i$, $\forall j\in [n]$.
		
		    $A_i := \emptyset$
		
		    \For {every distinct $i_1, \cdots, i_{d-1}\in [n] \setminus \br{i}$
		  \label{line:algrotfor}}
        		{		
        			$A_i := A_i \cup \algnamegetrot\left(\br{p_{i_1}',\cdots ,p_{i_{d-1}}'},\br{q_{i_1}',\cdots ,q_{i_{d-1}}'}\right)$ \label{lineGetRot} \tcc{see Alg.~\ref{a106}}
        		}
		
			$M := M \cup \br{(R,R p_{i}-q_{i})\mid R\in A_i }$ \label{linecompt}\\
		}
	$\Return\: M$
\end{algorithm}


\begin{theorem} \label{lemApproxAlignment}
Let $P = \br{p_1,\cdots,p_n}$ and $Q = \br{q_1,\cdots,q_n}$ be two ordered sets each of $n$ points in $\REAL^d$.
Let $M \subseteq \alignments(d)$ be the output of a call to \algname$(P,Q)$; see Algorithm~\ref{a107}.
Then, for every $(R^*,t^*) \in \alignments(d)$, there exists $(\hat{R},\hat{t}) \in M$ that satisfies the following for every $i\in [n]$,
\[
\norm{\hat{R}p_i-\hat{t}-q_i} \leq (1+\sqrt{2})^{d} \cdot \norm{R^*p_i-t^*-q_i}.
\]
Furthermore, $(\hat{R},\hat{t})$ is computed in $n^{O(d)}$ time.
\end{theorem}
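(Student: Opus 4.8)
The plan is to decouple the statement into an \emph{existence} claim and an \emph{enumeration} claim, following the witness-set scheme of Section~\ref{sec:approxScheme}. First I would fix the target alignment $(R^*,t^*)$ and exhibit one alignment $(\hat R,\hat t)$ of a special ``canonical'' form that already attains the per-point bound; then I would argue that Algorithm~\ref{a107} enumerates \emph{every} canonical alignment, so in particular it produces this $(\hat R,\hat t)\in M$. The canonical form is: there is an anchor index $j_0$ whose pair coincides, $\hat R p_{j_0}-\hat t=q_{j_0}$, and $\hat R$ is the rotation obtained by aligning the directions of $d-1$ further pairs through the recursion of Algorithm~\ref{a106}. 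Since the conclusion is stated \emph{per point}, it suffices to control each residual $\norm{\hat R p_i-\hat t-q_i}$ separately, which is exactly what later enables the reductions to all the cost functions of Table~\ref{table:contrib}.

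For the existence part I would translate first. Let $p'_i=R^*p_i-t^*$ and choose $j_0=\argmin_i \norm{p'_i-q_i}$. Re-centering so that $p'_{j_0}$ meets $q_{j_0}$ changes point $i$'s residual from $p'_i-q_i$ to $(p'_i-q_i)-(p'_{j_0}-q_{j_0})$, whose norm is at most $\norm{p'_i-q_i}+\norm{p'_{j_0}-q_{j_0}}\le 2\norm{p'_i-q_i}$ by the triangle inequality and minimality of $j_0$; this is the factor $2$ of step~\ref{bullet0}. With the anchor at the origin I would then invoke Lemma~\ref{lemRotApprox} $d-1$ times through the recursion of Algorithm~\ref{a106}: at each level align the direction of the \emph{smallest-angle} remaining pair, incurring a per-point factor $1+\sqrt 2$, then project onto the hyperplane orthogonal to the just-aligned target direction and recurse. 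The crucial bookkeeping is that each recursive rotation lies in $\R_{\pi'}$ (Definition~\ref{defRq}) and therefore fixes the already-aligned direction, so that composing the level rotations $SR$ preserves all earlier alignments; concretely, splitting each residual into its component along $\hat q_1$ (untouched by $S$) and its component in $q_1^\perp$ lets the per-level factors multiply cleanly. Combining the translation with the $d-1$ rotations yields $\norm{\hat R p_i-\hat t-q_i}\le 2\,(1+\sqrt 2)^{d-1}\norm{R^*p_i-t^*-q_i}\le (1+\sqrt 2)^{d}\norm{R^*p_i-t^*-q_i}$, using $2\le 1+\sqrt 2$.

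Next I would match this canonical alignment to the output of Algorithm~\ref{a107}. For the anchor $i=j_0$, the algorithm centers the data at $p_{j_0},q_{j_0}$ (mirroring the translation step) and, for the correct inner choice of indices $i_1,\dots,i_{d-1}$ (the smallest-angle pairs picked by the existence proof), calls \algnamegetrot on the centered points. Because the effective rotation $\hat R=\tilde R R^*$ aligns the \emph{same} centered directions $\widehat{p_{i_k}-p_{j_0}}\mapsto \widehat{q_{i_k}-q_{j_0}}$ that \algnamegetrot is designed to align, and because the $\R_{\pi'}$ structure guarantees that later rotations do not disturb earlier ones, the matrix returned in Line~\ref{lineGetRot} realizes $\hat R$, and Line~\ref{linecompt} outputs $(\hat R,\hat R p_{j_0}-q_{j_0})=(\hat R,\hat t)$. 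Hence $(\hat R,\hat t)\in M$ and satisfies the claimed inequality for every $i$. The running time then follows immediately: the outer loop runs $n$ times, the inner loop over distinct $(d-1)$-tuples contributes $n^{O(d)}$, and each \algnamegetrot call costs $\mathrm{poly}(d)$ through its depth-$(d-2)$ recursion, for a total of $n^{O(d)}$.

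The main obstacle is Lemma~\ref{lemRotApprox}: proving that aligning the smallest-angle pair's directions inflates \emph{every} other residual by at most $1+\sqrt 2$ in general dimension. A naive $\norm{\hat R p_i-p_i}+\norm{p_i-q_i}$ split only gives a bound of order $1/\cos(\theta_i/2)$, which is useless as $\theta_i\to\pi$, so the argument must exploit both the minimality of the rotation angle and the radii $\norm{p_i},\norm{q_i}$, optimizing over the worst-case configuration to extract the constant $1+\sqrt 2$. A secondary but essential point is verifying that the ``arbitrary'' rotation permitted in Line~\ref{line:compR} of Algorithm~\ref{a106} may be taken to be the minimal (greedy) rotation used in the existence proof, so that the enumerated alignment actually attains the per-step factor; the orthogonal decomposition along $\hat q_1$ is precisely what lets these per-step factors compose into the final $(1+\sqrt 2)^d$.
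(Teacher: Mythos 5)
Your proposal is correct and follows essentially the same route as the paper's own proof: the closest-pair translation step giving the factor $2$, the recursive smallest-angle rotation argument (the paper's Lemma~\ref{lemApproxD}, built on Lemma~\ref{lemRotApprox}) giving $(1+\sqrt{2})^{d-1}$, the composition $\hat{t}=\hat{R}p_k-q_k$ with $2(1+\sqrt{2})^{d-1}\le(1+\sqrt{2})^{d}$, and the same enumeration and $n^{O(d)}$ runtime argument. The piece you defer --- the per-level $(1+\sqrt{2})$ bound and the compatibility of the ``arbitrary'' rotation in Line~\ref{line:compR} with the greedy one from the existence proof --- is exactly what the paper also outsources to Lemmas~\ref{lemRotApprox} and~\ref{lemApproxD}, so your plan sits at the same level of granularity as the paper's proof of this theorem.
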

\ifproofs
\begin{proof}
Put $(R^*,t^*)\in \alignments(d)$. Without loss of generality assume that $R^*$ is the identity matrix of $\REAL^d$ and that $t^* = \vec{0}$, otherwise rotate and translate the coordinates system.

For every $i\in [n]$ let $t_i = p_i-q_i$ and let $k\in \argmin_{i\in [n]} \norm{t_i}$. Put $i\in [n]$.
Since $\norm{t_k} \leq \norm{t_i} = \norm{p_i-q_i}$, it holds that
\begin{equation} \label{eqTransApprox}
\norm{p_i-t_k-q_i} \leq \norm{p_i-t_k-p_i} + \norm{p_i-q_i} = \norm{t_k} + \norm{p_i-q_i} \leq  2\cdot \norm{p_i-q_i}.
\end{equation}

Let $p_i' = p_i-p_k$, $q_i' = q_i-q_k$, $P' = \br{p_1',\cdots,p_n'}$ and $Q' = \br{q_1',\cdots,q_n'}$.
Consider the set $A_i$ computed at Line~\ref{lineGetRot}.
By Lemma~\ref{lemApproxD} there is $\hat{R} \in A_i$ such that
\begin{equation} \label{prop1}
\norm{\hat{R}p_i'-q_i'} = \norm{\hat{R}p_i'-q_i'} \leq (1+\sqrt{2})^{d-1} \cdot \norm{\idmat(d)p_i'-q_i'} = (1+\sqrt{2})^{d-1} \cdot \norm{p_i'-q_i'}.
\end{equation}

Let $\hat{t} = t_k + \hat{R}p_k -p_k$. We now obtain that
\begin{align}
\norm{\hat{R}p_i-\hat{t}-q_i} & = \norm{\hat{R}p_i-t_k-\hat{R}p_k+p_k-q_i} \label{eq11}\\
& = \norm{\hat{R}(p_i-p_k)-(p_k-q_k)+p_k-q_i} \nonumber\\
& = \norm{\hat{R}(p_i-p_k)+q_k-q_i} \nonumber\\
& = \norm{\hat{R}p_i'-q_i'} \label{eq13}\\
& \leq (1+\sqrt{2})^{d-1} \cdot \norm{p_i'-q_i'} \label{eq14}\\
& = (1+\sqrt{2})^{d-1} \cdot \norm{p_i-p_k-q_i+q_k} \nonumber\\
& = (1+\sqrt{2})^{d-1} \cdot \norm{p_i-p_k + q_k-q_i} \nonumber\\
& = (1+\sqrt{2})^{d-1} \cdot \norm{p_i-t_k-q_i} \nonumber\\
& \leq 2(1+\sqrt{2})^{d-1} \cdot \norm{p_i-q_i} \label{eq16},
\end{align}
where~\eqref{eq11} holds by the definition of $\hat{t}$, \eqref{eq13} holds by the definition of $p_i'$ and $q_i'$, \eqref{eq14} holds by~\eqref{prop1}, and~\eqref{eq16} holds by~\eqref{eqTransApprox}.

Hence, it holds that
\begin{equation} \label{eqres}
\norm{\hat{R}p_i-\hat{t}-q_i} \leq 2(1+\sqrt{2})^{d-1}\cdot \norm{p_i-q_i} \leq (1+\sqrt{2})^{d}\cdot \norm{p_i-q_i}.
\end{equation}

In Line~\ref{lineESalignment} of Algorithm~\ref{a107} we go over every index $i\in [n]$. In Line~\ref{lineGetRot} we compute a set of rotation matrices $A_i$ that align the sets $P'=\br{p_1-p_i,\cdots,p_n-p_i} \setminus \br{\vec{0}},Q'=\br{q_1-q_i,\cdots,q_n-q_i} \setminus \br{\vec{0}}$ by a call to Algorithm~\ref{a106}, and in Line~\ref{linecompt} we compute a set of pairs that consist of a rotation matrix $R \in R_i$ and its corresponding translation vector $Rp_i-q_i$.
When $i=k$, we are guaranteed to compute the desired pair $(\hat{R},\hat{t})$ and add it to the set $M$.
Therefore, Theorem~\ref{lemApproxAlignment} holds by combining~\eqref{eqres} and $(\hat{R},\hat{t})\in M$.

The running time of Algorithm~\ref{a107} is $n^{O(d)}$ since we make $O(n^{r})$ calls to Algorithm~\ref{a106} at Line~\ref{lineGetRot}, each call takes $O(d^2)$ time.
\end{proof}
\fi

\subsection{Generalization} \label{sec:generalization}
In this section, we use a definition of a wide family of cost functions, and prove that the output $M$ of Algorithm~\ref{a107} contains alignments that approximate many functions from this family.
Note that each cost function may be optimized by a different candidate alignment in $M$.
See Table~\ref{table:contrib} for examples. In what follows, for $r>0$, an $r$-log-Lipschitz function is a function whose derivative may be large but cannot increase too rapidly (in a rate that depends on $r$).
For a higher dimensional function, we demand the previous constraint over every dimension individually. The formal definition is taken from~\cite{feldman2012data}, and is generalized in Definition~\ref{def:lip} at the appendix.

\begin{definition} [Cost function]\label{def:cost}
Let $P = \br{p_1,\cdots,p_n}$ and $Q = \br{q_1,\cdots,q_n}$ be two sets of $n$ points in $\REAL^d$.
Let $D:\REAL^d\times\REAL^d \to [0,\infty)$ be a function that assigns a non-negative weight for each pair of points in $\REAL^d$, $\lip:[0,\infty)\to [0,\infty)$ be an $r$-log-Lipschitz function and $f:[0,\infty)^n\to [0,\infty)$ be an $s$-log-Lipschitz function.
Let $(R,t)\in \alignments(d)$ be an alignment. We define
\[
\begin{split}
& \cost(P,Q,(R,t))\\
&\quad =f\left( \lip \left(D\left(Rp_1-t,q_1\right)\right),\cdots, \lip\left(D\left(Rp_n-t,q_n\right)\right)\right).
\end{split}
\]
\end{definition}

Observation 5 in~\cite{jubran2020aligning} (restated as Observation~\ref{obs:distToCost} in the appendix) states that in order to approximate any cost function from Definition~\ref{def:cost}, relative to some alignment $(R^*,t^*)$, it is sufficient to approximate the basic distance $\norm{R^*p_i-t^*-q_i}$ for every $i\in [n]$.
Combining the above with Theorem~\ref{lemApproxAlignment}, which provides such an approximation, yields the following claim; see full proof at the appendix.
\begin{corollary} \label{corApproxCost}
Let $P = \br{p_1,\cdots,p_n}$, $Q = \br{q_1,\cdots,q_n}$ be two ordered sets of $n$ points in $\REAL^d$, $z > 0$, and $w = d^{\abs{\frac{1}{z}-\frac{1}{2}}}$.
Let $\cost$, $r$ and $s$ be as defined in Definition~\ref{def:cost} for $D(p,q) = \norm{p-q}_z$.
Let $M \subseteq \alignments(d)$ be the output of a call to \algname$(P,Q)$; See Algorithm~\ref{a107}.
Then, there is $(R',t') \in M$ that satisfies,
\[
\cost(P,Q,(R',t')) \leq w^{rs}\cdot (1+\sqrt{2})^{drs} \cdot \min_{(R,t)}\cost(P,Q,(R,t)),
\]
where the minimum is over every $(R,t) \in \alignments$.
Moreover, $(R',t')$ is computed in $n^{O(d)}$ time.
\end{corollary}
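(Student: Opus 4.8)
The plan is to combine the Euclidean distance approximation of Theorem~\ref{lemApproxAlignment} with the cost-approximation reduction of Observation~\ref{obs:distToCost} (Observation~5 of~\cite{jubran2020aligning}), bridging the gap between the Euclidean norm delivered by Theorem~\ref{lemApproxAlignment} and the $\ell_z$ norm $D(p,q)=\norm{p-q}_z$ sitting inside $\cost$ via standard norm-equivalence inequalities.

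First I would fix $(R^*,t^*)$ to be an optimal alignment, i.e.\ one attaining $\min_{(R,t)}\cost(P,Q,(R,t))$. Applying Theorem~\ref{lemApproxAlignment} to this pair produces an alignment $(R',t')\in M$ with $\norm{R'p_i-t'-q_i}_2 \leq (1+\sqrt{2})^{d}\norm{R^*p_i-t^*-q_i}_2$ for every $i\in[n]$, where the unsubscripted norm of Theorem~\ref{lemApproxAlignment} is the Euclidean one.

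Next I would convert this Euclidean bound into an $\ell_z$ bound on the per-point quantities $D(R'p_i-t',q_i)=\norm{R'p_i-t'-q_i}_z$. The key observation — and the step where care is needed — is that the inequalities relating $\norm{\cdot}_z$ and $\norm{\cdot}_2$ run in complementary directions depending on whether $z\geq 2$ or $z\leq 2$, so that only one of the two required conversions incurs the factor $w=d^{\abs{1/z-1/2}}$ while the other is free: for $z\ge 2$ one has $\norm{x}_z\le\norm{x}_2$ and $\norm{x}_2\le w\norm{x}_z$, whereas for $z\le 2$ one has $\norm{x}_2\le\norm{x}_z$ and $\norm{x}_z\le w\norm{x}_2$. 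In either case, chaining the appropriate two inequalities around the Euclidean bound gives $\norm{R'p_i-t'-q_i}_z \le w\,(1+\sqrt{2})^{d}\,\norm{R^*p_i-t^*-q_i}_z$ for every $i$; the naive route of bounding each side independently would have produced a spurious $w^2$ and overshot the stated constant.

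Finally I would feed this per-point $\ell_z$ approximation, with factor $\gamma:=w(1+\sqrt{2})^{d}$, into Observation~\ref{obs:distToCost}: since $\lip$ is (monotone) $r$-log-Lipschitz, each $\lip(D(R'p_i-t',q_i))\le\gamma^{r}\lip(D(R^*p_i-t^*,q_i))$, and since $f$ is $s$-log-Lipschitz, applying $f$ raises the factor by a further power $s$, giving $\cost(P,Q,(R',t'))\le\gamma^{rs}\cost(P,Q,(R^*,t^*)) = w^{rs}(1+\sqrt{2})^{drs}\min_{(R,t)}\cost(P,Q,(R,t))$. The running time is inherited from Theorem~\ref{lemApproxAlignment}: the set $M$ is produced in $n^{O(d)}$ time, and selecting the minimizer of $\cost$ over $M$ adds only an $O(nd)$ evaluation per candidate, so the total remains $n^{O(d)}$. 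I expect the main obstacle to be precisely this norm-conversion bookkeeping — securing a single factor of $w$ rather than $w^2$ — since the log-Lipschitz chaining that produces the exponent $rs$ is already packaged by Observation~\ref{obs:distToCost}.
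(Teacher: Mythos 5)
Your proposal is correct and follows essentially the same route as the paper's own proof: fix an optimal $(R^*,t^*)$, invoke Theorem~\ref{lemApproxAlignment} to get a per-point Euclidean bound with factor $(1+\sqrt{2})^{d}$, convert it to an $\ell_z$ bound at the cost of a single factor $w$, and finish with Observation~\ref{obs:distToCost} using $c = w(1+\sqrt{2})^{d}$. Your norm-conversion step is in fact spelled out more carefully than in the paper, which only asserts the one-factor-of-$w$ equivalence without distinguishing the cases $z\le 2$ and $z\ge 2$; your case analysis correctly justifies why no $w^2$ appears.
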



\section{Approximation for the Registration Problem} \label{sec:unknownMatching}
Given two unmatched sets $P,Q \subseteq \REAL^d$ of $n$ points each and a cost function, our goal here is to solve the registration problem in~\eqref{eqmain1}, in polynomial time.

\textbf{Witness set for the registration problem. }Section~\ref{sec:approxScheme}
proves the existence of a witness set of size $d$ for every alignment of $P$ and $Q$.
Therefore,  when the matching is given as in Section~\ref{sec:givenMatching}, we exhaustive search over all possible subsets of $P$ of size $d$ to uncover the desired subset.
If the correspondence is not known, we can simply iterate over every possible set of $d$ points from $P$ and also over every possible set of $d$ points from $Q$, and uncover the alignment this set determines.
After finding the approximated alignment, we can transform $P$ using this alignment, and then compute the optimal matching between $Q$ and the transformed $P$ with respect to the given cost function (via nearest neighbor).
Thus, obtaining the desired approximated alignment and matching by decoupling the two problems and solving each of them individually. Algorithm~\ref{AlgMatching} is an implementation of the above scheme. Observe that this algorithm can be trivially parallelized.

\begin{algorithm}
	\caption{ {\algnamematching$(P,Q,\cost)$} }\label{AlgMatching}
	\SetKwInOut{Input}{Input}
	\SetKwInOut{Output}{Output}
	\Input{Sets $P=\br{p_1,\cdots,p_n} $, $Q=\br{q_1,\cdots,q_n}$ in $\REAL^d$ and a cost function.}
	\Output{An alignment and a matching function; see Theorem~\ref{theorem:matching}}
		$M := \emptyset$.
		
		\For {every $i_1, \cdots, i_{d},j_1,\cdots,j_d \in [n]$
		\label{lineESmatching}}
		{		
			$P' := \br{p_{i1},\cdots,p_{id}}$. \label{lineDefP2} and $Q':= \br{q_{j1},\cdots,q_{jd}}$. \label{lineDefQ2}
			
			$M := M \cup \algname(P',Q')$.
		}
		$S := \br{\left(R,t,\NN(P,Q,(R,t))\right) \mid (R,t)\in M}$. \label{lineMappingFunc}
		\tcc{$\NN(P,Q,(R,t))$ is the nearest neighbour matching between $Q$, and $P$ after applying $(R,t)$.}

 $(\tilde{R},\tilde{t},\tilde{\M}) \in \displaystyle\argmin_{(R',t',\M')\in S} \cost\left(P_{[\M']},Q,(R',t')\right)$. \label{lineMinS}

$\Return\:(\tilde{R},\tilde{t},\tilde{\M})$
\end{algorithm}

\begin{theorem} \label{theorem:matching}
Let $P = \br{p_1,\cdots,p_n}$, $Q = \br{q_1,\cdots,q_n}$ be two ordered sets of $n$ points in $\REAL^d$, $z >0$, and $w = d^{\abs{\frac{1}{z}-\frac{1}{2}}}$.
Let $\cost$ and $r$ be as in Definition~\ref{def:cost} for $D = \norm{p-q}_z$ and $f(v) = \norm{v}_1$. Let $(\tilde{R},\tilde{t},\tilde{\M})$ be the output of a call to \algnamematching$(P,Q,\cost)$; See Algorithm~\ref{AlgMatching}. Then,
\begin{equation}
\begin{split}
& \cost\left(P_{[\tilde{\M}]},Q,(\tilde{R},\tilde{t})\right)\\
& \quad\leq w^r (1+\sqrt{2})^{dr} \cdot \min_{(R,t,\M)}\cost\left(P_{[\M]},Q,(R,t)\right),
\end{split}
\end{equation}
where the minimum is over every alignment $(R,t)$ and permutation $\M$.
Moreover, $(\tilde{R},\tilde{t},\tilde{\M})$ is computed in $n^{O(d)}$ time.
\end{theorem}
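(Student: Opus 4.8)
The plan is to reduce Theorem~\ref{theorem:matching} to the already-established alignment guarantee of Corollary~\ref{corApproxCost}, leveraging the decoupling structure built into Algorithm~\ref{AlgMatching}. Let $(R^*,t^*,\M^*)$ denote an optimal solution to the registration problem, so that $\cost\left(P_{[\M^*]},Q,(R^*,t^*)\right)$ equals the minimum on the right-hand side. The key observation is that once we fix the optimal correspondence $\M^*$, the reordered set $P_{[\M^*]}$ together with $Q$ becomes an instance of the \emph{alignment} problem (matching is now the identity), and $(R^*,t^*)$ is a feasible alignment for it. Since $f(v)=\norm{v}_1$ and $D=\norm{p-q}_z$ fit Definition~\ref{def:cost} with $s=1$ (the $\ell_1$ norm is $1$-log-Lipschitz), Corollary~\ref{corApproxCost} guarantees that the output of \algname$(P_{[\M^*]},Q)$ contains some alignment $(R',t')$ with
\[
\cost\left(P_{[\M^*]},Q,(R',t')\right) \leq w^r (1+\sqrt{2})^{dr} \cdot \cost\left(P_{[\M^*]},Q,(R^*,t^*)\right).
\]

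Next I would argue that this same good alignment $(R',t')$ is actually discovered by Algorithm~\ref{AlgMatching}, even though the algorithm never knows $\M^*$. The crucial point is that the witness set underlying $(R',t')$ consists of $d$ indices from $P_{[\M^*]}$ and their $d$ matched counterparts in $Q$; under the correspondence $\M^*$, these are simply $d$ points of $P$ (at some original indices $i_1,\dots,i_d$) paired with $d$ points of $Q$ (at indices $j_1,\dots,j_d$). Because the outer loop at Line~\ref{lineESmatching} ranges over \emph{every} choice of $d$ points from $P$ and $d$ points from $Q$ independently, that particular combination $P'=\br{p_{i_1},\dots,p_{i_d}}$, $Q'=\br{q_{j_1},\dots,q_{j_d}}$ is enumerated, and the call $\algname(P',Q')$ produces exactly the alignment $(R',t')$. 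Hence $(R',t') \in M$.

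Finally I would close the chain via the nearest-neighbour step and the minimization. Let $\tilde{\M}_{R',t'}=\NN(P,Q,(R',t'))$ be the matching that Line~\ref{lineMappingFunc} attaches to $(R',t')$; by definition $\NN$ returns the correspondence minimizing the cost for the \emph{fixed} alignment $(R',t')$, so in particular
\[
\cost\left(P_{[\tilde{\M}_{R',t'}]},Q,(R',t')\right) \leq \cost\left(P_{[\M^*]},Q,(R',t')\right).
\]
Since Line~\ref{lineMinS} selects the triple in $S$ of globally minimal cost, the returned $(\tilde{R},\tilde{t},\tilde{\M})$ satisfies $\cost(P_{[\tilde{\M}]},Q,(\tilde{R},\tilde{t})) \leq \cost(P_{[\tilde{\M}_{R',t'}]},Q,(R',t'))$. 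Concatenating these three inequalities yields the claimed bound $w^r(1+\sqrt{2})^{dr}$. The running time follows because the outer loop contributes $n^{O(d)}$ iterations, each invoking \algname\ (itself $n^{O(d)}$) plus a polynomial nearest-neighbour computation, giving $n^{O(d)}$ overall.

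The step I expect to be the main obstacle is the second one: rigorously justifying that the witness set guaranteed by Corollary~\ref{corApproxCost} for the \emph{relabelled} instance $P_{[\M^*]}$ is in fact among the index-tuples enumerated by the unmatched loop of Algorithm~\ref{AlgMatching}. One must be careful that \algname, as analysed in Theorem~\ref{lemApproxAlignment}, fixes a common translation anchor (the pair $(p_k,q_k)$) and then iterates over \emph{ordered} $(d-1)$-tuples for the rotation; I would need to verify that feeding the raw point subsets $P',Q'$ (without knowledge of their correspondence) still causes \algname\ to internally try the correct pairing of the $d$ points. This is fine because \algname$(P',Q')$ already loops over all anchors and all ordered $(d-1)$-tuples of the remaining points in each argument, so every bijective pairing between $P'$ and $Q'$ — including the one induced by $\M^*$ — is realised; I would spell this correspondence-enumeration argument out as the one genuinely non-routine part of the proof.
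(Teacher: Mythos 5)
Your main chain of reasoning is correct and is essentially the paper's own proof: fix the optimal triple $(R^*,t^*,\M^*)$, apply Corollary~\ref{corApproxCost} (with $s=1$ since $f=\norm{\cdot}_1$) to the matched instance $(P_{[\M^*]},Q)$ to obtain a good alignment $(R',t')$, argue that the $d$ witness pairs underlying $(R',t')$ are enumerated by the outer loop of Algorithm~\ref{AlgMatching} so that $(R',t')\in M$, and then chain the nearest-neighbour optimality of Line~\ref{lineMappingFunc} with the minimization of Line~\ref{lineMinS}; the running-time accounting is also the same.

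However, your closing paragraph misdescribes Algorithm~\ref{a107}, and the justification you say you would write for the ``non-routine'' step is false. \algname\ does \emph{not} loop over ordered tuples ``in each argument'' independently: in Line~\ref{line:algrotfor} a \emph{single} index tuple $(i_1,\ldots,i_{d-1})$ is applied to both point sets, and Line~\ref{lineGetRot} feeds \algnamegetrot\ the pairs $(p'_{i_k},q'_{i_k})$. So internally the pairing is always positional --- the $k$-th point of one argument with the $k$-th point of the other --- and \algname\ never realises any other bijection between its two inputs. The reason the pairing induced by $\M^*$ is nevertheless realised is exactly the one you already gave in your second paragraph: the outer loop of Algorithm~\ref{AlgMatching} (Line~\ref{lineESmatching}) ranges over the tuples $j_1,\ldots,j_d$ independently of $i_1,\ldots,i_d$, so at the iteration with $j_k=\M^*(i_k)$ the \emph{positional} pairing of $P'=\br{p_{i_1},\ldots,p_{i_d}}$ with $Q'=\br{q_{j_1},\ldots,q_{j_d}}$ coincides with the pairing induced by $\M^*$, and \algname$(P',Q')$ then reproduces $(R',t')$ by choosing the right anchor and tuple ordering within $[d]$. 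This is precisely how the paper's proof handles it; if instead you spelled the step out via your closing paragraph's claim, that part of the write-up would be incorrect even though the theorem itself still goes through.
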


Substituting $z=r=2$ in Theorem~\ref{theorem:matching} yields a provable approximated version of ICP. The proof is placed at the appendix. Furthermore, as our experiments in Section~\ref{sec:ER} show, the worst-case theory in Theorem~\ref{theorem:matching} is very pessimistic. In practice, the approximation constant is much smaller, and can be obtained much faster than the $n^{O(d)}$ suggested time.

\section{Run Time Improvement}
In this section, we propose a randomized algorithm that resembles Algorithm~\ref{a106}, which helps boost the running time of Algorithm~\ref{a107}. The full proofs are placed in the appendix.

\textbf{Overview of Algorithms~\ref{alg:probrot} and~\ref{alg:prob}. }
Unlike Algorithm~\ref{a106}, which simply aligns \emph{given} subsets of $P$ and $Q$, Algorithm~\ref{alg:probrot} takes as input two full point clouds, and identifies, by itself, a potential witness set.
Intuitively, points in $P$ with larger norm negatively affect our cost function more than points of smaller norm, when misaligned properly; see Fig.~\ref{fig:normSampling} at the appendix. Algorithm~\ref{alg:probrot} thus samples a pair of corresponding points $(p,q)$ with probability that depends on the norm of $p$ and rotates $P$ as to align the direction vectors of $p$ and $q$. Then, similarly to Algorithm~\ref{a106}, it projects the sets onto the hyperplane orthogonal to $q$ and continues recursively. 

Algorithm~\ref{alg:prob} applies Algorithms~\ref{alg:probrot}, with simple pre and post-processing, for only a small number of iterations.
Theorems~\ref{lemProbRotation} and~\ref{lemProbAlignment} prove that, with constant probability, the outputs of Algorithms~\ref{alg:probrot} and~\ref{alg:prob} approximate the desired cost.

\begin{algorithm}
	\caption{ {\algNameProbRot$(P,Q,r)$} }\label{alg:probrot}
	\SetKwInOut{Input}{Input}
	\SetKwInOut{Output}{Output}
	\Input{A pair of sets $P=\br{p_1,\cdots,p_n}$ and $Q=\br{q_1,\cdots,q_n}$ contained in a $\sdim$-dimensional subspace $\pi$ of $\REAL^d$, $r>0$.}
	\Output{A rotation matrix; see Lemma~\ref{lemProbRotation}}
	
	$w_i := \frac{\norm{p_i}^r}{\sum_{j \in [n]}\norm{p_j}^r}$ for every $i \in [n]$.
	
	Randomly sample an index $j \in [n]$, where $j = i$ with probability $w_i$.
	
	$R :=$ an arbitrary rotation matrix in $\R_\pi$ that satisfies $\frac{Rp_j}{\norm{p_j}}= \frac{q_j}{\norm{q_j}}$. \label{defineArbitraryR}
	
	\If{$\sdim = 2$}{$ \Return\: R$}
	
	$W := $ a matrix in $\REAL^{d\times (d-1)}$ such that $\big[W \mid \frac{q_j}{\norm{q_j}}\big] \in \REAL^{d\times d}$ forms a basis of $\REAL^d$. \label{lineCompW}
	
	$P' := \br{WW^TRp \mid p \in P \setminus \br{p_j}}$ and $Q' := \br{WW^Tq \mid q \in Q \setminus \br{q_j}}$
	
    $S := \algNameProbRot\left(P', Q'\right)$.

	$\Return\: SR$
\end{algorithm}
\begin{lemma} \label{lemProbRotation}
Let $P = \br{p_1,\cdots,p_n}$ and $Q = \br{q_1,\cdots,q_n}$ be two ordered sets of points in $\REAL^d$ and let $z>0$. 
Let $\cost$ be as defined in Definition~\ref{def:cost} for $f=\norm{v}_1$, some $r$-log Lipschitz $\lip$ and $D(p,q) = \norm{p-q}_z$.
Let $R'$ be an output of a call to \algNameProbRot$(P,Q,r)$; see Algorithm~\ref{alg:probrot}. Then, with probability at least $\frac{1}{2^{d-1}}$,
\[
\cost(P,Q,(R',\vec{0})) \leq \sigma \cdot \min_{R \in \rot(d)} \cost(P,Q,(R,\vec{0})),
\]
for a constant $\sigma$ that depends on $d$ and $r$. Furthermore, $R'$ is computed in $O(nd^2)$ time.
\end{lemma}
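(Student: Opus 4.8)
The plan is to reduce the general cost of Definition~\ref{def:cost} to an \emph{aggregate} sum of $r$-th powers of Euclidean distances, prove a one-step probabilistic guarantee for a single recursive call of \algNameProbRot, and then compose this guarantee along the $d-1$ levels of the recursion. Since $f=\norm{v}_1$ and $\lip$ is $r$-log-Lipschitz, Observation~\ref{obs:distToCost} lets me pass from $\cost$ to the vector of distances $\norm{R'p_i-q_i}_z$, and replacing $\ell_z$ by $\ell_2$ costs only the factor $w=d^{\abs{\frac{1}{z}-\frac{1}{2}}}$ per distance. So it suffices to prove that, with probability at least $1/2^{d-1}$, $\sum_{i}\norm{R'p_i-q_i}_2^{\,r}\le \sigma_r\cdot \min_{R\in\rot(d)}\sum_{i}\norm{Rp_i-q_i}_2^{\,r}$ for a constant $\sigma_r$ depending only on $d$ and $r$; the log-Lipschitz constants of $\lip$ and the norm-equivalence factor $w$ then fold into the final $\sigma$.

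The heart is the \textbf{single-step bound}. One recursive call samples $j$ with probability $\norm{p_j}^r/\sum_k\norm{p_k}^r$ and applies the minimal rotation $R$ that aligns the direction of $p_j$ with that of $q_j$, i.e.\ the rotation acting only inside $\linspan\br{p_j,q_j}$. For every $i$ the displacement obeys $\norm{Rp_i-p_i}\le 2\norm{p_i}\sin(\phi_j/2)$, where $\phi_j$ is the angle between $p_j$ and $q_j$, together with the elementary geometric fact $\norm{p_j}\cdot 2\sin(\phi_j/2)\le 2\norm{p_j-q_j}$ (the ratio is bounded by $2$, as one checks by minimizing $1+s^2-2sc$ over $s=\norm{q_j}$ with $c=\cos\phi_j$). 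The norm-proportional sampling is chosen precisely so that the $\norm{p_j}^r$ weight cancels the $\sum_i\norm{p_i}^r$ factor and telescopes:
\[
\mathbb{E}_j\Big[\sum_{i}\norm{Rp_i-p_i}^{r}\Big]\le \sum_{j}\big(2\norm{p_j}\sin(\phi_j/2)\big)^{r}\le 2^{r}\sum_{j}\norm{p_j-q_j}^{r}.
\]
Markov's inequality then yields, with probability at least $1/2$, a constant-factor bound $\sum_i\norm{Rp_i-q_i}^r\le C_r\cdot(\text{reference cost})$. Running the same computation through the optimal $R^*$ (writing every $p_i$ as $R^*p_i$ and using $\norm{R^*p_i}=\norm{p_i}$, so the sampling law is unchanged) charges the step against $\min_R\cost$ rather than against the identity.

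For the \textbf{recursion}, after this step I project orthogonally to $u=q_j/\norm{q_j}$; because the rotation $S\in\R_{\pi'}$ returned by the recursive call fixes $u$, the cost splits as $\norm{SRp_i-q_i}^2=\big((Rp_i-q_i)^{T}u\big)^2+\norm{Sp_i'-q_i'}^2$, a frozen part plus a $(d-1)$-dimensional subproblem on $p_i'=WW^TRp_i$, $q_i'=WW^Tq_i$. Combining an inductive bound on $\sum_i\norm{Sp_i'-q_i'}^r$ with the step-1 bound through $(A^2+B^2)^{r/2}\le \max(1,2^{r/2-1})(A^r+B^r)$ multiplies the per-level constants into $\sigma_r=2^{O(dr)}$. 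Since each level draws a fresh index, conditioning on the configuration (fixed by the higher levels) keeps the per-level Markov success probability at $\ge 1/2$, so $d-1$ consecutive successes occur with probability at least $(1/2)^{d-1}=1/2^{d-1}$, which is the claimed bound.

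I expect the \textbf{main obstacle} to be charging each level against the \emph{global} optimum rather than against the current iterate. In dimension larger than $2$ the minimal alignment of the sampled pair need not compose with $R^*$ into a minimal rotation, so one cannot simply ``rotate coordinates so that $R^*=I$''; instead $R^*$ must be threaded through the align-then-project recursion by decomposing it compatibly with the frozen directions (a Givens-type factorization of $R^*$), and each level's expected damage must be charged to the corresponding factor of $\OPT$. Everything else is routine: the reduction of Definition~\ref{def:cost} via Observation~\ref{obs:distToCost} and the $r$-log-Lipschitz property, and the running time, where per level the weighting and sampling cost $O(nd)$, and both the plane rotation and the \emph{implicit} projection $v\mapsto v-(v^{T}u)u$ cost $O(d)$ per point, hence $O(nd)$; over the $O(d)$ levels this gives the stated $O(nd^2)$ time.
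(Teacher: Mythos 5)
Your overall strategy is the same as the paper's: norm-proportional sampling chosen so that the per-unit-norm displacement bound (Claim~\ref{claimRot}) telescopes against the sampling weights, Markov's inequality for a per-level success probability of $1/2$, and an align--project--recurse composition whose per-level constants and probabilities multiply into $\sigma$ and $1/2^{d-1}$. However, your opening reduction is a genuine gap. Observation~\ref{obs:distToCost} converts a \emph{per-index} guarantee $D(R'p_i-q_i)\le c\cdot D(R^*p_i-q_i)$, holding for every $i$, into a cost guarantee; it does not provide the direction you need, namely from an \emph{aggregate} bound $\sum_i\norm{R'p_i-q_i}^r\le\sigma_r\sum_i\norm{R^*p_i-q_i}^r$ back to a bound on $\cost(P,Q,(R',\vec{0}))=\sum_i\lip(\norm{R'p_i-q_i}_z)$ for a general $r$-log-Lipschitz $\lip$. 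Your randomized argument only ever yields aggregate control (Markov cannot give per-index control; individual pairs may be damaged arbitrarily), and the implication you rely on is simply false: take $\lip(x)=\min\br{x,T}$ (which is $1$-log-Lipschitz and precisely the M-estimator the lemma is meant to cover), reference distances $(M,0,\cdots,0)$ and achieved distances $\left(0,\tfrac{M}{n-1},\cdots,\tfrac{M}{n-1}\right)$ with $\tfrac{M}{n-1}\ge T$; with $r=1$ both sums equal $M$, yet the truncated costs are $T$ versus $(n-1)T$, an unbounded ratio. So as written your proof establishes the lemma only for $\lip(x)=x^r$ (and, via the Pythagorean split in your recursion, only for $z=2$), not for the stated family of costs. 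The fix is what the paper does: never discard $\lip$. Carry it through every step using the weak triangle inequality (Claim~\ref{claim:weakTria}, costing a factor $\rho c^r$ per application) together with the homogeneity/log-Lipschitz interplay $\lip(D(Rp_i,p_i))\le\norm{p_i}^r\lip\left(D\left(\tfrac{Rp_i}{\norm{p_i}},\tfrac{p_i}{\norm{p_i}}\right)\right)$ --- which is exactly where your telescoping cancellation lives --- and replace the Pythagorean identity in the recursion by a weak-triangle-inequality argument (this is what produces the paper's $12\rho^4c^{5r}$ per-level factor in place of your $\max(1,2^{r/2-1})$-type constants).

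Two smaller points. First, the "main obstacle" you raise about threading $R^*$ through the recursion with a Givens-type factorization is a non-issue: the paper's one-step lemma (Lemma~\ref{lemProbOneRotation}) is stated for an \emph{arbitrary} reference rotation $R^*$, and its sampling law is rotation-invariant since $\norm{R^*p_i}=\norm{p_i}$; hence level one is charged directly against the optimum, every later level is charged against the current iterate, and the constants multiply --- no factorization of $R^*$ is needed. Second, your probability composition quietly assumes that the index sampled at level $k+1$ has a distribution unaffected by the \emph{arbitrary} choice of aligning rotation made at level $k$ (that rotation is unique only up to rotations fixing the aligned pair); the paper needs and proves this separately (Claim~\ref{claim:j2Independent}: every admissible choice rotates about the axis $q_{j_1}$, so the projected norms, and hence the next sampling distribution, coincide), and a complete write-up of your argument must include it.
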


\begin{algorithm}
	\caption{ {\algNameApproxAlignment$(P,Q,r)$} }\label{alg:prob}
	\SetKwInOut{Input}{Input}
	\SetKwInOut{Output}{Output}
	\Input{Ordered sets $P=\br{p_1,\cdots,p_n}$ and $Q=\br{q_1,\cdots,q_n}$ in $\REAL^d$ and $r>0$.}
	\Output{A set of alignments; see Theorem~\ref{lemProbAlignment}}
	
	$M := \emptyset$.
	
	\For {$\frac{1}{\log\left(\frac{2^d}{2^d-1}\right)}$ iterations}
	{	
        Sample an index $j \in [n]$ uniformly at random

        $P' := \br{p-p_j \mid p \in P} \setminus \br{\vec{0}}$ and $Q' := \br{q-q_j \mid q \in Q}\setminus \br{\vec{0}}$

        $R' :=$ \algNameProbRot$(P',Q', r)$

        $M := M \cup \br{(R',R' p_j-q_j)}$
    }
	$\Return\: M$
\end{algorithm}
\begin{theorem} \label{lemProbAlignment}
Let $P = \br{p_1,\cdots,p_n}$ and $Q = \br{q_1,\cdots,q_n}$ be two ordered sets of points in $\REAL^d$ and let $z>0$.
Let $\cost$ be as defined in Definition~\ref{def:cost} for $f=\norm{v}_1$, some $r$-log Lipschitz $\lip$ and $D(p,q) = \norm{p-q}_z$.
Let $M$ be an output of a call to \algNameApproxAlignment$(P,Q,r)$; see Algorithm~\ref{alg:prob}. Then, with constant probability greater than $1/2$, there is an alignment $(R',t') \in M$ that satisfies
\[
\cost(P,Q,(R',t')) \leq \sigma\cdot \min_{(R,t) \in \alignments} \cost(P,Q,(R,t)),
\]
for a constant $\sigma$ that depends on $d$ and $r$. Furthermore, $M$ is computed in $O\left(\frac{nd^2}{\log\left(\frac{2^d}{2^d-1}\right)}\right)$ time.
\end{theorem}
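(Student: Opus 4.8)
The plan is to decouple translation from rotation: reduce a single iteration of Algorithm~\ref{alg:prob} to the rotation-only guarantee of Lemma~\ref{lemProbRotation}, pay only a constant factor for fixing the translation via the sampled anchor, and then amplify the resulting constant per-iteration success probability by repetition. Throughout, fix an optimal alignment $(R^*,t^*)$ attaining $\OPT := \min_{(R,t)\in\alignments}\cost(P,Q,(R,t))$ and write $e_i := R^*p_i - t^* - q_i$, so that (using $f=\norm{\cdot}_1$) we have $\OPT = \sum_{i=1}^n \lip(\norm{e_i}_z)$.

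First I would show that a uniformly sampled anchor is a good translation witness with probability at least $1/2$. Since $\sum_i \lip(\norm{e_i}_z) = \OPT$, at most $n/2$ indices $i$ can have $\lip(\norm{e_i}_z) > 2\OPT/n$; hence at least $n/2$ indices $j$ satisfy $\lip(\norm{e_j}_z)\le 2\OPT/n$, and the index sampled in a given iteration is such a $j$ with probability at least $1/2$. For a fixed good $j$, let $P' = \br{p_i-p_j}$ and $Q' = \br{q_i-q_j}$ be the centered sets formed in that iteration. The key identity is that applying $(R',\, R'p_j-q_j)$ to $(P,Q)$ sends $p_j$ exactly to $q_j$, so $\cost(P,Q,(R',R'p_j-q_j)) = \cost(P',Q',(R',\vec{0}))$ (the anchor term being $\lip(0)=0$); it therefore suffices to approximate the best rotation-only alignment of $P'$ onto $Q'$.

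Next I would bound that rotation-only optimum against $\OPT$. Taking $R=R^*$ and using $R^*(p_i-p_j)-(q_i-q_j) = e_i - e_j$ gives $\min_R \cost(P',Q',(R,\vec{0})) \le \sum_i \lip(\norm{e_i-e_j}_z)$. Sub-additivity of $\norm{\cdot}_z$ (with a $z$-dependent constant absorbed into $\sigma$ when $z<1$) and the $r$-log-Lipschitz bound $\lip(a+b)\le 2^r(\lip(a)+\lip(b))$ turn this into $2^r\big(\OPT + n\,\lip(\norm{e_j}_z)\big)$, which for a good anchor is at most $3\cdot 2^r\,\OPT$. Invoking Lemma~\ref{lemProbRotation} on the fixed pair $(P',Q')$, with probability at least $1/2^{d-1}$ the rotation returned by \algNameProbRot approximates this optimum up to a factor $\sigma_0=\sigma_0(d,r)$. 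Chaining the three steps, a single iteration produces an alignment of cost at most $\sigma := 3\cdot 2^r\sigma_0$ times $\OPT$ with probability at least $\tfrac12\cdot\tfrac{1}{2^{d-1}}=2^{-d}$.

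Finally I would amplify over the independent iterations: the probability that none succeeds is at most $(1-2^{-d})^{1/\log(2^d/(2^d-1))} = e^{-1} < 1/2$, since $\log(2^d/(2^d-1)) = -\log(1-2^{-d})$, so the overall success probability exceeds $1/2$. The running time follows because each iteration builds $P',Q'$ in $O(n)$ time and makes one $O(nd^2)$ call to \algNameProbRot, for a total of $O(nd^2/\log(2^d/(2^d-1)))$. The step I expect to be the main obstacle is exactly this translation decoupling: unlike the deterministic witness of Theorem~\ref{lemApproxAlignment}, which relies on the single closest pair, here I must show that a constant fraction of indices are adequate anchors, which is precisely what lets a uniform anchor succeed with probability $1/2$ and keeps the per-iteration success at $2^{-d}$ rather than the useless $1/(n2^{d-1})$.
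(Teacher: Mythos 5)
Your proposal is correct and follows essentially the same route as the paper's proof: the Markov-type argument that at least $n/2$ indices are good translation anchors (the paper's Claim on half the pairs), the reduction of each iteration to the rotation-only guarantee of Lemma~\ref{lemProbRotation} on the centered sets $P',Q'$, and the amplification over $\frac{1}{\log(2^d/(2^d-1))}$ independent iterations giving failure probability $e^{-1}<1/2$. The only (cosmetic) difference is that you bound $\min_R \cost(P',Q',(R,\vec{0}))$ by plugging in $R^*$ and observing that the translation cancels in the differences $e_i-e_j$, whereas the paper pre-applies $(R^*,t^*)$ to $P$ as a WLOG and must then add a ``bridging the gap'' paragraph to argue the algorithm's behavior is invariant to that unknown transformation; your formulation sidesteps that extra step.
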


\section{Experimental Results} \label{sec:ER}
In this section, we apply our approximation algorithms to solve either the alignment or the registration problems with various different cost functions. 

\textbf{Datasets. }We used the Bunny, Armadillo, and Asian Dragon models from the Stanford 3D scanning repository~\cite{curless1996volumetric, krishnamurthy1996fitting, turk1994zippered}. Those models were scaled to fit in the cube $[-0.5,0.5]^3$ due to the constraints of some competing methods (e.g., $\GOICP$). We also used a synthetic dataset comprising uniformly sampled $d$-dimensional points in $[-0.5,0.5]^d$.

\textbf{Generating $P$ and $Q$. }In all experiments, given some data model (real or synthetic), we uniformly sample $n$ points named $Q$. An alignment $(R,t)$ is generated, where $t$ is uniformly sampled such that $\norm{t} \leq 0.1$ and $R$ rotates the data around each axis by an angle uniformly sampled from $[-\pi,\pi]$. $P$ is then obtained by applying $(R,t)$ to $Q$ and adding Gaussian noise with zero mean and $\sigma^2$ variance.
In Section~\ref{ER:registration} we also apply a random shuffle to $P$.

\textbf{Evaluation. }The cost evaluation of an algorithm is done by plugging its output alignment $(R,t)$, along with its input $P$ and $Q$ into the cost function at hand (e.g.,~\eqref{eqPCAlignment} or~\eqref{eqmain3}). If not given, the optimal correspondence is trivially computed, after applying $(R,t)$ to $P$, via nearest neighbor.

\textbf{Setup. } All experiments were executed on the AWS platform, on a c$5$a.$8$xlarge machine with $32$ CPUs and $64$ Gigabytes of RAM.

\newcommand{\kabsch}{\texttt{Optimal-SSD}}
\newcommand{\alignAlg}{\texttt{Approx-Align}}
\newcommand{\alignAlgProb}{\texttt{Prob-Align}}
\newcommand{\alignRansac}{\texttt{RANSAC}}
\newcommand{\alignAdaptiveRansac}{\texttt{Adaptive-RANSAC}}

\subsection{Alignment Experiments} \label{ER:alignment}
In this test we assume the correspondences function is given. The goal is to demonstrate our algorithm's very fast recovery of a very good solution; see results in Fig.~\ref{fig:givenMatchingApprox}. We therefore chose the sum of squared distances cost function, for which we compute an optimal ground truth solution via SVD as explained in Section~\ref{sec:relatedWork}.
We tested $\alignAlg(P,Q, \cost, \iters)$- a combination of Algorithms~\ref{a107} and~\ref{alg:prob} as follows. Rather than iterating over the whole $n^{O(d)}$ possible subsets in Algorithm~\ref{a107}, we uniformly sample and test only $\iters \ll n^{O(d)}$ subsets. We then output the alignment that minimizes the given $\cost$.

\textbf{Discussion. }This experiment bridges the gap between theory and practice. The pessimistic theory in Corollary~\ref{corApproxCost} and Theorem~\ref{theorem:matching} suggested running $n^{O(d)}$ (millions) of iterations to obtain an approximation. Fig.~\ref{fig:givenMatchingApprox} shows that it suffices, in practice, to sample roughly $40$ subsets (in $d=3$) until an error of at most x$1.5$ the minimal cost is obtained.

\begin{figure}
  \centering
  \includegraphics[width=0.49\textwidth]{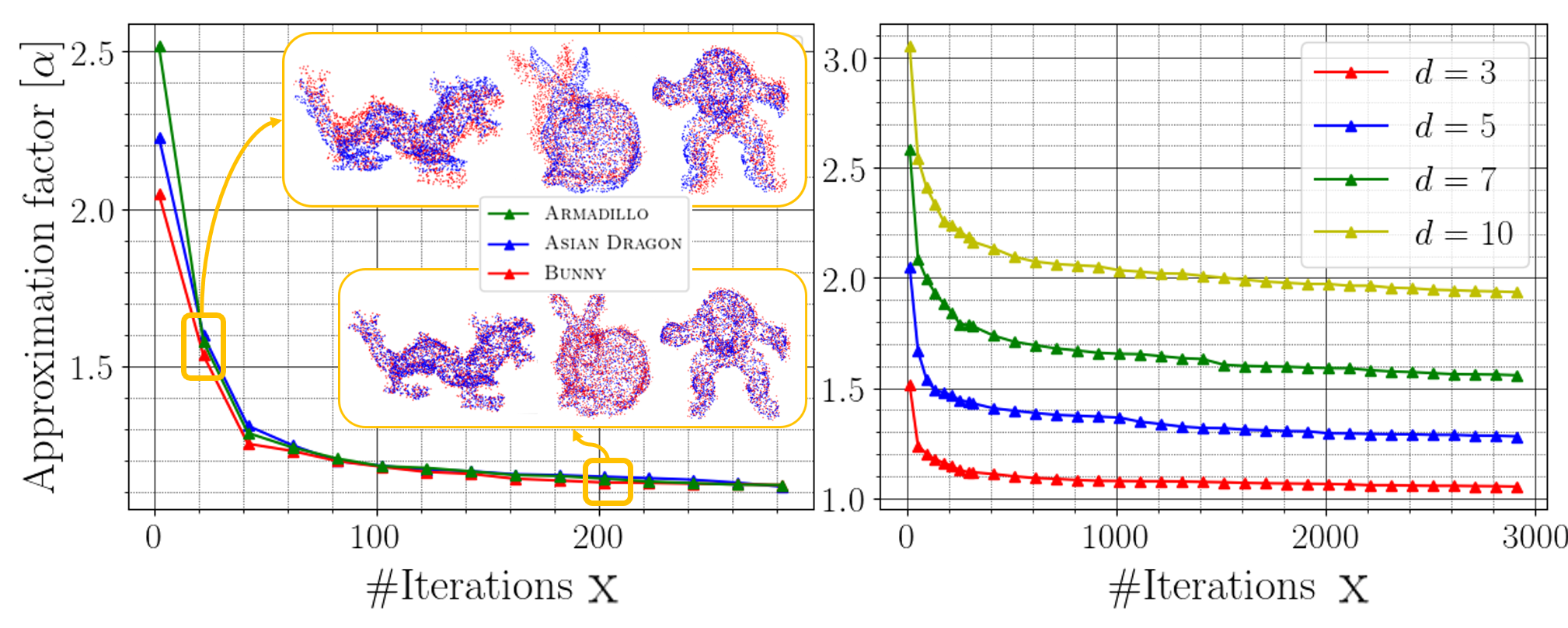}
  \caption{The approximation quality $\alpha$ as a function of the number of subsets $\iters$ tested via $\alignAlg(P,Q,\cost,\iters)$. $\alpha$ equals the final cost obtained via $\alignAlg$ divided by the ground truth optimal solution. (Left): The models Bunny, Armadillo, and Asian Dragon are used. The obtained alignment is also visualized. (Right): Synthetic data is used. In both figures $n=2500$ and $\sigma^2=0.01$.} \label{fig:givenMatchingApprox}
\end{figure}

\subsection{Registration Experiments} \label{ER:registration}

In this section we compared the following algorithms:
\textbf{(i)} $\ICP(P,Q)$ - An implementation of the common ICP algorithm~\cite{chen1992object}.
\textbf{(ii)} $\matchAlg(P,Q, \cost, \iters)$ - Provable ICP; An implementation in Python of Algorithm~\ref{AlgMatching}, which iterates over only $\iters$ sets of indices rather than the whole $n^{O(d)}$ iterations required.
\textbf{(iii)} $\matchAlgICP(P,Q, \cost, \iters)$ - Applying the output alignment of $\matchAlg$ to the set $P$, then refining the alignment by applying $\ICP$, only once, on $Q$ and the transformed $P$. Since Algorithms $\matchAlg$ and $\matchAlgICP$ are ``embarrassingly parallel'', they were parallelized over $32$ CPUs.
\textbf{(iv)} $\CPD(P,Q)$ - The popular Coherent Point Drift algorithm~\cite{myronenko2010point}.
\textbf{(v)} $\GOICP(P,Q)$ - The common ICP variant~\cite{yang2015go}.

We tested the accuracy of the above algorithms with multiple different cost functions. All tests in this section were conducted $10$ times and averaged. The variance is also presented in the graphs.
Fig.~\ref{fig:registrationMain} and Fig.~\ref{fig:registrationOutliers} present the results for noisy input data, and data containing outliers respectively. Visual comparison of our algorithms' is shown in Fig,~\ref{fig:ArmadilloVis}.

\begin{figure}
  \centering
  \includegraphics[width=0.49\textwidth]{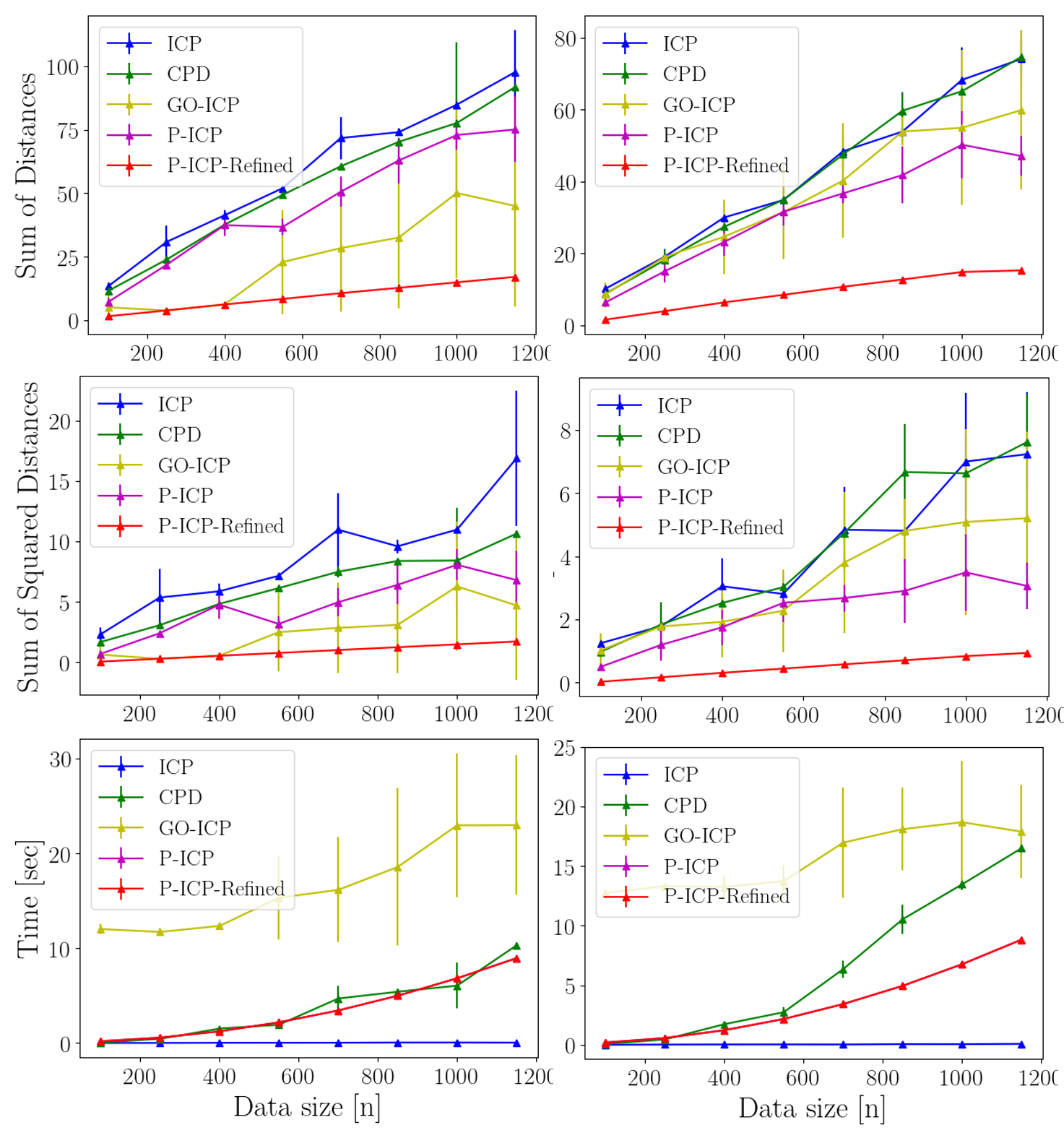}
  \caption{Evaluating the final cost of various algorithms, with multiple cost functions. (Left column): The Armadillo model. (Right column): The Asian Dragon model. The last row presents the computation time it took each algorithm to output its final alignment. $\sigma^2 = 0.01$ was used in all tests. } \label{fig:registrationMain}
\end{figure}

\begin{figure}
  \centering
  \includegraphics[width=0.49\textwidth]{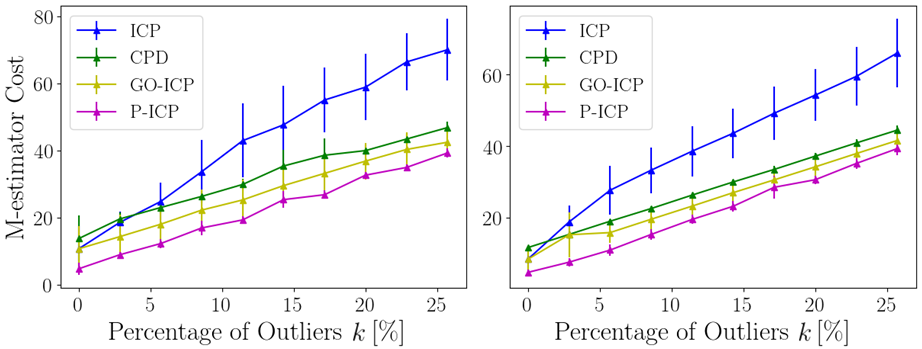}
  \caption{Robustness to outliers. We used the SSD with M-estimator $\lip\left(D(p,q)\right) = \min\{\norm{p-q}^2, 0.2\}$. (Left): Armadillo model. (Right): Asian Dragon model. $n=800$ for both. Noise with variance $\sigma^2 = 1$ was added to $k$ percentage of the points in $P$, which are considered as outliers.} \label{fig:registrationOutliers}
\end{figure}

\textbf{Discussion. }Fig.~\ref{fig:registrationMain} demonstrates the accuracy of our algorithms, which yield an error smaller by up to x$10$ time than other methods, while also being among the fastest. By using more CPUs, our computational time could have been reduced even further. This is in contrast to the competing methods, which are difficult to run in parallel. Fig.~\ref{fig:registrationOutliers} and~\ref{fig:ArmadilloVis} demonstrate our robustness to outliers in practice, due to our provable approximation to M-estimators cost functions.

\section{Conclusions and Future Work}
We present the first provable non-trivial approximation algorithms for the alignment and registration problems and their hard variants. The algorithms rely on our proof that a witness set, which determines an approximated alignment, exists for both problems. Experiments show that our algorithms are efficient in practice, produce smaller errors, and are more stable than competing methods. Future work includes: (i) handling sets of other geometric shapes e.g., sets of lines, (ii) generalizing our results to the non-rigid registration problem, (iii) applying a deep learning based approach for fast recovery of a witness set, and (iv) applying compression techniques such as coresets to reduce the running time of our algorithms.

{\small
\bibliographystyle{ieee_fullname}
\bibliography{references_pc}
}

\clearpage
\newpage

\appendix
\section{Approximation for the Alignment Problem}

In what follows, for $p\in \REAL^d$ we denote by $\proj(p,X)$ its projection on a set $X\subseteq\REAL^d$, that is, $\proj(p,X) \in \argmin_{x \in X} \dist(p,x)$. For a set of vectors $P \subseteq \REAL^d$, we denote by $\linspan(P)$ the linear span of the set $P$. For a single vector $p \in \REAL^d$ we abuse notation and denote $\linspan(\br{p})$ by $\linspan(p)$ for short.

\paragraph{Overview of Claim~\ref{claimRot}. }Let $R \in \R_X$ be an arbitrary rotation matrix that rotates a 2-dimensional plane $X$ in $\REAL^d$, for example, a rotation matrix that affects only the first two coordinates of the points it multiplies.
In this example, $X$ is the $xy$-plane.
In what follows we formally prove that displacement of a unit vector $\hat{p}\in X$, after multiplication with $R$ (i.e., $\norm{R\hat{p}-\hat{p}}$), must be greater than or equal to the displacement of a unit vector $\hat{q} \not\in X$ after multiplication with the same $R$ (i.e., $\norm{R\hat{q}-\hat{q}}$).

\begin{claim} \label{claimRot}
Let $d\geq 2$ be an integer. Let $X$ be a $2$-dimensional subspace (plane) of $\REAL^d$, let $p\in X \setminus \br{\vec{0}} $, and let $R \in \R_{X}$ be a rotation matrix that rotates every $x\in X$ by at most $\theta \in [-\pi/2,\pi/2]$ radians around the origin. Then for every $q\in \REAL^d \setminus \br{\vec{0}}$
\[
\frac{\norm{Rq-q}}{\norm{q}} \leq \frac{\norm{Rp-p}}{\norm{p}}.
\]
\end{claim}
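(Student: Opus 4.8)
The plan is to reduce the whole statement to the two-dimensional plane $X$ by splitting $q$ into its in-plane and orthogonal components, exploiting that a member of $\R_X$ acts as a genuine planar rotation on $X$ and as the identity on the orthogonal complement $X^\perp$. Concretely, by Definition~\ref{defRq} with $\sdim=2$ we may write $R = V_X \left(\begin{smallmatrix} R_2 & \mathbf 0\\ \mathbf 0 & I_{d-2}\end{smallmatrix}\right) V_X^T$ with $R_2\in\rot(2)$, where the first two columns of $V_X$ span $X$ and the remaining $d-2$ span $X^\perp$. Since $R_2$ is a single planar rotation, $R$ rotates \emph{every} vector of $X$ by one common angle $\phi$, and the hypothesis $\theta\in[-\pi/2,\pi/2]$ merely records the bound $|\phi|\le\pi/2$; notably this bound is not actually used in the inequality below, since both sides are computed with the same $R$ (hence the same $\phi$).

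First I would write $q=q_X+q_\perp$ with $q_X\in X$ and $q_\perp\in X^\perp$. Because $R$ fixes $X^\perp$ pointwise and maps $X$ into itself, $Rq-q=Rq_X-q_X$, so the displacement of $q$ depends only on its in-plane part. A one-line chord-length computation then gives $\norm{Ru-u}=2\abs{\sin(\phi/2)}\cdot\norm{u}$ for every $u\in X$, since $Ru$ and $u$ have equal length $\norm{u}$ and are separated by the angle $\phi$ (so $\norm{Ru-u}^2=2\norm{u}^2(1-\cos\phi)$). Applying this with $u=q_X$ and with $u=p$, dividing by $\norm{q}$ and $\norm{p}$ respectively, and using $\norm{q_X}\le\norm{q}$ (orthogonal projection onto $X$ cannot increase the norm), I would conclude
\[
\frac{\norm{Rq-q}}{\norm{q}} = 2\abs{\sin(\phi/2)}\cdot\frac{\norm{q_X}}{\norm{q}} \le 2\abs{\sin(\phi/2)} = \frac{\norm{Rp-p}}{\norm{p}}.
\]

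The computation is entirely elementary, so there is no substantive obstacle; the only care needed is bookkeeping. The main point to justify cleanly is the structural splitting $Rq-q=Rq_X-q_X$, i.e.\ that every matrix in $\R_X$ acts as the identity on $X^\perp$ while restricting to a single-angle rotation on $X$ — both of which follow immediately from Definition~\ref{defRq}. The degenerate case $q_X=\vec{0}$ is handled trivially, since then the left-hand side equals $0$ and the inequality is immediate.
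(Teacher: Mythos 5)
Your proposal is correct and follows essentially the same route as the paper: both reduce to the in-plane component of $q$ (the paper does this in coordinates after rotating $X$ onto the first two axes, you do it via the orthogonal decomposition $q = q_X + q_\perp$), both observe that the relative displacement $\norm{Ru-u}/\norm{u}$ is a constant $2\abs{\sin(\phi/2)}$ for all nonzero $u \in X$ (the paper phrases this as $\frac{q'^T(R'q')}{\norm{q'}^2} = \frac{p'^T(R'p')}{\norm{p'}^2}$), and both finish with $\norm{q_X} \le \norm{q}$. Your side remark that the bound $\abs{\theta} \le \pi/2$ is never actually needed also matches the paper's proof, which states the bound but does not use it.
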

\begin{proof}
Without loss of generality, assume that $X$ is spanned by the standard vectors $e_1,e_2 \in \REAL^d$. Otherwise rotate the coordinates system.
Since $R\in \R_X$ and $X$ is spanned by the first two standard basis vectors, $R$ can be expressed as
\[
R = \left( \begin{array}{ccc}
		R' & \boldsymbol{0} \\
		\boldsymbol{0} & I_{d-2} \end{array} \right),
\]
where $R' = \left( \begin{array}{ccc}
		\cos{\alpha} & \sin{\alpha} \\
	-\sin{\alpha} & \cos{\alpha}\end{array} \right)$
is a two dimensional rotation matrix, for some $\alpha \in [-\pi/2,\pi/2]$;
see Definition~\ref{defRq}.

Let $p' \in \REAL^2$ denote the first two entries of $p$, and $q' \in \REAL^2$ denote the first two entries of $q$. Notice that
\begin{equation} \label{eqnormpnormp2}
\norm{p} = \norm{p'}
\end{equation}
since $p \in X$ and $X$ is spanned by $e_1$ and $e_2$.
By the definition of the matrix $R$, we have that
\begin{equation} \label{eq2Drot}
\norm{Rp - p} = \norm{R'p' -p'} \text{ and } \norm{Rq - q} = \norm{R' q'-q'}.
\end{equation}

Therefore,
\begin{align}
\frac{\norm{Rq-q}^2}{\norm{q}^2}
& = \frac{\norm{R'q'-q'}^2}{\norm{q}^2} \label{eqp11}
= \frac{2\norm{q'}^2 - 2{q'}^T(R'q')}{\norm{q}^2}\\
& \leq \frac{2\norm{q'}^2 - 2{q'}^T(R'q')}{\norm{q'}^2}
= \frac{2\norm{p'}^2 - 2{p'}^T(R'p')}{\norm{p'}^2} \label{eqp12}\\
& = \frac{\norm{R'p'-p'}^2}{\norm{p'}^2}
= \frac{\norm{R'p'-p'}^2}{\norm{p}^2}\label{eqp13}\\
& = \frac{\norm{Rp-p}^2}{\norm{p}^2},\label{eqp14}
\end{align}
where the first derivation in~\eqref{eqp11} is by~\eqref{eq2Drot}, the first derivation in~\eqref{eqp12} holds since $\norm{q'} \leq \norm{q}$, the second derivation in~\eqref{eqp12} holds since $\frac{q'^T(R'q')}{\norm{q'}^2} = \frac{p'^T(R'p')}{\norm{p'}^2}$, the first inequality in~\eqref{eqp13} holds is by~\eqref{eqnormpnormp2}, and~\eqref{eqp14} is by~\eqref{eq2Drot}.

Claim~\ref{claimRot} now holds by taking the squared root of~\eqref{eqp13}.
\end{proof}

\paragraph{Overview of Lemma~\ref{lemRotApprox}.}
For two vectors $p,q \in \REAL^d$, we define the \emph{angle between $p$ and $q$} as the smallest angle between them when considering the $2$-dimensional subspace that they span.

Now, consider two ordered sets of points $P$ and $Q$ in $\REAL^d$, and let $p\in P$ and $q\in Q$ be a pair of corresponding points, whose angle between them is the smallest among all $n$ corresponding pairs. Let $R \in \rot(d)$ be a rotation matrix that aligns the direction of $p$ with the direction of $q$, i.e., $Rp \in \linspan{(q)}$. Then the following lemma proves that after rotating $P$ by $R$, the distance from every point in $P$ to its corresponding point in $Q$ will at most increase by a multiplicative factor of $(1+\sqrt{2})$.
\begin{lemma} \label{lemRotApprox}
Put $r \in [d]$. Let $\pi$ be an $r$-dimensional subspace of $\REAL^d$, $P = \br{p_1,\cdots,p_n} \subset \pi\setminus\br{\vec{0}}$ and $Q = \br{q_1,\cdots,q_n} \subset \pi\setminus\br{\vec{0}}$ be two ordered sets of points. Put $\R^*in \rot(d)$.
Then there is an index $j\in [n]$ and a rotation matrix $R' \in \R_\pi$ that satisfy both of the following properties:
\renewcommand{\labelenumi}{(\roman{enumi})}
\begin{enumerate}
\item For every $i\in [n]$,
\begin{equation} \label{approxRotationToProve_i}
\norm{R'p_i-q_i} \leq (1+\sqrt{2})\cdot \norm{R^*p_i-q_i}
\end{equation} and
\item $R'p_j \in \linspan(q_j)$.
\end{enumerate}
\end{lemma}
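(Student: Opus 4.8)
The plan is to single out the corresponding pair whose directions are closest, rotate $P$ inside $\pi$ so these two directions coincide, and then invoke Claim~\ref{claimRot} to argue that no other pair is stretched by more than a factor of $1+\sqrt 2$. As a first step I would reduce to $R^*=I_d$, using the same coordinate-change device as in the proof of Theorem~\ref{lemApproxAlignment} (absorbing $R^*$ into $P$ and replacing $R'$ by $R'(R^*)^{-1}$, which leaves the class $\R_\pi$ and the constraint $R'p_j\in\linspan(q_j)$ intact). The goal then becomes: exhibit $j$ and $R'\in\R_\pi$ with $R'p_j\in\linspan(q_j)$ and $\norm{R'p_i-q_i}\le(1+\sqrt 2)\norm{p_i-q_i}$ for all $i$.

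For the construction, for each $i$ let $\phi_i\in[0,\pi]$ denote the angle between $p_i$ and $q_i$, and choose $j\in\argmin_i\phi_i$, the pair of directions that are closest. Let $X=\linspan(\br{p_j,q_j})\subseteq\pi$ and let $R'$ be the rotation acting within the plane $X$ that sends $p_j/\norm{p_j}$ to $q_j/\norm{q_j}$ and fixes $X^{\perp}$ pointwise. By construction $R'p_j\in\linspan(q_j)$, which is property (ii); and since $R'$ rotates only inside $X\subseteq\pi$ and acts as a proper rotation on $\pi$, we have $R'\in\R_\pi$.

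The per-pair estimate I would split on the size of $\phi_j$. In the main case $\phi_j\le\pi/2$, the rotation angle of $R'$ is $\phi_j\le\pi/2$, so Claim~\ref{claimRot} (applied to the plane $X$, which contains $p_j$, with $p_i$ in the role of the arbitrary vector) gives $\norm{R'p_i-p_i}/\norm{p_i}\le\norm{R'p_j-p_j}/\norm{p_j}=2\sin(\phi_j/2)$ for every $i$. The crux is then the elementary inequality $\norm{p_i-q_i}\ge\sqrt 2\,\norm{p_i}\sin(\phi_j/2)$, which I would verify in two subcases: if $\phi_i\le\pi/2$ then $\norm{p_i-q_i}\ge\norm{p_i}\sin\phi_i\ge\norm{p_i}\sin\phi_j$ (monotonicity of $\sin$ on $[0,\pi/2]$) and $\sin\phi_j\ge\sqrt 2\sin(\phi_j/2)$ is equivalent to $\cos(\phi_j/2)\ge 1/\sqrt 2$; if $\phi_i>\pi/2$ then $\norm{p_i-q_i}\ge\norm{p_i}$ and it suffices that $1\ge\sqrt 2\sin(\phi_j/2)$ — and each condition reduces to exactly $\phi_j\le\pi/2$. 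Combining, $\norm{R'p_i-p_i}\le 2\norm{p_i}\sin(\phi_j/2)\le\sqrt 2\norm{p_i-q_i}$, so the triangle inequality $\norm{R'p_i-q_i}\le\norm{R'p_i-p_i}+\norm{p_i-q_i}$ yields the factor $1+\sqrt 2$.

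In the complementary case $\phi_j>\pi/2$, minimality of $\phi_j$ forces $\phi_i\ge\phi_j>\pi/2$ for every $i$, hence $p_i^{\top}q_i<0$ and $\norm{p_i-q_i}^2\ge\norm{p_i}^2+\norm{q_i}^2$; the crude bound $\norm{R'p_i-q_i}\le\norm{p_i}+\norm{q_i}\le\sqrt 2\sqrt{\norm{p_i}^2+\norm{q_i}^2}\le\sqrt 2\norm{p_i-q_i}$ already suffices. I expect the main obstacle to be the calibration in the main case: applying Claim~\ref{claimRot} requires the alignment angle to be at most $\pi/2$, and the constant $1+\sqrt 2$ is precisely what makes the two subcase inequalities collapse to the single threshold $\phi_j\le\pi/2$. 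Thus choosing $j$ to minimize the angle (rather than, say, the distance) is essential, and it gets used twice — once through Claim~\ref{claimRot} to bound the displacement of every $p_i$ by that of the witness $p_j$, and once to bound each $\norm{p_i-q_i}$ from below.
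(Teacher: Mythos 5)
Your proof is correct and follows the paper's skeleton: reduce to $R^*=I_d$, take the pair with the smallest angle as the witness $j$, rotate within the plane it spans, dominate every other point's displacement via Claim~\ref{claimRot}, and finish with the triangle inequality. Where you genuinely diverge is in the aligning convention and the calibration step. The paper defines, for \emph{each} pair $i$, a rotation $R_i$ taking $p_i$ into the \emph{line} $\linspan(q_i)$ using the smaller of the two possible rotation angles --- hence always at most $\pi/2$, so Claim~\ref{claimRot}'s angle hypothesis is automatic and no obtuse case ever arises; it then chains $\norm{R_jp_i-p_i}\le\norm{R_ip_i-p_i}$ (Claim~\ref{claimRot} plus minimality of $j$) and bounds $\norm{R_ip_i-p_i}$ against $\norm{p_i-q_i}$ by a case analysis on $\norm{q_i}$ versus $\norm{p_i}$, using the Pythagorean theorem in one case and the law of sines (where $\sqrt{2}$ enters) in the other. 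You instead insist on sending $p_j/\norm{p_j}$ to the \emph{positive} direction $q_j/\norm{q_j}$, which is why you must treat $\phi_j>\pi/2$ separately with the crude norm bound, and you calibrate trigonometrically: $\norm{R'p_i-p_i}\le 2\norm{p_i}\sin(\phi_j/2)$ against the lower bound $\norm{p_i-q_i}\ge\sqrt{2}\,\norm{p_i}\sin(\phi_j/2)$, both subcases closing exactly at the threshold $\phi_j\le\pi/2$. The two calibrations are equally valid and yield the same constant. Your variant even has a side benefit: the witness rotation aligns positive directions, which is literally what Algorithm~\ref{a106} enumerates in Line~\ref{line:compR}, whereas the paper's witness only guarantees membership in $\linspan(q_j)$, possibly via the antipodal direction.

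One small point to patch: when $p_j$ and $q_j$ are antiparallel ($\phi_j=\pi$), the set $X=\linspan(\br{p_j,q_j})$ is one-dimensional and ``the rotation acting within the plane $X$'' is not defined. By minimality this forces $\phi_i=\pi$ for every $i$, i.e., each $p_i$ is a negative multiple of $q_i$; then $R'=I_d$ already satisfies property (ii) (since $p_j\in\linspan(q_j)$) and property (i) trivially, or alternatively you may rotate by $\pi$ in any $2$-plane of $\pi$ containing $p_j$ and your obtuse-case bound goes through verbatim. The analogous degeneracy (and the case of a one-dimensional $\pi$) is absorbed in the paper by the convention $\R_X=\br{I_d}$ for subspaces of dimension below $2$, together with the weaker line-alignment requirement.
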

\begin{proof}
Throughout this proof, for simplicity of notation, we assume that the points of $P$ have already been rotated by the rotation matrix $R^*$, i.e., we assume that $R^*$ is the identity matrix. Therefore,  \eqref{approxRotationToProve_i} reduces to
\[
\norm{R'p_i-q_i} \leq (1+\sqrt{2})\cdot \norm{p_i-q_i}
\]
for every $i\in [n]$.

In what follows we pick a pair of corresponding input points $p_j,q_j$ which have the smallest angle between them around the origin, among all the input pairs (ties broken arbitrarily). We then show that there is a rotation matrix $R_j$ that aligns the direction vectors of $p_j$ and $q_j$ and satisfies the requirements of the lemma.

For every $i\in [n]$, let $X_i = \linspan(\br{p_i,q_i})$ be the plane spanned by $p_i$ and $q_i$, and $R_i \in \R_{X_i}$ be a rotation matrix that satisfies $R_ip_i \in \linspan(q_i)$, i.e., aligns the directions of the vectors $p_i$ and $q_i$ by a rotation in the $2$-dimensional subspace (plane) $X_i$ that these pair of vectors span. If there is more than one such rotation matrix, pick an arbitrary one among the (possible two) which rotate $p_i$ with the smallest angle of rotation.

Put $i\in [n]$.
Let $\pi_i$ be a $2$-dimensional subspace that contains $p_i$ and $R_ip_i$. By the definition of $R_i$, we have that $R_ip_i \in \linspan(q_i) \subseteq \pi_i$.
Let $j \in \displaystyle \argmin_{k\in [n]} \frac{\norm{R_kp_k-p_k}}{\norm{p_k}}$, i.e., $j$ is the index of the corresponding pair that have the smallest angle among all pair of corresponding points.

We now prove that the distance $\norm{R_jp_i-q_i}$ between the corresponding pair $p_i \in P$ and $q_i \in Q$ after applying $R_j$ is larger by at most a multiplicative factor of $(1+\sqrt{2})$ compared to their original distance $\norm{p_i-q_i}$.
We have that
\begin{equation} \label{eqRpiRpj}
\frac{\norm{R_jp_i-p_i}}{\norm{p_i}} \leq \frac{\norm{R_jp_j-p_j}}{\norm{p_j}} \leq \frac{\norm{R_ip_i-p_i}}{\norm{p_i}},
\end{equation}
where the first inequality holds by substituting $p=p_j, q = p_i$ and $R = R_j$ in Claim~\ref{claimRot}, and the second inequality holds by the definition of $j$.
Multiplying~\eqref{eqRpiRpj} by $\norm{p_i}$ yields
\begin{equation} \label{eqMinRot}
\norm{R_jp_i-p_i} \leq \norm{R_ip_i-p_i}.
\end{equation}

We now prove Lemma~\ref{lemRotApprox} (i) for $R' = R_j$, i.e., we prove that
\begin{equation} \label{eqtoprove}
\norm{R_jp_i-q_i} \leq (1+\sqrt{2}) \cdot \norm{p_i-q_i}.
\end{equation}
Lemma~\ref{lemRotApprox} (ii) then follows by the definition of $R_j$.

We prove~\eqref{eqtoprove} by the following case analysis: \textbf{(i)} $R_i$ is the identity matrix, \textbf{(ii)} $\norm{q_i} > \norm{p_i}$ and $R_i$ is not the identity matrix, and \textbf{(iii)} $\norm{q_i} \leq \norm{p_i}$ and $R_i$ is not the identity matrix.

\textbf{Case (i): }$R_i$ is the identity matrix. In this case we have that $\norm{R_ip_i-p_i} = 0$. Therefore, by the definition of $j$, we have that $\norm{R_jp_j-p_j}=0$, i.e., $R_j$ is also the identity matrix. Hence, Case (i) trivially holds as $\norm{R_jp_i-q_i} = \norm{p_i-q_i} \leq (1+\sqrt{2})\cdot\norm{p_i-q_i}$.

\textbf{Case (ii): }$\norm{q_i} > \norm{p_i}$ and $R_i$ is not the identity matrix.
By the definition of $\proj$, we have that $p_i - \proj(p_i,\linspan(q_i))$ is orthogonal to $q_i$. Combining this with the fact that the vector $(\proj(p_i,\linspan(q_i))-q_i)$ has the same direction as $q_i$ yields that $(p_i-\proj(p_i,\linspan(q_i)))$ is orthogonal to $(\proj(p_i,\linspan(q_i))-q_i)$, i.e.,
\begin{equation}\label{eqOrtho}
(p_i-\proj(p_i,\linspan(q_i)))^T (\proj(p_i,\linspan(q_i))-q_i) = 0.
\end{equation}
Similarly, we have that $(p_i-\proj(p_i,\linspan(q_i)))$ is orthogonal to $(\proj(p_i,\linspan(q_i))-R_ip_i)$. Therefore, by applying the Pythagorean theorem in the right triangle $\Delta(p_i,\proj(p_i,\linspan(q_i)),R_ip_i)$ we obtain
\begin{equation} \label{eq:Pyth1}
\begin{split}
\norm{p_i-\proj(p_i,\linspan(q_i))}^2 & + \norm{\proj(p_i,\linspan(q_i))-R_ip_i}^2\\
&= \norm{p_i-R_ip_i}^2.
\end{split}
\end{equation}

Observer that
\begin{equation} \label{eq1314}
\norm{\proj(p_i,\linspan(q_i))} \leq \norm{p_i} = \norm{R_ip_i} < \norm{q_i},
\end{equation}
where the last derivation is by the assumption of Case (ii). Combining~\eqref{eq1314} with the fact that $\proj(p_i,\linspan(q_i)), R_ip_i$ and $q_i$ lie on the straight line $\linspan(q_i)$, we obtain that
\begin{equation}\label{eqdivqi}
\begin{split}
& \norm{\proj(p_i,\linspan(q_i))-q_i}\\
& = \norm{\proj(p_i,\linspan(q_i))-R_ip_i}+ \norm{R_ip_i-q_i}.
\end{split}
\end{equation}

Therefore
\begin{align}
\norm{p_i-q_i}^2 & = \norm{p_i-\proj(p_i,\linspan(q_i))+\proj(p_i,\linspan(q_i))-q_i}^2 \nonumber\\
& = \norm{p_i-\proj(p_i,\linspan(q_i))}^2 \nonumber\\ &\quad + \norm{\proj(p_i,\linspan(q_i))-q_i}^2 \label{secondDiv}\\
& = \norm{p_i-\proj(p_i,\linspan(q_i))}^2 \nonumber\\ &\quad+ \left(\norm{\proj(p_i,\linspan(q_i))-R_ip_i}+ \norm{R_ip_i-q_i}\right)^2 \label{thirdDiv}\\
& = \norm{p_i-\proj(p_i,\linspan(q_i))}^2 \nonumber\\ &\quad + \norm{\proj(p_i,\linspan(q_i))-R_ip_i}^2 + \norm{R_ip_i-q_i}^2 \nonumber\\&\quad +2\norm{\proj(p_i,\linspan(q_i))-R_ip_i}\cdot \norm{R_ip_i-q_i} \nonumber\\
& = \norm{R_ip_i-p_i}^2 + \norm{R_ip_i-q_i}^2 \nonumber\\ &\quad+ 2\norm{\proj(p_i,\linspan(q_i))-R_ip_i}\cdot \norm{R_ip_i-q_i} \label{eq:byPyth1}\\
& \geq \norm{R_ip_i-p_i}^2, \label{eqRipiqi}
\end{align}
where~\eqref{secondDiv} is by~\eqref{eqOrtho}, \eqref{thirdDiv} is by~\eqref{eqdivqi}, and~\eqref{eq:byPyth1} is by~\eqref{eq:Pyth1}.
Combining~\eqref{eqMinRot} and~\eqref{eqRipiqi} yields
\begin{equation} \label{eqRjpiqi}
\norm{R_jp_i-p_i} \leq \norm{p_i-q_i}.
\end{equation}

Hence, \eqref{eqtoprove} holds for Case (ii) as
\[
\norm{R_jp_i-q_i} \leq \norm{R_jp_i-p_i} + \norm{p_i-q_i} \leq 2\cdot \norm{p_i-q_i},
\]
where the first derivation holds by the triangle inequality, and the second derivation by~\eqref{eqRjpiqi}.

\textbf{Case (iii): }$\norm{q_i} \leq \norm{p_i}$ and $R_i$ is not the identity matrix. Since $R_i$ is not the identity matrix, then $p_i$ and $R_ip_i$ are distinct points.
Let $q^* = \proj(p_i,\linspan(R_ip_i))$. Combining the definition of $q^*$ with the fact that $p_i$ and $R_ip_i$ are distinct points, we obtain that all three points $p_i,R_ip_i$ and $q^*$ are distinct.

Consider the triangle $\Delta(p_i,R_ip_i,\vec{0})$.
Let $\alpha\in [0,\pi/2]$ be the angle at vertex $\vec{0}$ of the triangle; see Fig.~\ref{fig:sinLaw} for illustration.
Since $\norm{p_i} = \norm{R_ip_i}$ we obtain that the angles at vertices $p_i$ and $R_ip_i$ is $\beta = \frac{\pi-\alpha}{2}$. Consider triangle $\Delta(p_i,R_ip_i,q^*)$, and observe that the angle at vertex $q^*$ is $\pi/2$. It therefore holds that
\begin{equation} \label{eqSines}
\begin{split}
\norm{R_ip_i-p_i} & = \frac{\norm{p_i-q^*} \sin \pi/2}{\sin{\frac{\pi-\alpha}{2}}} = \frac{\norm{p_i-q^*}}{\sin{\frac{\pi-\alpha}{2}}}\\
&\leq \frac{\norm{p_i-q^*}}{\sin{\frac{\pi-\pi/2}{2}}} = \sqrt{2}\cdot \norm{p_i-q^*} \leq \sqrt{2}\cdot \norm{p_i-q_i},
\end{split}
\end{equation}
where the firsst derivation is by applying the law of sines in triangle $\Delta(p_i,R_ip_i,q^*)$, and the last derivation is by the definition of $q^*$.

Hence, \eqref{eqtoprove} holds for Case (iii) as
\begin{equation} \label{eqPropi}
\begin{split}
\norm{R_jp_i-q_i} & \leq \norm{R_jp_i-p_i} + \norm{p_i-q_i}\\
& \leq \norm{R_ip_i-p_i} + \norm{p_i-q_i}\\
& \leq (1+\sqrt{2})\cdot \norm{p_i-q_i},
\end{split}
\end{equation}
where the fist derivation holds by the triangle inequality, the second derivation is by~\eqref{eqMinRot}, and the last derivation holds by~\ref{eqSines}.

\begin{figure}
  \centering
  \includegraphics[width=0.3\textwidth]{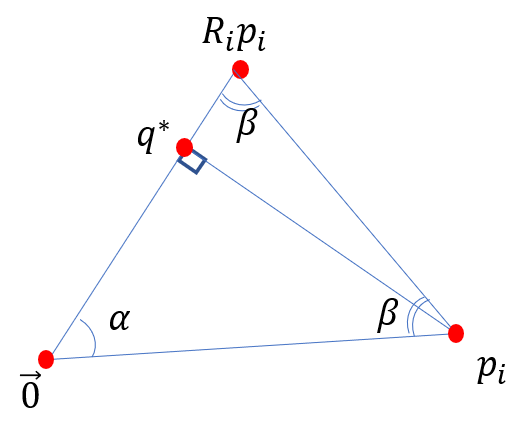}
  \caption{A triangle $\Delta(p_i,R_ip_i,\vec{0})$, a point $q^* = \proj(p_i,\linspan(R_ip_i))$ and a right triangle $\Delta(p_i,q^*,\vec{0})$.}\label{fig:sinLaw}
\end{figure}

\end{proof}

The following lemma states that given two ordered sets $P,Q \subset \REAL^d$, and every rotation matrix $R^*$, we can compute a rotation matrix $R'$ such that $\norm{R'p-q} \leq c\cdot \norm{R^*p-q}$ for every matched pair of points $p\in P$ and $q\in Q$, up to some multiplicative constant factor $c$. Furthermore, if the two sets $P$ and $Q$ are contained in some $r$-dimensional subspace $\pi$ of $\REAL^d$, then $R' \in \R_{\pi}$, i.e., when applying $R'$ on the set of points $P$, the resulting set of rotated points is also contained in $\pi$.
\begin{lemma} \label{lemApproxD}
Let $r \in [d]$ be an integer and $P = \br{p_1,\cdots,p_n} \subset \REAL^d \setminus\br{\vec{0}}$, $Q = \br{q_1,\cdots,q_n} \subset \REAL^d\setminus\br{\vec{0}}$ be two ordered sets of points that are contained in an $r$-dimensional subspace $\pi$ of $\REAL^d$. Put $R^* \in \rot(d)$. Let
\begin{equation} \label{appendAllRotations}
A = \bigcup_{i_1,\cdots,i_{r-1} \in [n]} \algnamegetrot(p_{i_1},\cdots ,p_{i_{r-1}},q_{i_1},\cdots ,q_{i_{r-1}});
\end{equation}
see Algorithm~\ref{a106}. Then there is $\hat{R} \in A$ such that $\hat{R} \in \R_{\pi}$ and
\[
\norm{\hat{R}p_i-q_i} \leq (1+\sqrt{2})^{r-1} \cdot \norm{R^*p_i-q_i}, \quad \text{for every } i \in [n],
\]
\end{lemma}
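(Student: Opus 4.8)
The plan is to prove Lemma~\ref{lemApproxD} by induction on the dimension $r$ of the subspace $\pi$, using Lemma~\ref{lemRotApprox} to peel off one rotational constraint at each step. The base case $r=1$ is trivial: when $\pi$ is one-dimensional, all points lie on a single line through the origin, so $\R_\pi = \br{I_d}$, the exponent $r-1=0$ gives a factor of $1$, and $\hat R = I_d$ vacuously satisfies the bound since the set $A$ in this case is obtained from an empty index set. For $r=2$ the factor is $(1+\sqrt 2)^1$, which matches exactly one application of Lemma~\ref{lemRotApprox}.

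For the inductive step, I would first invoke Lemma~\ref{lemRotApprox} with the given $R^*$: it yields an index $j_1\in[n]$ and a rotation $R_1\in\R_\pi$ such that $R_1 p_{j_1}\in\linspan(q_{j_1})$ and, for every $i\in[n]$,
\[
\norm{R_1 p_i - q_i} \leq (1+\sqrt{2})\cdot\norm{R^* p_i - q_i}.
\]
This $R_1$ is exactly the rotation computed in Line~\ref{line:compR} of \algnamegetrot when the sampled/chosen leading pair is $(p_{j_1},q_{j_1})$; the exhaustive union over all index tuples in~\eqref{appendAllRotations} guarantees that the particular first index $j_1$ selected by the lemma is among those tried. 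Next I would project: let $W$ be the matrix from Line~\ref{linea12} whose columns span the hyperplane orthogonal to $q_{j_1}$, set $p_i'=WW^T R_1 p_i$ and $q_i'=WW^T q_i$, and observe that these projected points lie in an $(r-1)$-dimensional subspace $\pi'\subseteq\pi$ (the intersection of $\pi$ with $q_{j_1}^\perp$). Applying the induction hypothesis to the projected sets gives a rotation $S\in\R_{\pi'}$ with $\norm{S p_i' - q_i'}\leq(1+\sqrt2)^{r-2}\norm{p_i'-q_i'}$, and this $S$ is precisely the recursive output at Line~\ref{lineRecursieRot}. The returned matrix is $\hat R = S R_1$.

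The key step binding the induction together is controlling the distance in the \emph{full} space by the distance in the projected space. Because $S\in\R_{\pi'}$ acts as the identity on $\linspan(q_{j_1})$ and maps $q_{j_1}^\perp\cap\pi$ to itself, $S$ fixes the component of $R_1 p_i$ along $q_{j_1}$ while rotating only its component in the orthogonal hyperplane. Decomposing each vector into its $q_{j_1}$-parallel and $q_{j_1}$-perpendicular parts via the orthogonal projection $WW^T$, I would argue that
\[
\norm{S R_1 p_i - q_i} \leq \norm{WW^T(S R_1 p_i - q_i)} + \text{(parallel part)},
\]
where the parallel components of $R_1 p_i$ and $q_i$ are unchanged by $S$, so the parallel part of $\hat R p_i - q_i$ equals that of $R_1 p_i - q_i$. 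Combined with $\norm{S p_i' - q_i'}\le(1+\sqrt2)^{r-2}\norm{p_i'-q_i'}$ and $\norm{p_i'-q_i'}\le\norm{R_1 p_i - q_i}$ (projection is non-expansive), a clean way to finish is to note that since the parallel part is preserved exactly, the squared distance decomposes orthogonally and only the perpendicular part is scaled, giving $\norm{\hat R p_i - q_i}\le(1+\sqrt2)^{r-2}\norm{R_1 p_i - q_i}$. Chaining with the first inequality yields the claimed $(1+\sqrt2)^{r-1}$ factor.

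The main obstacle is this last decomposition: one must verify carefully that $S$ leaves the $q_{j_1}$-parallel component of every $R_1 p_i$ untouched (so that the preserved parallel part does not accumulate error) while only the perpendicular component — which is exactly what the projection $WW^T$ captures — gets rotated and bounded by the inductive factor. The subtlety is that $q_i$ itself need not lie in $q_{j_1}^\perp$, so its parallel component must be matched against the preserved parallel component of $R_1 p_i$; I would handle this by writing both vectors in the orthogonal basis $[\,W \mid q_{j_1}/\norm{q_{j_1}}\,]$ and tracking the two blocks separately, confirming that $S$ acts block-diagonally (identity on the last coordinate) so that the error introduced in this stage is confined to the $W$-block and is governed entirely by the induction hypothesis applied to $p_i',q_i'$.
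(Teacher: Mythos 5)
Your proposal is correct and follows essentially the same route as the paper's proof: induction on $r$, one application of Lemma~\ref{lemRotApprox} to fix the pair $(p_{j_1},q_{j_1})$ at a cost of $(1+\sqrt{2})$, projection onto the hyperplane orthogonal to $q_{j_1}$, the induction hypothesis on the projected sets, and an orthogonal (Pythagorean) decomposition showing the $q_{j_1}$-parallel component of $R_1p_i - q_i$ is untouched by $S$ while only the perpendicular block is scaled. The paper phrases this last step via projections onto parallel affine hyperplanes $\pi_i$ rather than your block decomposition in the basis $\big[W \mid q_{j_1}/\norm{q_{j_1}}\big]$, but the two are the same argument.
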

\begin{proof}
Put $i\in [n]$ and $R^* \in \rot(d)$. Without loss of generality assume that $R^*$ is the identity matrix and that $\pi$ is spanned by $e_1,\cdots,e_r$, otherwise rotate the set $P$ of points and rotate the coordinates system respectively.
The proof is by induction on $r$.

\textbf{Base case for $r=2$. }By substituting $r=2$ in Lemma~\ref{lemRotApprox} (i) and (ii), we obtain that there is $k\in [n]$ and $R'\in \R_\pi$ that satisfy the following pair of properties.
\renewcommand{\labelenumi}{(\roman{enumi})}
\begin{enumerate}
\item $\norm{R'p_i-q_i} \leq (1+\sqrt{2})\cdot \norm{p_i-q_i}$ for every $i\in [n]$.
\item $R'p_k \in \linspan(q_k)$.
\end{enumerate}

In this base case we iterate over every $i_1\in [n]$ and make a call to $\algnamegetrot(p_{i_1},q_{i_1})$ and then combine the outputs.
In Line~\ref{line:compR} of Algorithm~\ref{a106} we compute a rotation matrix $R\in \R_\pi$ that satisfies that $Rp_{i_1} \in \linspan(q_{i_1})$. Hence, once $i_1=k$, we are guaranteed to compute the required rotation matrix $R'\in \R_\pi$ that satisfies $R'p_k \in \linspan(q_k)$, then the algorithm will terminate since $j=r=2$. Therefore, for the case of where $r=2$ we have
\[
\norm{R'p_i-q_i} \leq (1+\sqrt{2}) \cdot \norm{p_i-q_i} = (1+\sqrt{2}) \cdot \norm{R^*p_i-q_i},
\]
and $R' \in \R_\pi$.

\textbf{General case for $r=r'+1 > 2$. }Let us assume that Lemma~\ref{lemApproxD} holds for $r = r' \geq 2$. We now prove that Lemma~\ref{lemApproxD} holds for $r = r'+1$. Informally, we do so by first invoking Lemma~\ref{lemRotApprox} to obtain a rotation matrix $R'$ that aligns one of the points $p_k \in P$ with its corresponding $q_k \in Q$. By the first property of Lemma~\ref{lemRotApprox}, for every $p\in P$, applying $R'$ to $p$ does not increase the distance $\norm{p-q}$ from $p$ to its corresponding $q\in Q$ by more than a multiplicative constant. Therefore, in Algorithm~\ref{a106}, we first apply $R'$ to the points of $P$. Then, we project the points onto a smaller dimensional space, and compute a rotation matrix $R''$ that aligns the projected points of $P$ to $Q$ in this lower dimensional subspace. From the inductive assumption we know that $R''$ does not increase the pairwise distances by more than a multiplicative constant factor. Combining $R'$ and $R''$ concludes this proof.

Now formally, by plugging $P$ and $Q$ in Lemma~\ref{lemRotApprox}, we obtain that there exists $k\in [n]$ and $R' \in \R_\pi$ that satisfy the following pair of properties:
\begin{equation} \label{eqRpropi}
\norm{R'p_i-q_i} \leq (1+\sqrt{2})\cdot \norm{p_i-q_i}, \quad \text{for every } i \in [n]
\end{equation}
and
\[
R'p_k \in \linspan(q_k).
\]
\renewcommand{\labelenumi}{(\roman{enumi})}

Let $p_i' = R'p_i$.
Assume without loss of generality that $q_k$ spans the $x$-axis, otherwise rotate the coordinates system.
Let $\pi' \subset \pi$ be an $(r-1)$-dimensional subspace that is orthogonal to the $x$-axis and passes through the origin.
Let $\pi_i \subset \pi$ be an $(r-1)$-dimensional affine subspace that is orthogonal to the $x$-axis and passes through $p_i'$ and let $q_i' = \proj(q_i,\pi_i)$; see Fig.~\ref{figProj} for visualization.

For every $p\in \pi_i$ and rotation matrix $R\in \R_{\pi'}$, we have
\begin{equation} \label{eqDistTria}
\begin{split}
\norm{Rp-q_i}^2 & = \norm{Rp-q_i'}^2 + \norm{q_i'-q_i}^2\\
& = \norm{Rp-q_i'}^2 + \dist(q_i,\pi_i)^2\\
& = \norm{\proj(Rp,\pi')-\proj(q_i',\pi')}^2 + \dist(q_i,\pi_i)^2\\
& = \norm{\proj(Rp,\pi')-\proj(q_i,\pi')}^2 + \dist(q_i,\pi_i)^2\\
& = \norm{R \proj(p,\pi')-\proj(q_i,\pi')}^2 + \dist(q_i,\pi_i)^2,
\end{split}
\end{equation}
where the first derivation holds by the Pythagorean theorem, the second derivation holds by the definition of $q_i'$, the third derivation holds by combining that $Rp,q_i' \in \pi_i$ and that $\pi_i$ and $\pi'$ are parallel, the fourth equality holds since $\proj(q_i',\pi') = \proj(q_i,\pi')$ and the last derivation holds by combining that $R \in \R_{\pi'}$ and that $\pi_i$ and $\pi'$ are parallel; see Fig.~\ref{figProj} for a visualization of the derivations in~\eqref{eqDistTria}.

\begin{figure}
  \centering
  \includegraphics[width=0.4\textwidth]{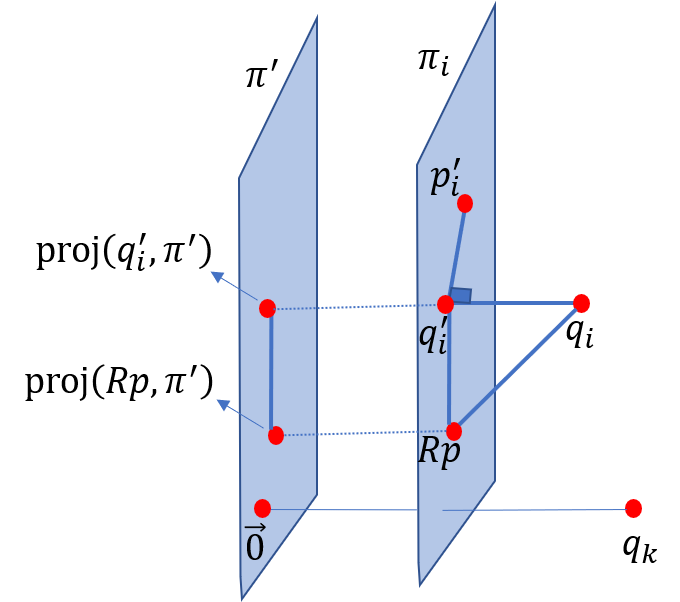}
  \caption{Illustration of the derivations in Eq.~\eqref{eqDistTria}.}\label{figProj}
\end{figure}


Let $a_i = \proj(p_i',\pi')$ and $b_i = \proj(q_i,\pi')$ for every $i\in [n]$.
Observe that $\br{a_1,\cdots,a_n}$ and $\br{b_1,\cdots,b_n}$ are contained in the $(r-1)$-dimensional subspace $\pi'$.
Let
\[
A' = \bigcup_{i_1,\cdots,i_{r-1} \in [n]} \algnamegetrot(a_{i_1},\cdots ,a_{i_{r-2}},b_{i_1},\cdots ,b_{i_{r-2}});
\]
see Algorithm~\ref{a106}. By our induction assumption, Lemma~\ref{lemApproxD} holds for $r=r'$. Therefore, for $R^* = I_d$, there exists $R''\in A'$ such that $R'' \in R_{\pi'}$ and
\begin{equation} \label{eqD}
\begin{split}
& \norm{R''\proj(p_i',\pi')-\proj(q_i,\pi'))}\\ &\leq (1+\sqrt{2})^{r'-1} \cdot \norm{\proj(p_i',\pi')-\proj(q_i,\pi')}.
\end{split}
\end{equation}

We thus have that
\begin{align}
&\norm{R''R'p_i-q_i}^2 = \norm{R''p_i'-q_i}^2 \label{eqmain11}\\
& = \norm{R'' \proj(p_i',\pi')-\proj(q_i,\pi')}^2 + \dist^2(q_i,\pi_i)^2 \label{eqmain12}\\
& \leq (1+\sqrt{2})^{2(r'-1)} \cdot \norm{\proj(p_i',\pi')-\proj(q_i,\pi'))}^2\nonumber\\ &+ \dist^2(q_i,\pi_i)^2 \label{eqmain14}
\\
& \leq (1+\sqrt{2})^{2(r'-1)} \cdot \norm{\proj(p_i',\pi')-\proj(q_i,\pi'))}^2 \nonumber\\ &+ (1+\sqrt{2})^{2(r'-1)} \cdot \dist^2(q_i,\pi_i)^2 \nonumber\\
& = (1+\sqrt{2})^{2(r'-1)} \cdot \norm{\proj(p_i',\pi_i)-\proj(q_i,\pi_i))}^2 \nonumber\\ &+ (1+\sqrt{2})^{2(r'-1)} \cdot \dist^2(q_i,\pi_i)^2 \label{eqmain152}\\
& = (1+\sqrt{2})^{2(r'-1)} \cdot \norm{p_i'-\proj(q_i,\pi_i))}^2 \nonumber\\ &+ (1+\sqrt{2})^{2(r'-1)} \cdot \dist^2(q_i,\pi_i)^2 \label{eqmain153}\\
& = (1+\sqrt{2})^{2(r'-1)} \cdot \norm{p_i'-q_i}^2 \label{eqmain16}\\
& = (1+\sqrt{2})^{2(r'-1)} \cdot \norm{R'p_i-q_i}^2 \label{eqmain17}\\
& \leq (1+\sqrt{2})^{2r'} \cdot \norm{p_i-q_i}^2 \label{eqmain18},
\end{align}
where~\eqref{eqmain11} holds by the definition of $p_i'$, \eqref{eqmain12} holds by substituting $p = p_i'$ and $R = R''$ in~\eqref{eqDistTria}, \eqref{eqmain14} holds by squaring both sides of~\eqref{eqD}, \eqref{eqmain152} holds since $\pi'$ is parallel to $\pi_i$, \eqref{eqmain153} holds since $p_i' \in \pi_i$, \eqref{eqmain16} is by the Pythagorean theorem, \eqref{eqmain17} holds by the definition of $p_i'$, and \eqref{eqmain18} holds by~\eqref{eqRpropi}; see Fig.~\ref{figProj}.

Let $\hat{R} = R''R'$. Hence, it follows that
\begin{equation} \label{eqconclude1}
\norm{R''R'p_i-q_i} \leq (1+\sqrt{2})^{r'} \cdot \norm{p_i-q_i} = (1+\sqrt{2})^{r-1} \cdot \norm{p_i-q_i}.
\end{equation}
By combining~ that $R' \in \R_\pi$ and $R''\in \R_{\pi'} \subseteq \R_{\pi}$, it also follows that
\begin{equation} \label{eqconclude2}
\hat{R} \in \R_{\pi}.
\end{equation}

Observe that whenever $i_1 = k$ in~\eqref{appendAllRotations}, a matrix $R'$ that satisfies $R'p_k \in \linspan(q_k)$ is computed in Line~\ref{line:compR} of Algorithm~\ref{a106}. In Line~\ref{lineRecursieRot} of Algorithm~\ref{a106} we compute the sets $P'$ and $Q'$ and recursively call $\algnamegetrot(P',Q')$. By the definition of the rotation matrix $R''$, one of the calls to $\algnamegetrot(P',Q')$ is guaranteed to compute $R''$. Hence, when $i_1=k$, Algorithm~\ref{a106} will compute a rotation matrix $R''R'$ and return it as output.
Hence, by the definition of $A$ in~\eqref{appendAllRotations} we are guaranteed that
\begin{equation} \label{eqconclude3}
\hat{R} = R''R' \in A.
\end{equation}
The running time of the algorithm is $n^{O(d)}$ since there are $n^{O(d)}$ calls to Algorithm~\ref{a106}, each requiring $O(d)$ time.
Hence, Lemma~\ref{lemApproxD} holds by combining~\eqref{eqconclude1},~\ref{eqconclude2} and~\ref{eqconclude3}.

\end{proof}

\begin{theorem} [Theorem~\ref{lemApproxAlignment}] \label{lemApproxAlignment_proof}
Let $P = \br{p_1,\cdots,p_n}$ and $Q = \br{q_1,\cdots,q_n}$ be two ordered sets each of $n$ points in $\REAL^d$.
Let $M \subseteq \alignments(d)$ be the output of a call to \algname$(P,Q)$; see Algorithm~\ref{a107}.
Then, for every $(R^*,t^*) \in \alignments(d)$, there exists $(\hat{R},\hat{t}) \in M$ that satisfies the following for every $i\in [n]$,
\[
\norm{\hat{R}p_i-\hat{t}-q_i} \leq (1+\sqrt{2})^{d} \cdot \norm{R^*p_i-t^*-q_i}.
\]
Furthermore, $(\hat{R},\hat{t})$ is computed in $n^{O(d)}$ time.
\end{theorem}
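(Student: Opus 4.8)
The plan is to follow the three-stage witness-set scheme of Section~\ref{sec:approxScheme}, delegating the rotation part entirely to Lemma~\ref{lemApproxD} and handling the translation by an elementary triangle-inequality argument. First I would fix an arbitrary target alignment $(R^*,t^*)\in\alignments(d)$ and, by rotating and translating the coordinate system, assume without loss of generality that $R^*=I_d$ and $t^*=\vec{0}$; this normalization is harmless because rigid motions preserve the pairwise Euclidean distances $\norm{R^*p_i-t^*-q_i}$, and the claimed inequality is stated in terms of those distances. Under this normalization the per-pair target costs become $\norm{p_i-q_i}$, so it suffices to exhibit some $(\hat{R},\hat{t})\in M$ with $\norm{\hat{R}p_i-\hat{t}-q_i}\le (1+\sqrt{2})^{d}\norm{p_i-q_i}$ simultaneously for every $i\in[n]$.

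For the translation stage, I would set $t_i=p_i-q_i$ and let $k\in\argmin_{i\in[n]}\norm{t_i}$ be the index of the closest matched pair. Translating $P$ by $t_k$ brings $p_k$ onto $q_k$, and by the triangle inequality together with the minimality of $\norm{t_k}$,
\[
\norm{p_i-t_k-q_i}\le \norm{t_k}+\norm{p_i-q_i}\le 2\,\norm{p_i-q_i}
\]
for every $i$, which is the source of the factor $2$. The crucial observation is that the index $i=k$ is exactly one of the outer iterations of Algorithm~\ref{a107} (Line~\ref{lineESalignment}), where the algorithm forms the centered sets $p_i'=p_i-p_k$ and $q_i'=q_i-q_k$; these lie in $\REAL^d$ (with the zero vectors $p_k',q_k'$ harmlessly dropped), so this is precisely the setting of Lemma~\ref{lemApproxD} with $r=d$ and the normalized target $R^*=I_d$.

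For the rotation stage, I would invoke Lemma~\ref{lemApproxD} on the centered sets: the union $A_k$ of all \algnamegetrot{} calls over the $(d-1)$-subsets of $[n]\setminus\br{k}$ (Line~\ref{lineGetRot}) coincides with the set $A$ of that lemma, so there exists $\hat{R}\in A_k$ with
\[
\norm{\hat{R}p_i'-q_i'}\le (1+\sqrt{2})^{d-1}\,\norm{p_i'-q_i'}
\]
for every $i$. Defining $\hat{t}=\hat{R}p_k-q_k$, which is exactly the translation that Algorithm~\ref{a107} pairs with $\hat{R}$ at iteration $k$ (Line~\ref{linecompt}), a short algebraic cancellation gives $\norm{\hat{R}p_i-\hat{t}-q_i}=\norm{\hat{R}(p_i-p_k)+q_k-q_i}=\norm{\hat{R}p_i'-q_i'}$. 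Chaining the rotation bound with the translation bound and using $2\le 1+\sqrt{2}$ then yields
\[
\norm{\hat{R}p_i-\hat{t}-q_i}\le 2(1+\sqrt{2})^{d-1}\norm{p_i-q_i}\le (1+\sqrt{2})^{d}\norm{p_i-q_i},
\]
with $(\hat{R},\hat{t})\in M$ by construction. The running time is $n^{O(d)}$: the outer loop runs $n$ times, the inner loop enumerates $O(n^{d-1})$ index subsets, and each \algnamegetrot{} call costs $O(d^2)$.

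Because the heavy lifting---the per-step factor $(1+\sqrt{2})$ and its recursive accumulation across the $d-1$ rotational degrees of freedom---is already packaged inside Lemma~\ref{lemApproxD} (which in turn rests on Lemma~\ref{lemRotApprox} and Claim~\ref{claimRot}), the theorem itself carries little difficulty. The only points requiring care are the bookkeeping that iteration $i=k$ of the algorithm reproduces the hypotheses of Lemma~\ref{lemApproxD}, and the verification that the emitted translation $\hat{t}=\hat{R}p_k-q_k$ is precisely the one making the identity $\norm{\hat{R}p_i-\hat{t}-q_i}=\norm{\hat{R}p_i'-q_i'}$ hold. The subtlest structural requirement is that a \emph{single} $\hat{R}$ (and hence a single $\hat{t}$) must approximate every pair at once; I expect this to be the main thing to keep in view, but it is already delivered by the ``for every $i$'' quantifier in the conclusion of Lemma~\ref{lemApproxD}.
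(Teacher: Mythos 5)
Your proposal is correct and follows essentially the same route as the paper's own proof: the same choice of the closest pair $k$ and the factor-$2$ translation bound, the same invocation of Lemma~\ref{lemApproxD} on the centered sets with $\hat{t}=\hat{R}p_k-q_k$, the same algebraic cancellation, and the same $2(1+\sqrt{2})^{d-1}\le(1+\sqrt{2})^d$ chaining and running-time count. No gaps to report.
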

\begin{proof}
Put $(R^*,t^*)\in \alignments(d)$. Without loss of generality assume that $R^*$ is the identity matrix of $\REAL^d$ and that $t^* = \vec{0}$, otherwise rotate and translate the coordinates system.

For every $i\in [n]$ let $t_i = p_i-q_i$ and let $k\in \argmin_{i\in [n]} \norm{t_i}$. Put $i\in [n]$.
Since $\norm{t_k} \leq \norm{t_i} = \norm{p_i-q_i}$, it holds that
\begin{equation} \label{eqTransApprox}
\begin{split}
\norm{p_i-t_k-q_i} & \leq \norm{p_i-t_k-p_i} + \norm{p_i-q_i}\\
& = \norm{t_k} + \norm{p_i-q_i}\\
& \leq  2\cdot \norm{p_i-q_i}.
\end{split}
\end{equation}

Let $p_i' = p_i-p_k$, $q_i' = q_i-q_k$, $P' = \br{p_1',\cdots,p_n'}$ and $Q' = \br{q_1',\cdots,q_n'}$.
Consider the set $A_i$ computed at Line~\ref{lineGetRot}.
By Lemma~\ref{lemApproxD} there is $\hat{R} \in A_i$ such that
\begin{equation} \label{prop1}
\begin{split}
\norm{\hat{R}p_i'-q_i'}&  = \norm{\hat{R}p_i'-q_i'}\\
& \leq (1+\sqrt{2})^{d-1} \cdot \norm{\idmat(d)p_i'-q_i'}\\
& = (1+\sqrt{2})^{d-1} \cdot \norm{p_i'-q_i'}.
\end{split}
\end{equation}

Let $\hat{t} = t_k + \hat{R}p_k -p_k$. We now obtain that
\begin{align}
\norm{\hat{R}p_i-\hat{t}-q_i} & = \norm{\hat{R}p_i-t_k-\hat{R}p_k+p_k-q_i} \label{eq11}\\
& = \norm{\hat{R}(p_i-p_k)-(p_k-q_k)+p_k-q_i} \nonumber\\
& = \norm{\hat{R}(p_i-p_k)+q_k-q_i} \nonumber\\
& = \norm{\hat{R}p_i'-q_i'} \label{eq13}\\
& \leq (1+\sqrt{2})^{d-1} \cdot \norm{p_i'-q_i'} \label{eq14}\\
& = (1+\sqrt{2})^{d-1} \cdot \norm{p_i-p_k-q_i+q_k} \nonumber\\
& = (1+\sqrt{2})^{d-1} \cdot \norm{p_i-p_k + q_k-q_i} \nonumber\\
& = (1+\sqrt{2})^{d-1} \cdot \norm{p_i-t_k-q_i} \nonumber\\
& \leq 2(1+\sqrt{2})^{d-1} \cdot \norm{p_i-q_i} \label{eq16},
\end{align}
where~\eqref{eq11} holds by the definition of $\hat{t}$, \eqref{eq13} holds by the definition of $p_i'$ and $q_i'$, \eqref{eq14} holds by~\eqref{prop1}, and~\eqref{eq16} holds by~\eqref{eqTransApprox}.

Hence, it holds that
\begin{equation} \label{eqres}
\norm{\hat{R}p_i-\hat{t}-q_i} \leq 2(1+\sqrt{2})^{d-1}\cdot \norm{p_i-q_i} \leq (1+\sqrt{2})^{d}\cdot \norm{p_i-q_i}.
\end{equation}

In Line~\ref{lineESalignment} of Algorithm~\ref{a107} we go over every index $i\in [n]$. In Line~\ref{lineGetRot} we compute a set of rotation matrices $A_i$ that align the sets $P'=\br{p_1-p_i,\cdots,p_n-p_i} \setminus \br{\vec{0}},Q'=\br{q_1-q_i,\cdots,q_n-q_i} \setminus \br{\vec{0}}$ by a call to Algorithm~\ref{a106}, and in Line~\ref{linecompt} we compute a set of pairs that consist of a rotation matrix $R \in R_i$ and its corresponding translation vector $Rp_i-q_i$.
When $i=k$, we are guaranteed to compute the desired pair $(\hat{R},\hat{t})$ and add it to the set $M$.
Therefore, Theorem~\ref{lemApproxAlignment} holds by combining~\eqref{eqres} and $(\hat{R},\hat{t})\in M$.

The running time of Algorithm~\ref{a107} is $n^{O(d)}$ since we make $O(n^{r})$ calls to Algorithm~\ref{a106} at Line~\ref{lineGetRot}, each call takes $O(d^2)$ time.
\end{proof}

\section{Generalization}

In what follows is a formal definition of an $r$-log-Lipschitz function, which is a generalization of the definition in~\cite{feldman2012data} from $1$ to $n$ dimensional functions. Intuitively, an $r$-log-Lipschitz function is a function whose derivative may be large but cannot increase too rapidly (in a rate that depends on $r$). For a higher dimensional function, we demand the previous constraint over every dimension individually.

For every pair of vectors $v=(v_1,\cdots,v_n)$ and $u=(u_1,\cdots,u_n)$ in $\REAL^n$ we denote $v\leq u$ if $v_i \leq u_i$ for every $i\in [n]$. Similarly, $f:\REAL^n \to [0,\infty)$ is non-decreasing if $f(v)\leq f(u)$ for every $v\leq u \in \REAL^d$.
\begin{definition}[Log-Lipschitz function] \label{def:lip}
Let $n\geq 1$, $r>0$ and let $h:[0,\infty)^n \to[0,\infty)$ be a non-decreasing function.
Then $h(x)$ is \emph{$r$-log-Lipschitz} if for every $x \in [0,\infty)$ and $c > 0$ we have that $h(c x)\leq c^r h(x)$. For $n\geq 2$, a function $h:[0,\infty)^n \to [0,\infty)$ is called a
The parameter $r$ is called the \emph{log-Lipschitz constant}.
\end{definition}

The following observation states that if we find an alignment $(R',t') \in \alignments(d)$ that approximates the function $D$ for every input element, then it also approximates the more complex function $\cost$ as defined in Definition~\ref{def:cost}.
\begin{observation} [Observation 5 in~\cite{jubran2020aligning}] \label{obs:distToCost}
Let $P = \br{p_1,\cdots,p_n}$ and $Q = \br{q_1,\cdots,q_n}$ and $\cost$ be as defined in Definition~\ref{def:cost}.
Let $(R^*,t^*),(R',t') \in \alignments(d)$ and let $c\geq 1$. If $D\left(R'p_i-t',q_i\right)\leq c\cdot D\left(R^*p_i-t^*,q_i\right)$ for every $i\in [n]$, then
\[
\cost\left(P,Q,(R',t')\right)\leq c^{rs} \cdot \cost\left(P,Q,(R^*,t^*)\right).
\]
\end{observation}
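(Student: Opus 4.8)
The plan is to push the pointwise hypothesis through the two outer functions $\lip$ and $f$ one at a time, using only the monotonicity and the scaling property that together constitute the log-Lipschitz condition (Definition~\ref{def:lip}). First I would fix notation: for each $i \in [n]$ set $a_i = D(R^*p_i - t^*, q_i)$ and $b_i = D(R'p_i - t', q_i)$, so the hypothesis reads $b_i \leq c\cdot a_i$ for every $i$, while the two costs (Definition~\ref{def:cost}) are $\cost(P,Q,(R^*,t^*)) = f(\lip(a_1), \ldots, \lip(a_n))$ and $\cost(P,Q,(R',t')) = f(\lip(b_1), \ldots, \lip(b_n))$.

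The first step handles $\lip$. Since $\lip$ is $r$-log-Lipschitz it is non-decreasing, so $b_i \leq c a_i$ gives $\lip(b_i) \leq \lip(c a_i)$; the scaling property (applicable since $c \geq 1 > 0$) then yields $\lip(c a_i) \leq c^r \lip(a_i)$. Chaining these produces the coordinatewise bound $\lip(b_i) \leq c^r \lip(a_i)$ for every $i \in [n]$, i.e. the vector inequality $(\lip(b_1), \ldots, \lip(b_n)) \leq c^r\,(\lip(a_1), \ldots, \lip(a_n))$ in the partial order $\leq$ on $\REAL^n$.

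The second step repeats the same move one dimension higher. Because $f$ is $s$-log-Lipschitz it is non-decreasing with respect to $\leq$, so applying $f$ to both sides of the vector inequality preserves it: $f(\lip(b_1), \ldots, \lip(b_n)) \leq f\bigl(c^r \lip(a_1), \ldots, c^r \lip(a_n)\bigr)$. The right-hand side is $f$ evaluated at the scalar multiple $c^r$ of the vector $(\lip(a_1), \ldots, \lip(a_n))$, so the scaling property of $f$ bounds it by $(c^r)^s\, f(\lip(a_1), \ldots, \lip(a_n)) = c^{rs}\cdot \cost(P,Q,(R^*,t^*))$. Combining the two displays gives exactly $\cost(P,Q,(R',t')) \leq c^{rs}\cdot \cost(P,Q,(R^*,t^*))$, as claimed.

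There is no serious obstacle here; the entire argument is two applications of "be monotone, then rescale." The only points that need care are to split each log-Lipschitz step cleanly into a monotonicity move (which consumes the pointwise/coordinatewise inequality) followed by a scaling move (which consumes the uniform factor), and to note that the definition's hypothesis $c>0$ is met because $c \geq 1$. One should also confirm that the intended $n$-dimensional log-Lipschitz condition for $f$ is precisely ``non-decreasing in each coordinate together with $f(c v) \leq c^s f(v)$ for every scalar $c>0$ and $v\in[0,\infty)^n$,'' since that is exactly what the single application of $f$ to the scaled vector above requires.
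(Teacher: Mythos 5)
Your proof is correct and is exactly the intended argument: the paper itself offers no proof of this statement, importing it verbatim as Observation 5 of~\cite{jubran2020aligning}, and the proof there is the same two-step chain you give --- monotonicity of $\lip$ followed by its $r$-log-Lipschitz scaling applied coordinatewise, then monotonicity of $f$ followed by its $s$-log-Lipschitz scaling with the uniform factor $c^r$, yielding $(c^r)^s = c^{rs}$. Your closing caveat is also well placed: the paper's Definition~\ref{def:lip} is textually truncated for the case $n\geq 2$, and the uniform-scaling reading $f(cv)\leq c^s f(v)$ for $v\in[0,\infty)^n$ that you identify is precisely what the single application of $f$ requires and is the reading consistent with the claimed $c^{rs}$ factor (e.g., for $f(v)=\norm{v}_1$ with $s=1$).
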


\begin{corollary} [Corollary~\ref{corApproxCost}] \label{corApproxCost_proof}
Let $P = \br{p_1,\cdots,p_n}$, $Q = \br{q_1,\cdots,q_n}$ be two ordered sets of $n$ points in $\REAL^d$, $z > 0$, and $w = d^{\abs{\frac{1}{z}-\frac{1}{2}}}$.
Let $\cost$, $r$ and $s$ be as defined in Definition~\ref{def:cost} for $D(p,q) = \norm{p-q}_z$.
Let $M \subseteq \alignments(d)$ be the output of a call to \algname$(P,Q)$; See Algorithm~\ref{a107}.
Then, there is $(R',t') \in M$ that satisfies,
\[
\cost(P,Q,(R',t')) \leq w^{rs}\cdot (1+\sqrt{2})^{drs} \cdot \min_{(R,t)}\cost(P,Q,(R,t)),
\]
where the minimum is over every $(R,t) \in \alignments$.
Moreover, $(R',t')$ is computed in $n^{O(d)}$ time.
\end{corollary}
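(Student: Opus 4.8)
The plan is to reduce the statement to the two ingredients already established: the Euclidean approximation of Theorem~\ref{lemApproxAlignment} and the cost-lifting Observation~\ref{obs:distToCost}. The only new work is to translate the $\ell_2$ guarantee into an $\ell_z$ guarantee, which is exactly where the factor $w = d^{\abs{\frac{1}{z}-\frac{1}{2}}}$ enters. First I would let $(R^*,t^*)$ denote an alignment attaining $\min_{(R,t)}\cost(P,Q,(R,t))$. Applying Theorem~\ref{lemApproxAlignment} to this $(R^*,t^*)$ yields a pair $(R',t') \in M$ with $\norm{R'p_i-t'-q_i} \leq (1+\sqrt{2})^{d}\cdot\norm{R^*p_i-t^*-q_i}$ for every $i\in[n]$, where $\norm{\cdot}$ is the Euclidean norm.

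The key conversion step turns this $\ell_2$ bound into an $\ell_z$ bound for $D(p,q)=\norm{p-q}_z$. Writing $a_i = R'p_i-t'-q_i$ and $b_i = R^*p_i-t^*-q_i$, I would invoke the standard norm-equivalence inequalities in $\REAL^d$: for every $x\in\REAL^d$ and $0<p\leq q$ one has $\norm{x}_q \leq \norm{x}_p \leq d^{1/p-1/q}\norm{x}_q$. Splitting into cases: if $z\leq 2$ (so $p=z,q=2$) then $\norm{a_i}_z \leq d^{1/z-1/2}\norm{a_i}_2$ while $\norm{b_i}_2 \leq \norm{b_i}_z$; if $z\geq 2$ (so $p=2,q=z$) then $\norm{a_i}_z \leq \norm{a_i}_2$ while $\norm{b_i}_2 \leq d^{1/2-1/z}\norm{b_i}_z$. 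In both regimes the exponent of $d$ collapses to $\abs{\frac{1}{z}-\frac{1}{2}}$, so combining with the Euclidean bound gives the uniform per-point estimate
\[
D(R'p_i-t',q_i) = \norm{a_i}_z \leq w\,(1+\sqrt{2})^{d}\cdot\norm{b_i}_z = w\,(1+\sqrt{2})^{d}\cdot D(R^*p_i-t^*,q_i)\quad\text{for every } i\in[n].
\]

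Finally, I would apply Observation~\ref{obs:distToCost} with the constant $c=w(1+\sqrt{2})^{d}$. This lifts the per-point $D$-approximation to a $\cost$-level approximation, raising $c$ to the power $rs$, and hence
\[
\cost(P,Q,(R',t')) \leq \left(w(1+\sqrt{2})^{d}\right)^{rs}\cost(P,Q,(R^*,t^*)) = w^{rs}(1+\sqrt{2})^{drs}\min_{(R,t)}\cost(P,Q,(R,t)).
\]
The running time $n^{O(d)}$ is inherited directly from Theorem~\ref{lemApproxAlignment}. Since the argument is essentially bookkeeping over two prior results, the only mildly delicate obstacle is verifying that the single constant $w=d^{\abs{\frac{1}{z}-\frac{1}{2}}}$ captures both the $z\leq 2$ and $z\geq 2$ regimes, and orienting each norm-equivalence inequality in the correct direction so that the $\ell_2\!\to\!\ell_z$ loss falls on the $a_i$ side and the $\ell_z\!\to\!\ell_2$ loss falls on the $b_i$ side, leaving the factor entirely on the optimal pair's $\ell_z$ distance.
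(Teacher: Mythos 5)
Your proposal is correct and follows essentially the same route as the paper's own proof: take the optimal alignment $(R^*,t^*)$, invoke Theorem~\ref{lemApproxAlignment} for the per-point $\ell_2$ bound, convert to $\ell_z$ via norm equivalence to pick up the factor $w$, and lift to the cost level with Observation~\ref{obs:distToCost} using $c = w(1+\sqrt{2})^{d}$. Your explicit two-case verification ($z\leq 2$ versus $z\geq 2$) of the norm-equivalence step is a welcome elaboration of what the paper asserts in a single sentence, but it is the same argument.
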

\begin{proof}
Let $(R^*,t^*) \in \argmin_{(R,t)} \cost(P,Q,(R,t))$.
Plugging $P,Q$ and $(R^*,t^*)$ in Theorem~\ref{lemApproxAlignment} yields that in $n^{O(d)}$ time we can compute an alignment $(\hat{R}, \hat{t})$ such that for every $i\in [n]$
\begin{equation} \label{eq:approxl2norm}
\begin{split}
\norm{\hat{R}p_i-\hat{t}-q_i} \leq (1+\sqrt{2})^{d} \cdot \norm{R^*p_i-t^*-q_i}.
\end{split}
\end{equation}

By~\eqref{eq:approxl2norm} and since the $\ell_2$-norm of every vector in $\REAL^d$ is approximated up to a multiplicative factor of $w = d^{\abs{\frac{1}{z}-\frac{1}{2}}}$ by its $\ell_z$-norm, for every $i\in [n]$ we have that
\[
\norm{\hat{R}p_i-\hat{t}-q_i}_z \leq w(1+\sqrt{2})^{d} \cdot \norm{R^*p_i-t^*-q_i}_z.
\]

Corollary~\ref{corApproxCost} now holds by plugging $P, Q, D=D_z, \cost, r,s$, $(R',t')$, $(R^*,r^*)$ and $c=w(1+\sqrt{2})^{d}$ in Observation~\ref{obs:distToCost} as
\[
\begin{split}
& \cost(P,Q,(R',t')) \leq \left(w(1+\sqrt{2})^{d}\right)^{rs} \cost(P,Q,(R^*,t^*))\\ &=  w^{rs}\cdot (1+\sqrt{2})^{drs} \min_{(R,t)\in\alignments(d)} \cost(P,Q,(R^*,t^*)).
\end{split}
\]
\end{proof}

\section{Approximation for the Registration Problem}

\begin{theorem} [Theorem~\ref{theorem:matching}] \label{theorem:matching_proof}
Let $P = \br{p_1,\cdots,p_n}$, $Q = \br{q_1,\cdots,q_n}$ be two ordered sets of $n$ points in $\REAL^d$, $z >0$, and $w = d^{\abs{\frac{1}{z}-\frac{1}{2}}}$.
Let $\cost$ and $r$ be as in Definition~\ref{def:cost} for $D = \norm{p-q}_z$ and $f(v) = \norm{v}_1$. Let $(\tilde{R},\tilde{t},\tilde{\M})$ be the output of a call to \algnamematching$(P,Q,\cost)$; See Algorithm~\ref{AlgMatching}. Then,
\begin{equation}
\begin{split}
& \cost\left(P_{[\tilde{\M}]},Q,(\tilde{R},\tilde{t})\right)\\
& \quad\leq w^r (1+\sqrt{2})^{dr} \cdot \min_{(R,t,\M)}\cost\left(P_{[\M]},Q,(R,t)\right),
\end{split}
\end{equation}
where the minimum is over every alignment $(R,t)$ and permutation $\M$.
Moreover, $(\tilde{R},\tilde{t},\tilde{\M})$ is computed in $n^{O(d)}$ time.
\end{theorem}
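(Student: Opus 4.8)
The plan is to reduce the unknown-correspondence statement to the known-correspondence guarantee of Theorem~\ref{lemApproxAlignment}, by treating the \emph{optimal} matching as the (unknown) given correspondence, and then to exploit the fact that nearest-neighbour matching is optimal for a \emph{fixed} alignment. First I would fix an optimal triple $(R^*,t^*,\M^*)\in\argmin_{(R,t,\M)}\cost(P_{[\M]},Q,(R,t))$ and write $\OPT$ for its cost. For the ordered pair of sets $P_{[\M^*]}$ and $Q$, the permutation $\M^*$ now plays the role of a given correspondence, so I may invoke Theorem~\ref{lemApproxAlignment} on $P_{[\M^*]},Q$ and the alignment $(R^*,t^*)$. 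It yields an alignment $(\hat R,\hat t)$, determined by a witness set of $d$ points of $P_{[\M^*]}$ and their $d$ corresponding points of $Q$, such that for every $i\in[n]$
\[
\norm{\hat R p_{\M^*(i)}-\hat t-q_i}\le (1+\sqrt 2)^{d}\cdot\norm{R^* p_{\M^*(i)}-t^*-q_i}.
\]

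The crucial observation is that $(\hat R,\hat t)$ is a function of its $d$ witness pairs alone, so it is reproduced by \algnamematching. Every alignment that \algname would produce on $P_{[\M^*]},Q$ is, by construction of Algorithms~\ref{a107} and~\ref{a106}, computed from a choice of an origin index together with an ordering of $d-1$ further indices passed to \algnamegetrot, i.e.\ from $d$ points only. The $d$ witness points of $P_{[\M^*]}$ correspond, through $\M^*$, to $d$ indices of the original $P$, and the $d$ witness points of $Q$ to $d$ indices of $Q$. Since the loop in Line~\ref{lineESmatching} of Algorithm~\ref{AlgMatching} ranges over \emph{all} ordered $d$-tuples of indices from $P$ and from $Q$, it eventually picks exactly these two witness tuples, ordered so that each $P$-witness is paired with its $\M^*$-corresponding $Q$-witness. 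For this choice the call \algname$(P',Q')$ runs on precisely the witness points with the correct correspondence, and because \algname internally exhausts all origin choices and all \algnamegetrot orderings, its output contains the very alignment singled out by Theorem~\ref{lemApproxAlignment}. Hence $(\hat R,\hat t)\in M$.

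With $(\hat R,\hat t)\in M$ secured, I would mirror the proof of Corollary~\ref{corApproxCost}: convert the $\ell_2$ bound into an $\ell_z$ bound, losing the factor $w=d^{\abs{\frac1z-\frac12}}$ relating the two norms, and feed the resulting per-point bound into Observation~\ref{obs:distToCost}. Since $f=\norm{\cdot}_1$ is $1$-log-Lipschitz (so $s=1$), this gives $\cost(P_{[\M^*]},Q,(\hat R,\hat t))\le w^{r}(1+\sqrt 2)^{dr}\cdot\OPT$. Now the matching decouples: let $\hat\M=\NN(P,Q,(\hat R,\hat t))$ be the nearest-neighbour matching of Line~\ref{lineMappingFunc} for this fixed alignment. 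Because the cost is a sum ($f=\norm{\cdot}_1$) of monotone terms $\lip(\norm{\hat R p_{\M(i)}-\hat t-q_i}_z)$, assigning each $q_i$ to its nearest transformed point minimises the cost over all correspondences at the fixed alignment, so $\cost(P_{[\hat\M]},Q,(\hat R,\hat t))\le\cost(P_{[\M^*]},Q,(\hat R,\hat t))$. As $(\hat R,\hat t,\hat\M)\in S$ and the algorithm returns the minimiser over $S$ in Line~\ref{lineMinS}, the returned triple obeys
\[
\cost(P_{[\tilde\M]},Q,(\tilde R,\tilde t))\le\cost(P_{[\hat\M]},Q,(\hat R,\hat t))\le\cost(P_{[\M^*]},Q,(\hat R,\hat t))\le w^{r}(1+\sqrt 2)^{dr}\cdot\OPT,
\]
which is the claim. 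The $n^{O(d)}$ running time then follows by counting: the loop ranges over $n^{O(d)}$ index tuples, each triggering an $O(1)$-in-$n$ call to \algname, and each of the resulting alignments needs a polynomial-in-$n$ nearest-neighbour and cost evaluation.

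I expect the main obstacle to be making the membership $(\hat R,\hat t)\in M$ airtight, namely verifying that the alignment singled out by Theorem~\ref{lemApproxAlignment} on the full ordered sets $P_{[\M^*]},Q$ is genuinely recovered by running \algname only on the $d$-point witness subsets selected by the enumeration of Line~\ref{lineESmatching} --- this hinges on the alignment being a function of its $d$ witness pairs and on \algname's internal enumeration over origins and \algnamegetrot orderings. The second delicate point is justifying that nearest-neighbour matching is optimal for a fixed alignment, which rests on the separability of the $\ell_1$/sum cost and the monotonicity of $\lip$.
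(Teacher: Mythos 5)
Your proposal is correct and follows essentially the same route as the paper's proof: fix the optimal triple $(R^*,t^*,\M^*)$, apply the known-correspondence guarantee (you unpack Theorem~\ref{lemApproxAlignment} plus Observation~\ref{obs:distToCost} with $s=1$, which is exactly how the paper's Corollary~\ref{corApproxCost} is proved), argue that the approximating alignment is determined by its $d$ witness pairs and is therefore recovered by the enumeration over $2d$ indices in Line~\ref{lineESmatching}, and finally use optimality of nearest-neighbour matching at a fixed alignment together with the minimization in Line~\ref{lineMinS}. Even the two ``delicate points'' you flag are handled in the paper at the same level of detail, so no adjustment is needed.
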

\begin{proof}
Let $(R^*,t^*,\M^*) \displaystyle \in \argmin_{(R,t,\M)} \cost(P_{[\M]},Q,(R,t))$.
Assuming $\M^*$ is given, by Corollary~\ref{corApproxCost} we can compute a set $M \subseteq \alignments(d)$ that contains an alignment $(R,t)$ which satisfies that
\begin{equation} \label{eq:Rtnomatching}
\cost\left(P_{[\M^*]},Q,(R,t)\right) \leq w^r (1+\sqrt{2})^{dr} \cdot \cost\left(P_{[\M^*]},Q,(R^*,t^*)\right),
\end{equation}
where $M$ is computed by a call to \algname$(P,Q)$; see Algorithm~\ref{a107}.

The alignment $(R,t)$ above is computed at Line~\ref{linecompt}. Observe that $R$ is computed at Line~\ref{lineGetRot} via a call to Algorithm~\ref{a106}, and is determined by some set of $d-1$ points from $P$ and their corresponding $d-1$ points from $Q$. Observe also that the translation $t = Rp_i-q_i$, computed at Line~\ref{linecompt}, is determined by the rotation matrix $R$ and some matching pair $(p_i,q_i)$. Therefore, each alignment in $M$, and in particular $(R,t)$ above, is determined by a set of $d$ corresponding pairs from $P$ and $Q$.



Let $(p_{i_1},q_{\M^*(i_1)}),\cdots,(p_{i_d},q_{\M^*(i_d)})$ be the $d$ matched pairs of points that correspond to the alignment $(R,t)$. Hence, it holds that
\[
(R,t) \in \algname\left(\br{p_{i_1},\cdots,p_{i_d}},\br{q_{\M^*(i_1)},\cdots,q_{\M^*(i_d)}}\right).
\]
By iterating over every $j_1,\cdots,j_d \in [n]$, when $j_1 = \M^*(i_1),\cdots,j_d = \M^*(i_d)$, it holds that
\begin{equation} \label{RTProp}
(R,t) \in \algname\left(\br{p_{i_1},\cdots,p_{i_d}},\br{q_{j_1},\cdots,q_{j_d}}\right).
\end{equation}

In Lines~\ref{lineESmatching} of Algorithm~\ref{AlgMatching} we iterate over every set of $2d$ indices $i_1,\cdots,i_d,j_1,\cdots,j_d\in [n]$, define the subsets $P' = \br{p_{i_1},\cdots,p_{id}}, Q' = \br{q_{i_1},\cdots,q_{i_d}}$ in Line~\ref{lineDefP2}, and make a call to \algname$(P',Q')$. We then add the returned set of alignments $M'$ to $M$. Hence, due to~\ref{RTProp}, it is guaranteed that the alignment $(R,t)$ that satisfies~\eqref{eq:Rtnomatching} will be added to $M$ in one of the iterations.

Now, after recovering the alignment, we apply it to the set $P$, and we are left with computing the correspondence between the transformed $P$ and $Q$. Observe that, since $(R,t)$ have already been recovered, solving for the optimal correspondence is now trivial. We can compute, for every transformed point in $P$, its nearest neighbor in $Q$. Given the alignment $(R,t)$, this correspondence is the optimal correspondence for the given cost function.

Let $\M = \NN(P,Q,(R,t))$ be the nearest neighbor matching between the transformed $P$ and $Q$. Since $\M$ is an optimal matching function for $P$, $Q$, the alignment $(R,t)$, and the function $\cost$, it satisfies that
\begin{equation} \label{eq:optMatching}
\cost\left(P_{[\M]},Q,(R,t)\right) \leq \cost\left(P_{[\M^*]},Q,(R,t)\right).
\end{equation}
Since $(R,t) \in M$ and $\M = \NN(P,Q,(R,t))$, we obtain that $(R,t,\M) \in S$, where $S$ is the set defined in Line~\ref{lineMappingFunc} of Algorithm~\ref{AlgMatching}. Combining $(R,t,\M) \in S$ with the definition of $(\tilde{R},\tilde{t},\tilde{\M})$ in Line~\ref{lineMinS}, it holds that
\begin{equation} \label{eq:minS}
\cost\left(P_{[\tilde{\M}]},Q,(\tilde{R},\tilde{t})\right) \leq \cost\left(P_{[\M]},Q,(R,t)\right).
\end{equation}

Hence, the following holds
\[
\begin{split}
& \cost\left(P_{[\tilde{\M}]},Q,(\tilde{R},\tilde{t})\right) \leq \cost\left(P_{[\M]},Q,(R,t)\right)\\
& \leq \cost\left(P_{[\M^*]},Q,(R,t)\right)\\
& \leq w^r (1+\sqrt{2})^{dr} \cdot \cost\left(P_{[\M^*]},Q,(R^*,t^*)\right),
\end{split}
\]
where the first derivation holds by~\eqref{eq:minS}, the second derivation holds by~\eqref{eq:optMatching} and the third derivation is by~\eqref{eq:Rtnomatching}.
Furthermore, the running time of the algorithm is $n^{O(d)}$ since there are $n^{O(d)}$ iterations, each iteration takes time independent of $n$ ($d^{O(d)}$ time). Afterwards, we compute the optimal matching $\NN(P,Q,(R,t))$ for every $(R,t)\in M$. There are $n^{O(d)}$ alignments in $M$, and computing such an optimal matching for each alignment takes $n^{O(1)}$ time. Hence, the total running time is $n^{O(d)}$.

\textbf{Constrained correspondence. }Assume we wish to solve the registration problem under constraints on the correspondence function, for example that the correspondence function is a bijection function. Then the computation of the set $M$ remains unchanged. Afterwards, the only change required is to compute, for every $(R',t') \in M$, the optimal bijective function between the transformed $P$ and $Q$, rather than the nearest neighbor correspondence.

Kuhn and Harold suggested in~\cite{kuhn1955hungarian} an algorithm that given the pairwise distances (fitting loss) between two sets of $n$ elements $P$ and $Q$, it finds an assignment for every $p\in P$ to an element $q\in Q$ that minimizes the sum of distances between every assigned pair. This algorithm takes $O(n^3)$ time. We can use this algorithm to compute an optimal matching function $\hat{\M}(P,Q,(R',t'),\cost)$ for every $(R',t') \in M$ in Line~\ref{lineMappingFunc} of Algorithm~\ref{AlgMatching}. The proof above remains unchanged except that the optimal correspondence function $\M^*$ is assumed to be a bijection.
\end{proof}

\section{Run Time Improvements}

\begin{figure}
  \begin{minipage}[c]{0.2\textwidth}
    \includegraphics[scale=0.4]{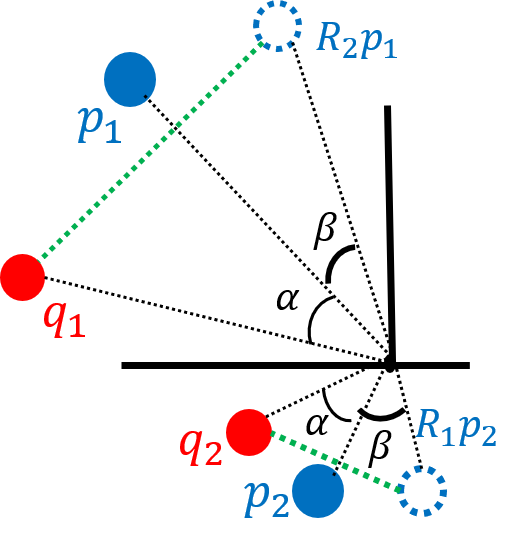}
  \end{minipage}\hfill
  \begin{minipage}[c]{0.27\textwidth}
    \caption{Corresponding pairs $(p_1,q_1)$ and $(p_2,q_2)$, where $\norm{p_1} > \norm{p_2}$. $R_1$ and $R_2$ are rotations that align $(p_1,q_1)$ and $(p_2,q_2)$ respectively. The cost of applying $R_2$ while damaging $(p_1,q_1)$ is bigger than the cost of applying $R_1$ while damaging $(p_2,q_2)$.}\label{fig:normSampling}
  \end{minipage}
\end{figure}

The following claim proves a weak version of the triangle inequality, for the cost functions we define in Definition~\ref{def:cost}.
\begin{claim}[Weak triangle inequality.] \label{claim:weakTria}
Let $z>0$ be a constant. Let $\lip$ be an $r$-log-Lipschitz functions and $\D:\REAL^d\times\REAL^d \to [0,\infty)$ be a function such that $\D(p,q) = \norm{p-q}_z$. Then, for every $p,q,v \in \REAL^d$,
\[
\lip\left(D(p,q)\right) \leq \rho c^r \left(\lip\left(D(p,v)\right) + \lip\left(D(v,q)\right)\right),
\]
where $\rho = \max\br{2^{r-1},1}$ and $c = d^{\abs{\frac{1}{z}-\frac{1}{2}}}$.
\end{claim}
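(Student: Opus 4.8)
The plan is to decompose the inequality into two self-contained facts and then glue them together using the two defining properties of $\lip$: monotonicity and the log-Lipschitz scaling $\lip(cx)\le c^r\lip(x)$. First I would establish a \emph{single-factor} triangle-type inequality for the $\ell_z$ quasi-metric, namely that for all $x,y\in\REAL^d$,
\[
\norm{x+y}_z \le c\left(\norm{x}_z+\norm{y}_z\right),\qquad c=d^{\abs{\frac1z-\frac12}}.
\]
Applied to $x=p-v$ and $y=v-q$ this is exactly $D(p,q)\le c\left(D(p,v)+D(v,q)\right)$. I would prove it by splitting on $z$. For $z\ge 1$ the ordinary triangle inequality already holds in $\ell_z$, and since $c\ge 1$ the extra factor is free. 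For $z\le 2$ I would route through $\ell_2$: the norm-equivalence estimates $\norm{u}_z\le c\norm{u}_2$ and $\norm{u}_2\le\norm{u}_z$ (both valid for $z\le 2$) give $\norm{x+y}_z\le c\norm{x+y}_2\le c\left(\norm{x}_2+\norm{y}_2\right)\le c\left(\norm{x}_z+\norm{y}_z\right)$, the middle step being the genuine $\ell_2$ triangle inequality. Every $z>0$ lies in $z\ge 1$ or $z\le 2$, so all cases are covered (and the two branches agree at $z=2$, where $c=1$). Crucially this yields \emph{one} power of $c$, which is what the claimed $c^r$ needs.

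Second, I would prove the scalar weak triangle inequality for $\lip$: for all $a,b\ge 0$,
\[
\lip(a+b)\le \rho\left(\lip(a)+\lip(b)\right),\qquad \rho=\max\br{2^{r-1},1}.
\]
The structural observation is that the log-Lipschitz property forces $g(x):=\lip(x)/x^r$ to be non-increasing on $(0,\infty)$, since for $y\ge x>0$ the scaling factor $y/x\ge 1$ gives $\lip(y)\le(y/x)^r\lip(x)$. Assuming without loss of generality $a\ge b>0$ (the degenerate cases with a zero argument follow at once from monotonicity and $\rho\ge 1$), I would write $\lip(a+b)=(a+b)^r g(a+b)$. For $r\ge 1$, convexity of $t\mapsto t^r$ gives $(a+b)^r\le 2^{r-1}(a^r+b^r)$; combined with $g(a+b)\le g(a)$ this yields $\lip(a+b)\le 2^{r-1}\left(\lip(a)+b^r g(a)\right)$, and since $a\ge b$ implies $g(a)\le g(b)$ we get $b^r g(a)\le b^r g(b)=\lip(b)$, so $\rho=2^{r-1}$. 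For $0<r\le 1$, subadditivity gives $(a+b)^r\le a^r+b^r$, and applying $g(a+b)\le g(a)$ and $g(a+b)\le g(b)$ termwise produces $\lip(a+b)\le a^r g(a)+b^r g(b)=\lip(a)+\lip(b)$, i.e.\ $\rho=1$.

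Finally I would assemble the two facts in order. Monotonicity of $\lip$ turns the first inequality into $\lip(D(p,q))\le\lip\!\left(c\left(D(p,v)+D(v,q)\right)\right)$; the log-Lipschitz scaling with factor $c>0$ extracts $\lip(D(p,q))\le c^r\lip\!\left(D(p,v)+D(v,q)\right)$; and the scalar weak triangle inequality with $a=D(p,v)$, $b=D(v,q)$ finishes with the factor $\rho$, producing exactly $\rho c^r\left(\lip(D(p,v))+\lip(D(v,q))\right)$.

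I expect the main obstacle to be obtaining the \emph{sharp} constant $\rho=\max\br{2^{r-1},1}$ rather than the easy but lossy $2^r$ that the naive estimate $\lip(a+b)\le\lip(2\max\br{a,b})\le 2^r\lip(\max\br{a,b})$ gives. The remedy is precisely the monotonicity of $x\mapsto\lip(x)/x^r$, which lets the convexity (respectively subadditivity) of the power function recover the factor-two saving in the exponent; once this structural lemma is in hand, the boundary and degenerate cases are routine.
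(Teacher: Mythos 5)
Your proof is correct, and it takes a genuinely different route from the paper. The paper's proof is citation-based: it invokes Lemma~2.1 of the Feldman--Schulman paper (\cite{feldman2012data}) to get the weak triangle inequality for the Euclidean case $z=2$ (where $c=1$), and then declares the extension to general $z>0$ ``trivial'' via the norm-equivalence bounds $\norm{u}_2 \leq \norm{u}_a \leq c\norm{u}_2$ for $a\in(0,2]$ and $\norm{u}_a \leq \norm{u}_2 \leq c\norm{u}_a$ for $a>2$, without spelling out where the single factor $c^r$ lands. You instead re-derive everything from scratch: your monotonicity lemma for $g(x)=\lip(x)/x^r$, combined with convexity of $t\mapsto t^r$ for $r\geq 1$ (respectively subadditivity for $r\leq 1$), is precisely the content of the cited external lemma, and it makes transparent why the sharp constant is $\rho=\max\br{2^{r-1},1}$ rather than the lossy $2^r$. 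Your decomposition also differs structurally: you prove a one-factor quasi-triangle inequality $D(p,q)\leq c\left(D(p,v)+D(v,q)\right)$ directly in $\ell_z$ (handling the quasi-norm regime $z<1$, where $\ell_z$ fails the triangle inequality, by routing through $\ell_2$), and only then push $\lip$ through it, whereas the paper runs the full weak triangle inequality inside $\ell_2$ and converts norms at both ends. The two are mathematically equivalent, but yours is self-contained, makes the $c$-bookkeeping explicit (one power of $c$ inside $\lip$, amplified to $c^r$ by the log-Lipschitz scaling), and fills in exactly the details the paper waves off. One remark: the paper's Definition of $r$-log-Lipschitz literally quantifies the scaling over all $c>0$, which read strictly would force $\lip$ to be an exact power function (and would exclude the paper's own M-estimator example $\min\br{x,T}$); the intended standard definition requires the scaling only for factors $\geq 1$, and your proof is careful in that it only ever invokes scaling with factors $y/x\geq 1$ and $c\geq 1$, so it is valid under the intended definition.
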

\begin{proof}
The case of $z=2$ immediately holds by substituting $\tilde{D} = \lip$ and $\dist = D$ in Lemma $2.1$ of~\cite{feldman2012data}.
The case $z=2$ can be trivially extended to any constant $z>0$ by combining the following property of vector norms:
For every vector $u\in \REAL^d$ and constant $a>0$ it holds that
\[
\begin{cases}
\norm{u}_2 \leq \norm{u}_a \leq c\norm{u}_2 & \text{if } a\in (0,2]\\
\norm{u}_a \leq \norm{u}_2 \leq c\norm{u}_a & \mbox{if } a>2\\
\end{cases}.
\]
\end{proof}

In what follows, for simplicity, we denote by $\cost(P,Q,R)$ the cost $\cost(P,Q,(R,\vec{0}))$ and by $\cost(P,Q) = \cost(P,Q,(I_d,\vec{0}))$.

The following lemma proves that there is an index $j\in [n]$ that can be recovered via non-uniform sampling, and a rotation matrix $R'$ that aligns the directions of $p_j$ and $q_j$, and with high probability also approximates the total initial cost up to some constant factor.
\begin{lemma} \label{lemProbOneRotation}
Put $\sdim \in [d]$ and $z>0$. Let $\pi$ be an $\sdim$-dimensional subspace of $\REAL^d$, and $P = \br{p_1,\cdots,p_n} \subset \pi$ and $Q = \br{q_1,\cdots,q_n} \subset \pi$ be two ordered sets of points. Let $\cost$ be as defined in Definition~\ref{def:cost} for $f= \norm{v}_1$, an $r$-log-Lipschitz function $\lip$ and $D(p,q) = \norm{p-q}_z$.
Let $R^* \in \rot(d)$ and let $j\in [n]$ be an index sampled randomly, where $j = i$ with probability $w_i = \frac{\norm{p_i}^r}{\sum_{j \in [n]}\norm{p_j}^r}$ if $\norm{q_i} \neq 0$ and $w_i = 0$ otherwise.
Then there is $R' \in \R_\pi$ that satisfy the following properties:
\renewcommand{\labelenumi}{(\roman{enumi})}
\begin{enumerate}
\item $R'p_j \in \linspan(q_j)$.
\item $\cost(P,Q,(R',\vec{0})) \leq 6\rho^2 c^{2r} \cdot \cost(P,Q,(R^*,\vec{0}))$ with probability at least $1/2$, where $\rho = \max\br{2^{r-1},1}$ and $c = d^{\abs{\frac{1}{z}-\frac{1}{2}}}$.
\end{enumerate}
\end{lemma}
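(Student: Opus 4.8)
The plan is to establish the deterministic property (i) by construction and to obtain (ii) by bounding the expectation of $\cost(P,Q,(R',\vec{0}))$ over the random index $j$ and then applying Markov's inequality. For (i), since $p_j,q_j\in\pi$, I would take $R'\in\R_\pi$ to be the rotation acting inside the plane $\linspan(\br{p_j,q_j})\subseteq\pi$ that carries the direction of $p_j$ onto the line $\linspan(q_j)$ through the \emph{smaller} of the two available angles. This guarantees $R'p_j\in\linspan(q_j)$ while rotating $\linspan(\br{p_j,q_j})$ by at most $\pi/2$, which is exactly the hypothesis needed to invoke Claim~\ref{claimRot} later. As in the proof of Lemma~\ref{lemRotApprox}, I would reduce to the case $R^*=I_d$ (replacing $P$ by $R^*P$; this is clean since the relevant $R^*$ lies in $\R_\pi$, in particular when $\pi=\REAL^d$), so that the reference cost becomes $\OPT:=\cost(P,Q,(I_d,\vec{0}))=\sum_{i}\lip(\norm{p_i-q_i}_z)$, and I write $R_j:=R'$ for the sampled minimal-angle alignment.

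For (ii) I would first pass to the Euclidean norm a single time: using $\norm{u}_z\le c\norm{u}_2$ together with $r$-log-Lipschitzness of $\lip$, we get $\cost(P,Q,(R_j,\vec{0}))=\sum_i\lip(\norm{R_jp_i-q_i}_z)\le c^r\sum_i\lip(\norm{R_jp_i-q_i}_2)$. Next I apply the weak triangle inequality of Claim~\ref{claim:weakTria} with $z=2$ (so its constant collapses to $1$) and midpoint $v=p_i$, yielding $\lip(\norm{R_jp_i-q_i}_2)\le\rho\big(\lip(\norm{R_jp_i-p_i}_2)+\lip(\norm{p_i-q_i}_2)\big)$. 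Summing over $i$, the ``reference'' contribution $c^r\rho\sum_i\lip(\norm{p_i-q_i}_2)$ is at most $\rho c^{2r}\OPT$ after converting $\norm{\cdot}_2\le c\norm{\cdot}_z$ back, and it remains only to control the ``damage'' contribution $c^r\rho\sum_i\lip(\norm{R_jp_i-p_i}_2)$ in expectation.

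The heart of the argument is this damage term, and it is where the non-uniform sampling is essential. For a fixed $j$, Claim~\ref{claimRot} (applicable precisely because $R_j$ rotates by at most $\pi/2$, with $p=p_j$, $q=p_i$) gives $\norm{R_jp_i-p_i}_2/\norm{p_i}\le\norm{R_jp_j-p_j}_2/\norm{p_j}$ for every $i$, hence by log-Lipschitzness $\lip(\norm{R_jp_i-p_i}_2)\le(\norm{p_i}/\norm{p_j})^r\,\lip(\norm{R_jp_j-p_j}_2)$. Taking expectation over $j$ with weight $w_j\propto\norm{p_j}^r$, the factor $\norm{p_j}^{-r}$ produced by this bound is cancelled \emph{exactly} by the weight, so that $\mathbb{E}_j\big[\sum_i\lip(\norm{R_jp_i-p_i}_2)\big]\le\sum_j\lip(\norm{R_jp_j-p_j}_2)$; this telescoping is the whole reason for sampling proportionally to $\norm{p_j}^r$. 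Finally, a planar computation as in Case (iii) of Lemma~\ref{lemRotApprox} gives $\norm{R_jp_j-p_j}_2\le\sqrt{2}\,\norm{p_j-q_j}_2$, so each summand is at most $2^{r/2}\lip(\norm{p_j-q_j}_2)$ and, converting to $\norm{\cdot}_z$ once more, $\mathbb{E}_j\big[\sum_i\lip(\norm{R_jp_i-p_i}_2)\big]\le 2^{r/2}c^r\OPT$.

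Combining the two contributions yields $\mathbb{E}_j[\cost(P,Q,(R_j,\vec{0}))]\le\rho c^{2r}(2^{r/2}+1)\,\OPT$, and a short check (splitting on $r\ge1$ versus $r\le1$ with $\rho=\max\br{2^{r-1},1}$) gives $\rho(2^{r/2}+1)\le 3\rho^2$, so $\mathbb{E}_j[\cost]\le 3\rho^2c^{2r}\OPT$; Markov's inequality then produces $\cost(P,Q,(R_j,\vec{0}))\le 6\rho^2c^{2r}\OPT$ with probability at least $1/2$, which is (ii). I expect the main obstacle to be the bookkeeping of the norm conversions: one must both (a) hold the number of $\ell_z\!\leftrightarrow\!\ell_2$ passes to two so the exponent is $c^{2r}$ and not $c^{3r}$ — achieved by converting the whole cost to $\ell_2$ once at the start and back once at the end, rather than invoking the $\ell_z$ weak triangle inequality directly — and (b) recognize that weighting by $\norm{p_j}^r$ is exactly what makes $\sum_j w_j(\norm{p_i}/\norm{p_j})^r$ telescope. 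A minor technical point is the case $\norm{q_i}=0$, which is excluded from the sampling support; I would dispatch it by assuming without loss of generality that all $\norm{q_i}>0$, so that $\br{w_i}$ is a genuine distribution and the cancellation above is exact.
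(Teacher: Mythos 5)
Your proof is correct, and its skeleton coincides with the paper's own: the same minimal-angle rotations $R_i$ aligning $p_i$ onto $\linspan(q_i)$, the same split of $\cost(P,Q,(R_j,\vec{0}))$ via a weak triangle inequality into a ``damage'' term $\sum_i\lip(\norm{R_jp_i-p_i})$ plus a reference term, the same use of Claim~\ref{claimRot} combined with the $\norm{p_j}^r$-proportional weights so that the damage term telescopes exactly in expectation, the same resulting bound $\mathbb{E}_j\bigl[\cost(P,Q,(R_j,\vec{0}))\bigr]\le 3\rho^2c^{2r}\cdot\cost(P,Q,(R^*,\vec{0}))$, and the same Markov finish giving the factor $6\rho^2c^{2r}$ with probability at least $1/2$.

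Two of your local choices deviate from the paper, and both are improvements rather than gaps. First, you run the whole argument in $\ell_2$, converting $\ell_z\leftrightarrow\ell_2$ exactly twice (once at the start, once at the end); the paper instead applies Claim~\ref{claim:weakTria} and, implicitly, Claim~\ref{claimRot} directly to $D=\norm{\cdot}_z$, even though Claim~\ref{claimRot} is stated and proved only for the Euclidean norm, so your bookkeeping is the version that is literally licensed by that claim. Second, for the diagonal term you prove $\norm{R_jp_j-p_j}_2\le\sqrt{2}\,\norm{p_j-q_j}_2$ by the chord/law-of-sines computation (the chord is $2\norm{p_j}\sin(\theta_j/2)$, the distance from $p_j$ to the line $\linspan(q_j)$ is $\norm{p_j}\sin\theta_j$, and $\theta_j\le\pi/2$); note this needs no assumption on $\norm{q_j}$ versus $\norm{p_j}$, unlike Case (iii) of Lemma~\ref{lemRotApprox} from which you borrow it. The paper instead bounds this term by a second weak triangle inequality through $q_j$ together with the estimate $\lip(D(R_jp_j,q_j))\le\lip(D(p_j,q_j))$. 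That estimate is delicate: the minimal rotation onto the \emph{line} $\linspan(q_j)$ --- which is the choice required for Claim~\ref{claimRot} to apply --- can place $R_jp_j$ on the ray opposite to $q_j$ whenever the angle between $p_j$ and $q_j$ exceeds $\pi/2$, and then $\norm{R_jp_j-q_j}_2=\norm{p_j}_2+\norm{q_j}_2\ge\norm{p_j-q_j}_2$, so the paper's inequality reverses. Your chord bound sidesteps this tension entirely, at the price of a factor $2^{r/2}$ instead of $2\rho c^r$, which is why your constant still lands at $3\rho^2c^{2r}$ before Markov. The two caveats you flag yourself are handled the same way in the paper: pairs with $q_i=\vec{0}$ are discarded from the sample space (their cost is insensitive to rotation up to a $c$-factor in norm, hence harmless within the target factor), and the reduction to $R^*=I_d$ is legitimate in every application the paper makes of this lemma, since there either $\pi=\REAL^d$ or $R^*=I_d$.
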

\begin{proof}
Without loss of generality assume that $R^*$ is the identity matrix and that $\pi$ is spanned by $e_1,\cdots,e_\sdim$, otherwise rotate the set $P$ of points and rotate the coordinates system respectively.
Furthermore, we remove all pairs $(p_i,q_i)$ of corresponding points from $P$ and $Q$ where $\norm{p_i} = 0$ or $\norm{q_i} = 0$. The distance $D(p_i,q_i)$ between such pairs is unaffected by a rotation of $p_i$, i.e., $D(Rp_i,q_i) = D(p_i,q_i)$ for every rotation matrix $R$. We can therefore ignore such pairs. The sampling probabilities of other pairs will not be affected by removing such $(p_i,q_i)$ since $w_i = 0$ by definition.

For every $i\in [n]$, let $R_i \in \R_{\linspan(\br{p_i,q_i})}$ be a rotation matrix that satisfies $q_i \in \linspan(R_ip_i)$, i.e., aligns the directions of the vectors $p_i$ and $q_i$ by a rotation in the $2$-dimensional subspace (plane) that those two vectors span. If there is more than one such rotation matrix, pick the one that rotates $p_i$ with the smallest angle of rotation.

Now, by the definition of $R_j$ we have that
\begin{equation} \label{triaInequalityLipD}
\lip(D(R_jp_j,q_j)) \leq \lip(D(p_j,q_j))
\end{equation}
for every $j\in [n]$. Therefore, the following holds
\begin{equation} \label{eq:triaForCost}
\begin{split}
\sum_{j=1}^{n}\lip(D(R_jp_j,p_j)) & \leq
\sum_{j=1}^{n}\rho c^r(\lip(D(R_jp_j,q_j)) + \lip(D(q_j,p_j)))\\
& \leq 2\rho c^r\sum_{j=1}^{n} \lip(D(q_j,p_j))\\
& = 2\rho c^r\cost(P,Q),
\end{split}
\end{equation}
where the first derivation is by Claim~\ref{claim:weakTria} and the second derivation is by~\eqref{triaInequalityLipD}.

We now prove that if we sample an index $j \in [n]$ according to the distribution $w = (w_1,\cdots,w_n)$, then the expected cost $\cost(P,Q,R_j)$ between the points of $P$ after a rotation by $R_j$ and their correspond points in $Q$ is at most a constant times the original cost $\cost(P,Q)$.
We observe that
\begin{align}
& \sum_{j\in [n]} w_j \cdot \cost(P,Q,R_j) =
\sum_{j\in [n]} w_j \sum_{i=1}^n\lip(D(R_jp_i,q_i)\\
& \leq \sum_{j\in [n]} w_j \sum_{i=1}^n 2\rho c^r \left(\lip(D(R_jp_i,p_i))+\lip(D(p_i,q_i))\right) \label{eqprobMain1}\\
& = \rho c^r\sum_{j\in [n]} w_j \sum_{i=1}^n\lip(D(R_jp_i,p_i))+ \sum_{j\in [n]} w_j \sum_{i=1}^n\lip(D(p_i,q_i)) \nonumber\\
& = \rho c^r\sum_{j\in [n]} w_j \sum_{i=1}^n\lip(D(R_jp_i,p_i))+ \sum_{i=1}^n\lip(D(p_i,q_i)) \label{eqprobMain2}\\
& = \rho c^r\sum_{j\in [n]} w_j \sum_{i=1}^n\lip(D(R_jp_i,p_i))+ \cost(P,Q) \label{eqprobMain25},
\end{align}
where~\eqref{eqprobMain1} is by substituting in Claim~\ref{claim:weakTria} and~\eqref{eqprobMain2} holds since $w$ is a distribution vector.

We now bound the leftmost term of~\eqref{eqprobMain25}.
\begin{align}
& \sum_{j\in [n]} w_j \sum_{i=1}^n\lip(D(R_jp_i,p_i))\nonumber \\
& = \sum_{j\in [n]} w_j \sum_{i=1}^n\lip\left(\norm{p_i}D\left(\frac{R_jp_i}{\norm{p_i}},\frac{p_i}{\norm{p_i}}\right)\right)\label{eqprobMain3}\\
& \leq \sum_{j\in [n]} w_j \sum_{i=1}^n\norm{p_i}^r\lip\left(D\left(\frac{R_jp_i}{\norm{p_i}},\frac{p_i}{\norm{p_i}}\right)\right)\label{eqprobMain4}\\
& = \sum_{j\in [n]} \frac{\norm{p_j}^r}{\sum_{k \in [n]}\norm{p_k}^r} \sum_{i=1}^n\norm{p_i}^r\lip\left(D\left(\frac{R_jp_i}{\norm{p_i}},\frac{p_i}{\norm{p_i}}\right)\right)\label{eqprobMain5}\\
& \leq \sum_{j\in [n]} \frac{\norm{p_j}^r}{\sum_{k \in [n]}\norm{p_k}^r} \sum_{i=1}^n\norm{p_i}^r\lip\left(D\left(\frac{R_jp_j}{\norm{p_j}},\frac{p_j}{\norm{p_j}}\right)\right)\label{eqprobMain5_2}\\
& = \sum_{j\in [n]} \norm{p_j}^r\lip\left(D\left(\frac{R_jp_j}{\norm{p_j}},\frac{p_j}{\norm{p_j}}\right)\right)\nonumber\\
& \leq \sum_{j\in [n]} \lip\left(D\left(R_jp_j,p_j\right)\right)\label{eqprobMain7}\\
& = 2\rho c^r\cost(P,Q) \label{eqprobMain8}
\end{align}
where~\eqref{eqprobMain3} holds since $D$ is simply a norm function, ~\eqref{eqprobMain4} holds since $\lip$ is an $r$-log Lipschitz function, ~\eqref{eqprobMain5} is simply by substituting the value of $w_j$,  \eqref{eqprobMain5_2} holds by Claim~\ref{claimRot}, ~\ref{eqprobMain7} holds since $D$ is a norm and $\lip$ is an $r$-log Lipschitz function, and~\eqref{eqprobMain8} is by~\eqref{eq:triaForCost}.

Combining~\eqref{eqprobMain8} and~\eqref{eqprobMain25} yields that
\begin{equation} \label{eq:costExpectation}
\sum_{j\in [n]} w_j \cdot \cost(P,Q,R_j) \leq 3\rho^2 c^{2r} \cdot \cost(P,Q).
\end{equation}

For a random variable $X$, a positive constant $a > 0$, the Markov inequality states that
\begin{equation} \label{eq:markov}
Pr(X \geq a) \leq \frac{E(X)}{a},
\end{equation}
where $E(X)$ is the expectation of $X$.

Define the random variable $X := \cost(P,Q,R_j)$, where the randomness is over the choice of the index $j\in [n]$, and let $a = 6\rho^2 c^{2r} \cdot \cost(P,Q)$. By~\eqref{eq:costExpectation}, the expectation of $X$ is $E(X) = 3\rho^2 c^{2r} \cdot \cost(P,Q)$. Plugging into the Markov equality~\eqref{eq:markov} yields that
\[
\cost(P,Q,R_j) \leq 6\rho^2 c^{2r} \cdot \cost(P,Q).
\]
with probability at least $1/2$.
\end{proof}

The following lemma proves the correctness of Algorithm~\ref{alg:probrot}.
\begin{lemma} [Lemma~\ref{lemProbRotation}] \label{lemProbRotation_proof}
Let $P = \br{p_1,\cdots,p_n}$ and $Q = \br{q_1,\cdots,q_n}$ be two ordered sets of points in $\REAL^d$ and let $z>0$. 
Let $\cost$ be as defined in Definition~\ref{def:cost} for $f=\norm{v}_1$, some $r$-log Lipschitz $\lip$ and $D(p,q) = \norm{p-q}_z$.
Let $R'$ be an output of a call to \algNameProbRot$(P,Q,r)$; see Algorithm~\ref{alg:probrot}. Then, with probability at least $\frac{1}{2^{d-1}}$,
\begin{equation} \label{eqToProveProbRot}
\cost(P,Q,(R',\vec{0})) \leq \sigma \cdot \min_{R \in \rot(d)} \cost(P,Q,(R,\vec{0})),
\end{equation}
for a constant $\sigma$ that depends on $d$, $r$ and $z$. Furthermore, $R'$ is computed in $O(nd^2)$ time.
\end{lemma}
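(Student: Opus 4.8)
The plan is to prove the statement by induction on the dimension $\sdim$ of the subspace $\pi$ containing $P$ and $Q$, closely mirroring the deterministic argument of Lemma~\ref{lemApproxD} but replacing its per-level tool (the minimal-angle alignment) with the probabilistic guarantee of Lemma~\ref{lemProbOneRotation}. The inductive hypothesis I would carry is that, for any $(\sdim-1)$-dimensional instance and any reference rotation in $\R_{\pi'}$, the recursive call outputs a rotation whose cost is within a factor $\sigma_{\sdim-1}$ of the reference cost with probability at least $1/2^{\sdim-2}$. Since Lemma~\ref{lemProbOneRotation} contributes exactly one \emph{independent} success event (of probability $\ge 1/2$, over a fresh sample) at each of the $\sdim-1$ recursion levels, multiplying these independent events yields the claimed $1/2^{\sdim-1}$, which at the top level $\sdim=d$ becomes $1/2^{d-1}$. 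The base case $\sdim=2$ is immediate: the algorithm returns a single planar alignment $R$, and property (ii) of Lemma~\ref{lemProbOneRotation} gives $\cost(P,Q,(R,\vec{0})) \le 6\rho^2 c^{2r}\cdot \min_{\hat R}\cost(P,Q,(\hat R,\vec{0}))$ with probability at least $1/2 = 1/2^{2-1}$, so $\sigma_2 = 6\rho^2 c^{2r}$.

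For the inductive step I would fix an optimal $R^* \in \rot(d)$ and, as in Lemma~\ref{lemProbOneRotation}, assume without loss of generality that $R^*$ is the identity. The algorithm samples an index $j$ (with probability proportional to $\norm{p_j}^r$) and picks an \emph{arbitrary} $R\in\R_\pi$ aligning $p_j$ with $q_j$; Lemma~\ref{lemProbOneRotation} supplies, on a probability-$\tfrac12$ event, a \emph{reference} planar rotation $R_j$ aligning the same pair with $\cost(P,Q,(R_j,\vec{0})) \le 6\rho^2 c^{2r}\cdot\OPT$. A key observation that makes the arbitrary choice of $R$ harmless is that every alignment satisfying $Rp_j = \norm{p_j}\,q_j/\norm{q_j}$ obeys $R^{-1}(q_j/\norm{q_j}) = p_j/\norm{p_j}$, so the component of $Rp_i$ along $q_j$ equals $p_i^T(p_j/\norm{p_j})$ for \emph{every} such $R$; hence the orthogonal (along-$q_j$) parts of the residuals coincide for the algorithm's $R$ and for $R_j$, and are left untouched by the subsequent recursion, which acts in $\R_{\pi'}$ and fixes $q_j$. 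After projecting onto the hyperplane $\pi'\perp q_j$ and deleting pair $j$, I would invoke the inductive hypothesis: since the projection of $R_j$ is itself feasible in $\R_{\pi'}$, the returned $S$ is within $\sigma_{\sdim-1}$ of the projected cost of $R_j$, with probability $\ge 1/2^{\sdim-2}$.

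The step I expect to be the main obstacle is the lift back from the projected cost $\cost(P',Q',S)$ to the full cost $\cost(P,Q,SR)$, because Lemma~\ref{lemProbOneRotation} yields only an \emph{aggregate} (summed) bound rather than the clean per-coordinate bound available in Lemma~\ref{lemApproxD}. Concretely, writing $\norm{SRp_i-q_i}_2^2$ as the sum of its squared projection onto $\pi'$ and its fixed squared orthogonal part via the Pythagorean identity behind \eqref{eqDistTria}, I would pass from $\ell_z$ to $\ell_2$ at the cost of the factor $c=d^{\abs{1/z-1/2}}$, bound $\lip(\sqrt{a^2+b^2}) \le 2^r(\lip(a)+\lip(b))$ using that $\lip$ is non-decreasing and $r$-log-Lipschitz, and then recombine the projected and orthogonal sums via Claim~\ref{claim:weakTria} and Claim~\ref{claimRot}. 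Summing over $i\neq j$ and adding the alignment-fixed contribution of pair $j$ gives $\cost(P,Q,SR) \le \kappa\cdot\sigma_{\sdim-1}\cdot\cost(P,Q,R_j)$ for a constant $\kappa$ depending only on $r$ and $z$, which together with $\cost(P,Q,R_j)\le 6\rho^2 c^{2r}\OPT$ yields the recurrence $\sigma_{\sdim} = 6\kappa\rho^2 c^{2r}\cdot\sigma_{\sdim-1}$ and hence $\sigma = \sigma_d = (6\kappa\rho^2 c^{2r})^{d-1}$, a constant depending only on $d$, $r$ and $z$. The running-time claim then follows since each of the $\le d-1$ levels samples in $O(n)$ time and forms one alignment and one projection in $O(nd)$ time, for a total of $O(nd^2)$.
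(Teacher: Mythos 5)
Your proposal is correct and follows essentially the same route as the paper's own proof: both invoke Lemma~\ref{lemProbOneRotation} once per recursion level, project onto the hyperplane orthogonal to the sampled $q_j$, continue for $d-1$ levels, lift the projected cost bound back to the full cost via a Pythagorean decomposition combined with the weak triangle inequality (Claim~\ref{claim:weakTria}) and $\ell_z$/$\ell_2$ norm equivalence, and multiply the per-level probability-$\tfrac{1}{2}$ guarantees to obtain $1/2^{d-1}$. The only differences are organizational: you phrase the argument as a formal induction on the subspace dimension and neutralize the algorithm's arbitrary choice of aligning rotation via the observation that the along-$q_j$ residual components are identical for every such rotation, whereas the paper unrolls the iteration explicitly and handles the same arbitrariness through its claims on sampling-independence and on the orthogonality of the successively aligned directions.
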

\begin{proof}
Let
\[
R^* \in \argmin_{R \in \rot(d)} \cost(P,Q,(R,\vec{0}))
\]
be the optimal rotation matrix, and let $\rho = \max\br{2^{r-1},1}$ and $c = d^{\abs{\frac{1}{z}-\frac{1}{2}}}$.


Let $j_1 \in [n]$ be an index sampled randomly, where $j = i$ with probability $w_i = \frac{\norm{p_i}^r}{\sum_{j \in [n]}\norm{p_j}^r}$ if $\norm{q_i} \neq 0$ and $w_i = 0$ otherwise. Observe that the probabilities $w_i$ are independent of $R^*$ since a rotation matrix does not change the norms of the points, i.e., $\norm{R^*p_i} = \norm{p_i}$ for every $i\in [n]$.
By Lemma~\ref{lemProbOneRotation}, there exists a matrix $R_1$ that aligns the direction vectors of $p_{j_1}$ and $q_{j_1}$, and with probability at least $1/2$ satisfies:
\begin{equation} \label{eq:rotOptToRot1}
\cost(P,Q,R_1R^*) \leq 6\rho^2 c^{2r} \cdot \cost(P,Q,R^*).
\end{equation}
However, there might be an infinite set $A_1$ of such rotation matrices which align the direction vectors of $p_{j_1}$ and $q_{j_1}$.
Let $R_1 \in A_1$ be an arbitrary such rotation matrix.

Let $P'$ be the set $P$ after applying $R_1R^*$, and let $\hat{P}'$ and $\hat{Q}$ be the sets $P'$ and $Q$ respectively after projecting their points onto the hyperplane orthogonal to $q_{j_1}$ i.e.,
\[
P' = \br{p'_i:=R_1 R^{*}p_i \mid i \in [n]},
\]
\[
\hat{P}' = \br{\hat{p}_i:=WW^Tp_i' \mid i \in [n]},
\]
and
\[
\hat{Q} = \br{\hat{q}_i:=WW^Tq_i \mid i \in [n]},
\]
where $W \in \REAL^{d\times (d-1)}$ is an orthogonal matrix whose columns span the hyperplane $H$ orthogonal to $q_{j_1}$.

Let $j_2 \in [n]$ be an index sampled randomly, where $j = i$ with probability $w_i = \frac{\norm{\hat{p}_i}^r}{\sum_{j \in [n]}\norm{\hat{p}_j}^r}$ if $\norm{\hat{q}_i} \neq 0$ and $w_i = 0$ otherwise.
By applying Lemma~\ref{lemProbOneRotation} again on using $\hat{P}'$ and $\hat{Q}$, there is a matrix $R_2$ that aligns the direction vectors of $\hat{p}_{j_2}$ and $\hat{q}_{j_2}$, and with probability at least $1/2$ satisfies:
\begin{equation} \label{eq:R2Prop}
\cost(\hat{P}',\hat{Q},R_2) \leq 6\rho^2 c^{2r} \cdot \cost(\hat{P}',\hat{Q},I_d).
\end{equation}
However, again, there might be an infinite set $A_2$ of such rotation matrices which align the direction vectors of $\hat{p}_{j_2}$ and $\hat{q}_{j_2}$.
Let $R_2 \in A_2$ be an arbitrary such rotation matrix.

We now prove the following claims: (i) the cost $\cost(P',Q,R_2)$ of applying the rotation matrix $R_2$ to the (unprojected) sets $P'$ and $Q$ will approximate the cost $\cost(P',Q,I_d)$,
(ii): the choice of $j_2$ is independent of the choice of $R_1$, and (iii) the vectors $q_{j_1}$ and $\hat{q}_{j_2}$ are orthogonal.

\begin{claim}
It holds that
\begin{equation} \label{claim:rot1ToRot2}
\cost(P,Q,R_2R_1R^*) \leq 12 \rho^4 c^{5r} \cost(P,Q,R_1R^*).
\end{equation}
\end{claim}
\begin{proof}
Recall that $P' = \br{p_1',\cdots,p_n'}$. Consider the hyperplane $H$ orthogonal to $q_{j_1}$ (the hyperplane the points are projected on after the first step). Let $H_i$ be a hyperplane parallel to $H$ but passes through $q_i$. Let $v_i = \proj(R_2p'_i, H_i)$ be the projection of $R_2p_i'$ onto the hyperplane $H_i$ for every $i\in [n]$.

Observe that, by construction, the rotation matrix $R_2$ rotates every point $p_i'$ around the rotation axis $q_{j_1}$, which is orthogonal to $H_i$. Therefore, the distance between $R_2p_i'$ to its (orthogonal) projection onto $H_i$ equals the distance between $p'_i$ and its (orthogonal) projection onto $H_i$. Formally,
\begin{equation} \label{eqProbRot3_proof}
\begin{split}
D(R_2p_i', \proj(R_2p_i', H_i)) = D(p_i', \proj(p_i',H_i)).
\end{split}
\end{equation}

Let $v_i = \proj(R_2p'_i, H_i)$ be the projection of $R_2p'_i$ onto the hyperplane $H_i$ for every $i\in [n]$. We now have that
\begin{align}
& \cost(P',Q,R_2)
= \sum_{i=1}^n \lip\left(D(R_2p'_i, q_i)\right) \label{eqProbRot1}\\
& \leq \rho c^r \sum_{i=1}^n \lip\left(D(R_2p'_i, v_i)\right) +  \rho c^r \sum_{i=1}^n\lip\left(D(v_i, q_i)\right) \label{eqProbRot2}\\
& = \rho c^r \sum_{i=1}^n \lip\left(D(p'_i, \proj(p'_i,H_i))\right) +  \rho c^r \sum_{i=1}^n\lip\left(D(v_i, q_i)\right) \label{eqProbRot3}
\end{align}
where~\eqref{eqProbRot1} is by the definition of $\cost$, \eqref{eqProbRot2} is by the weak triangle inequality in Claim~\ref{claim:weakTria}, and \eqref{eqProbRot3} is by combining~\eqref{eqProbRot3_proof} with the definition of $v_i$.

We now bound the rightmost term of~\eqref{eqProbRot3} as follows:
\begin{align}
& \rho c^r \sum_{i=1}^n\lip\left(D(v_i, q_i)\right) \nonumber\\
& = \rho c^r \sum_{i=1}^n\lip\left(D(\proj(v_i,H), \proj(q_i,H))\right) \label{eqProbRot4}\\
& = \rho c^r \cdot \cost(\hat{P}',\hat{Q},R_2) \label{eqProbRot4_2}\\
& \leq 6 \rho^3 c^{3r} \cdot \cost(\hat{P}',\hat{Q},I_d) \label{eqProbRot4_3}\\
& = 6 \rho^3 c^{3r} \cdot \sum_{i=1}^n\lip\left(D(\proj(p'_i,H), \proj(q_i,H))\right) \label{eqProbRot4_4}\\
& = 6 \rho^3 c^{3r} \sum_{i=1}^n\lip\left(D(\proj(p'_i,H_i), q_i)\right), \label{eqProbRot6}
\end{align}
where~\eqref{eqProbRot4} holds by combining that $v_i, q_i \in H_i$ and that $H_i$ and $H$ are two parallel hyperplanes, \eqref{eqProbRot4_2} holds by the definitions of $v_i$ and $R_2$, \eqref{eqProbRot4_3} is by~\eqref{eq:R2Prop}, \eqref{eqProbRot4_4} is by the definition of $\hat{P}'$ and $\hat{Q}$, and~\eqref{eqProbRot6} holds by combining that $q_i \in H_i$ and that $H$ and $H_i$ are parallel.

Now, consider the triangle $\Delta(p'_i, \proj(p'_i, H_i),q_i)$. This triangle is a right triangle since $q_i \in H_i$. Hence, $D_2(p'_i, \proj(p'_i, H_i)) \leq D_2(p'_i,q_i)$ and $D_2(\proj(p'_i, H_i), q_i) \leq D_2(p'_i,q_i)$. By the properties of vector norms and since $\lip$ is an $r$-log-Lipschitz function, we obtain that
\begin{equation} \label{eqRightTriangleDz}
\begin{split}
& \lip\left(D(p'_i, \proj(p'_i, H_i))\right) + \lip\left(D(\proj(p'_i, H_i), q_i)\right)\\
& \leq 2c^r\lip\left(D(p'_i, q_i)\right).
\end{split}
\end{equation}

Combining the above yields that
\begin{align}
& \cost(P,Q,R_2R_1R^*) = \cost(P',Q,R_2) \label{eqProbRot7_0}\\
& \leq \rho c^r \sum_{i=1}^n \lip\left(D(p'_i, \proj(p'_i,H_i))\right) +  \rho c^r \sum_{i=1}^n\lip\left(D(v_i, q_i)\right) \label{eqProbRot7}\\
& \leq \rho c^r \sum_{i=1}^n \lip\left(D(p'_i, \proj(p'_i,H_i))\right) \nonumber\\
& \quad +  6 \rho^4 c^{4r} \sum_{i=1}^n\lip\left(D(\proj(p'_i,H_i), q_i)\right) \label{eqProbRot8}\\
&\leq 6 \rho^4 c^{4r} \sum_{i=1}^n \lip\left(D(p'_i, \proj(p'_i,H_i))\right) \nonumber\\
& \quad +  6 \rho^4 c^{4r} \sum_{i=1}^n\lip\left(D(\proj(p'_i,H_i), q_i)\right) \nonumber\\
&\leq 12 \rho^4 c^{5r} \sum_{i=1}^n \left( \lip\left(D(p'_i, q_i)\right)\right) \label{eqProbRot10}\\
& = 12 \rho^4 c^{5r} \cdot \cost(P',Q,I_d) \label{eqProbRot11_0}\\
& = 12 \rho^4 c^{5r} \cdot \cost(P,Q,R_1R^*), \label{eqProbRot11}
\end{align}
where~\eqref{eqProbRot7_0} is by the definition of $P
$, \eqref{eqProbRot7} is by~\eqref{eqProbRot3}, \eqref{eqProbRot8} is by~\eqref{eqProbRot6}, \eqref{eqProbRot10} is by~\eqref{eqRightTriangleDz}, and~\eqref{eqProbRot11} is by the definition of $\cost$.

\end{proof}

\begin{claim} \label{claim:j2Independent}
The choice of $j_2$ is independent of the choice of $R_1$.
\end{claim}
\begin{proof}
The choice of $j_2$ depends on the $\ell_2$ norms $\norm{\hat{p}_i}$ of the points in $\hat{P}'$. Observe that the rotation matrices in the set $A_1$ rotate the set $P$ around the axis $q_{j_1}$, and that the projection matrix $W_{j_1}W_{j_1}^T$ projects any point onto the hyperplane orthogonal to $q_{j_1}$. Therefore, for any two matrices $M_1,M_2 \in A_1$, the norms of the vectors $WW^TM_1p$ and $WW^TM_2p$ are the same, for every $p\in P'$. Hence, the distribution from which $j_2$ is drawn is identical for any two matrices in $A_1$.
\end{proof}

\begin{claim}
The vectors $q_{j_1}$ and $\hat{q}_{j_2}$ are orthogonal.
\end{claim}
\begin{proof}
By construction, the vector $\hat{q}_{j_2}$ is obtained by a projection of $q_{j_2} \in Q$ onto the subspace orthogonal to $q_{j_1}$. Therefore, they are orthogonal.
\end{proof}

The above claims prove the existence of $2$ indices, $j_1$ and $j_2$, and two rotation matrices $R_1$ and $R_2$ that align the direction vectors of $p_{j_1}$ with $q_{j_1}$ and $\hat{p}_{j_2}$ with $\hat{q}_{j_2}$ respectively. The choice of $j_1$ and $j_2$ is independent of any initial rotation $R^*$ of $P$ and independent of the choice of $R_1$ respectively. Consider the rotation matrix $\hat{R} = R_2R_1R^*$. Since $q_{j_1}$ and $\hat{q}_{j_2}$ are orthogonal, then $\hat{R}$ can simultaneously align both pairs of vectors, i.e., $\hat{R}$ satisfies both constraints. By combining~\eqref{claim:rot1ToRot2} and~\eqref{eq:rotOptToRot1} we obtain that with probability at least $1/4$,
\[
\cost(P,Q,\hat{R}) \leq (12 \rho^4 c^{5r})^2 \cost(P,Q,R^*).
\]

Repeating the above steps $d-1$ times yields that there is a set of $d-1$ indices $j_1,\cdots,j_{d-1}$, corresponding rotation matrices $R^{(1)},\cdots,R^{(d-1)}$, and a rotation matrix $R' = R_{d-1} \cdots R_1R^*$ that satisfies
\renewcommand{\labelenumi}{(\roman{enumi})}
\begin{enumerate}[leftmargin=0cm]
\item $R'$ aligns the direction vectors of $p_{j_1}$ and $q_{j_1}$, i.e., $R'p_{j_1} \in \linspan(q_{j_1})$.
\item For every $i\in \br{2,\cdots,d-1}$, $R'$ aligns the direction vectors of $p_{j_i}$ and $q_{j_i}$ after their projection onto the hyperplane orthogonal to $q_{j_1}$, then onto the hyperplane orthogonal to $q_{j_2}$, and so on until the projection onto the hyperplane orthogonal to $q_{j_{i-1}}$.
\item The indices $j_1,\cdots,j_{d-1}$ are independent of the initial $R^*$, and also independent of the arbitrary choice of rotation matrices throughout the $d-1$ steps.
\item Identically to~\ref{claim:rot1ToRot2}, for every $k\in \br{2,\cdots,d-1}$ we can prove that with probability at least $1/2$,
\[
\cost(P,Q,R_k\cdots R_1R^*) \leq 12 \rho^4 c^{5r} \cdot \cost(P,Q,R_{k-1}\cdots R_1R^*).
\]
By combining the last inequality for every $k\in \br{1,\cdots,d-1}$, we obtain that with probability at least $1/2^{d-1}$,
\[
\begin{split}
\cost(P,Q,R') & \leq (12 \rho^4 c^{5r})^{d-1} \cost(P,Q,R^*).
\end{split}
\]
\end{enumerate}

Furthermore, since the pair of aligned direction vectors at the $i$'th step are orthogonal to all the previous $i-1$ aligned direction vectors by construction, the obtained set of $d-1$ constraints on the output rotation are Linearly independent. They thus determine a single rotation matrix $R'$, which can be recovered without knowing either, $R^*$, $R_1,\cdots, R_{d-1}$.
Therefore, those indices can be recovered iteratively as the above constructive proof suggests, and $R'$ can be then recovered from the constraints those pairs determine.

Algorithm~\ref{alg:probrot} is a recursive implementation of the above $d-1$ steps. It computes the set of indices $j_1,\cdots,j_{d-1}$ discussed above, and returns the rotation matrix $R'$ that they determine.

There are at most $d-1$ recursive iterations. Each iteration takes at most $O(nd)$ time. Therefore, the total running time is $O(nd^2)$.
\end{proof}

\begin{theorem} [Theorem~\ref{lemProbAlignment}] \label{lemProbAlignment_proof}
Let $P = \br{p_1,\cdots,p_n}$ and $Q = \br{q_1,\cdots,q_n}$ be two ordered sets of points in $\REAL^d$ and let $z>0$.
Let $\cost$ be as defined in Definition~\ref{def:cost} for $f=\norm{v}_1$, some $r$-log Lipschitz $\lip$ and $D(p,q) = \norm{p-q}_z$.
Let $M$ be an output of a call to \algNameApproxAlignment$(P,Q,r)$; see Algorithm~\ref{alg:prob}. Then, with constant probability greater than $1/2$, there is an alignment $(R',t') \in M$ that satisfies
\begin{equation}
\cost(P,Q,(R',t')) \leq \sigma\cdot \min_{(R,t) \in \alignments} \cost(P,Q,(R,t)),
\end{equation} \label{eqProbToProve}
for a constant $\sigma$ that depends on $d$, $r$ and $z$. Furthermore, $M$ is computed in $O\left(\frac{nd^2}{\log\left(\frac{2^d}{2^d-1}\right)}\right)$ time.
\end{theorem}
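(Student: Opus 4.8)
The plan is to mirror the deterministic argument of Theorem~\ref{lemApproxAlignment}, but to replace both of its deterministic choices—the translation anchor and the rotation—by randomized counterparts whose failure probabilities are controlled and then amplified by repetition. As in that proof, I would first assume without loss of generality that the optimal alignment is $(R^*,t^*)=(I_d,\vec 0)$, by rotating and translating the coordinate system, so that $\OPT := \min_{(R,t)\in\alignments}\cost(P,Q,(R,t)) = \sum_{i=1}^n \lip(\norm{p_i-q_i}_z)$. Writing $t_i := p_i-q_i$, I would fix one iteration of Algorithm~\ref{alg:prob}, in which an index $j$ is drawn uniformly and the centered sets $P'=\br{p_i-p_j}\setminus\br{\vec 0}$ and $Q'=\br{q_i-q_j}\setminus\br{\vec 0}$ are formed. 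The single-iteration analysis aims to show that, with probability at least $2^{-d}$, the returned alignment $(R',R'p_j-q_j)$ has cost at most $\sigma\cdot\OPT$.

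The first (translation) step is the genuinely new ingredient. Since $D(p_i-p_j,q_i-q_j)=\norm{t_i-t_j}_z$, the identity-rotation cost of the centered instance is $\cost(P',Q',(I_d,\vec 0))=\sum_i \lip(\norm{t_i-t_j}_z)$. Applying the weak triangle inequality of Claim~\ref{claim:weakTria} with intermediate point $\vec 0$ gives $\lip(\norm{t_i-t_j}_z)\le \rho c^r(\lip(\norm{t_i}_z)+\lip(\norm{t_j}_z))$, hence $\cost(P',Q',(I_d,\vec 0))\le \rho c^r(\OPT + n\,\lip(\norm{t_j}_z))$, where $\rho=\max\br{2^{r-1},1}$ and $c=d^{\abs{1/z-1/2}}$. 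Because $j$ is uniform, Markov's inequality shows that at least half the indices satisfy $\lip(\norm{t_j}_z)\le \tfrac{2}{n}\OPT$, so with probability at least $1/2$ the centered instance satisfies $\cost(P',Q',(I_d,\vec 0))\le 3\rho c^r\cdot\OPT$. This is the randomized analogue of the ``closest pair'' translation bound in Theorem~\ref{lemApproxAlignment}.

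Conditioned on such a good $j$, I would invoke Lemma~\ref{lemProbRotation} on the fixed instance $(P',Q')$: with probability at least $2^{-(d-1)}$ over the internal randomness of \algNameProbRot, its output $R'$ satisfies $\cost(P',Q',(R',\vec 0))\le \sigma'\cdot\min_{R\in\rot(d)}\cost(P',Q',(R,\vec 0))\le \sigma'\cdot\cost(P',Q',(I_d,\vec 0))\le 3\rho c^r\sigma'\cdot\OPT$, where $\sigma'$ is the constant of Lemma~\ref{lemProbRotation} and the middle step uses $I_d\in\rot(d)$. A direct expansion then identifies the two costs: with $t'=R'p_j-q_j$ one has $R'p_i-t'-q_i=R'(p_i-p_j)-(q_i-q_j)$, so $\cost(P,Q,(R',t'))=\cost(P',Q',(R',\vec 0))$ (the dropped index $i=j$ contributes $\lip(0)=0$, which holds because an $r$-log-Lipschitz $\lip$ must vanish at $0$). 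Since the good-$j$ event depends only on the uniform draw while the rotation event depends only on \algNameProbRot's randomness given $j$, the two are independent, so a single iteration succeeds with probability at least $\tfrac12\cdot 2^{-(d-1)}=2^{-d}$, with $\sigma=3\rho c^r\sigma'$ depending only on $d,r,z$.

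Finally I would amplify over the independent iterations. The probability that all $N=1/\log(2^d/(2^d-1))$ iterations fail is at most $(1-2^{-d})^N=\big((2^d-1)/2^d\big)^N=e^{-1}<1/2$ (reading $\log$ as the natural logarithm), so with probability greater than $1-1/e>1/2$ some iteration places an alignment of cost at most $\sigma\cdot\OPT$ into $M$, proving the bound. For the running time, each iteration is dominated by the $O(nd^2)$ call to \algNameProbRot, with $O(nd)$ overhead to center the points, giving total time $O\!\left(\tfrac{nd^2}{\log(2^d/(2^d-1))}\right)$. The main obstacle is the translation step: unlike the deterministic algorithm, which can select the closest pair as anchor, here one must show that a uniformly random anchor is ``good'' with constant probability, correctly combine this with the independent rotation success, and match the per-iteration success probability $2^{-d}$ to the prescribed iteration count so that the overall failure probability is a constant strictly below $1/2$.
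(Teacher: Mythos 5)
Your proposal is correct and takes essentially the same route as the paper's own proof: a uniformly random anchor shown to be good with probability $1/2$ via the averaging (Markov) argument and the weak triangle inequality of Claim~\ref{claim:weakTria}, giving the $3\rho c^r$ translation bound, composed with Lemma~\ref{lemProbRotation} for the rotation to reach per-iteration success probability $2^{-d}$ with the same constant $\sigma = 3\rho c^r \sigma'$, and then amplified over the $1/\log\bigl(2^d/(2^d-1)\bigr)$ iterations. The only piece the paper makes explicit that you leave implicit is the justification of the ``without loss of generality'' reduction for a \emph{randomized} algorithm (its ``bridging the gap'' paragraph: the sampling distribution of \algNameProbRot{} depends only on norms and is therefore invariant to the unknown optimal rotation, so the call on the centered \emph{given} data is equivalent to the call on the centered optimally-aligned data used in the analysis); conversely, your explicit $(1-2^{-d})^{N}=e^{-1}<1/2$ amplification computation and the observation that $\lip(0)=0$ for an $r$-log-Lipschitz function are details the paper glosses over.
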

\begin{proof}
Let $(R^*,t^*)$ be the optimal alignment, i.e.,
\[
(R^*,t^*) \in \argmin_{(R,t) \in \alignments} \cost(P,Q,(R,t)).
\]
Throughout this proof, for simplicity of notation, we assume that the points of $P$ have already been rotated and translated by the alignment $(R^*,t^*)$, i.e., we assume that $R^*$ is the identity matrix and $t^*$ is a zeros vector. Therefore, the following proof is only a prove of existence, since $P$ is not known in practice. At the end of the proof, we explain how to bridge this gap.

Let $\OPT$ be the minimal cost, i.e.,
\begin{equation} \label{optDef}
\OPT = \cost(P,Q,(I_d,\vec{0})) = \sum_{i=1}^n \lip\left(D(p_i, q_i)\right).
\end{equation}

\begin{claim} \label{claim:HalfAreGood}
There are at least $n/2$ corresponding pairs $(p_i,q_i)$ that satisfy
\[
\lip\left(D(p_i, q_i)\right) \leq \frac{2\OPT}{n}.
\]
\end{claim}
\begin{proof}
Falsely assume that there are less than $n/2$ such pairs. This implies that there are at least $n/2$ pairs which satisfy
\[
\lip\left(D(p_i, q_i)\right) > \frac{2\OPT}{n}.
\]
The cost of those (at least) $n/2$ pairs would thus be greater than $\frac{n}{2} \cdot \frac{2\OPT}{n} = \OPT$, which contradicts the definition of $\OPT$ as the cost of the whole $n$ pairs.
\end{proof}

We now prove that translating the set $P$ by $(p_j-q_j)$ for a randomly selected index $j\in [n]$ yields a constant factor approximation to $\OPT$.
Let $j\in [n]$ be an index selected uniformly at random, let $\rho = \max\br{2^{r-1},1}$ and $c = d^{\abs{\frac{1}{z} - \frac{1}{2}}}$. Then,
\begin{align}
& \cost(P,Q,(I_d,(p_j-q_j)))
= \sum_{i=1}^n \lip\left(D(p_i-(p_j-q_j), q_i)\right) \label{eqProbT1}\\
& \leq \rho c^r \left( \sum_{i=1}^n \left(\lip\left(D(p_i-(p_j-q_j), p_i)\right) + \lip\left(D(p_i, q_i) \right)\right) \right) \label{eqProbT2}\\
& = \rho c^r \left( \sum_{i=1}^n \lip\left(D(p_j,q_j)\right) + \sum_{i=1}^n\lip\left(D(p_i, q_i)\right) \right) \nonumber\\
& \leq \rho c^r \left( \sum_{i=1}^n \frac{2\OPT}{n} + \sum_{i=1}^n \lip\left(D(p_i, q_i)\right) \right) \label{eqProbT4}\\
& = \rho  c^r \left(2\OPT+ \sum_{i=1}^n \lip\left(D(p_i, q_i)\right) \right) \nonumber\\
& = 3\rho  c^r \cdot \OPT, \label{eqProbT6}
\end{align}
where~\eqref{eqProbT1} is by the definition of $\cost$, \eqref{eqProbT2} is by the weak triangle inequality in Claim~\ref{claim:weakTria}, \eqref{eqProbT4} holds with probability at least $1/2$ by combining Claim~\ref{claim:HalfAreGood} with the random pick of the index $j$, and~\eqref{eqProbT6} holds by the definition of $\OPT$ in~\eqref{optDef}.

Therefore, a translation of $P$ by $t' = p_j-q_j$ where $j\in[n]$ is selected uniformly at random yields a constant factor approximation to $\OPT$.

We now consider the sets $P' = \br{p-(p_j-q_j) - q_j \mid p \in P} = \br{p-p_j \mid p \in P}$ and $Q' = \br{q - q_j \mid q \in Q}$, i.e., we translate both set, the (already translated) set $P$ and $Q$, such that $q_j$ is the origin. Observe that translating both sets by the same vector $q_j$ does not change the cost.

Let $R'$ be the output of a call to \algNameProbRot$(P',Q',r)$; see Algorithm~\ref{alg:probrot}. Then by Lemma~\ref{lemProbRotation_proof}, we obtain that, with probability at least $\frac{1}{2^{d-1}}$,
\begin{equation} \label{eq:Rtagapproxopt}
\begin{split}
& \sum_{i=1}^n \lip\left(D(R'(p_i-p_j), q_i-q_j)\right) = \cost(P',Q',(R',\vec{0}))\\
& \leq \sigma' \cdot \min_{R \in \rot(d)} \cost(P',Q',(R,\vec{0}))\\
& = \sigma' \cdot \min_{R \in \rot(d)} \sum_{i=1}^n \lip\left(D(R(p_i-p_j), q_i-q_j)\right),
\end{split}
\end{equation}
for a constant $\sigma'$ that depends on $d$, $r$ and $z$, and can be inferred from the proof of Lemma~\ref{lemProbRotation_proof}.
We therefore obtain that
\begin{align}
& \cost(P,Q, (R',R'p_j-q_j)) \\
& = \sum_{i=1}^n \lip\left(D(R'p_i-(R'p_j-q_j), q_i)\right) \label{eqPropAlignment1}\\
& = \sum_{i=1}^n \lip\left(D(R'p_i-R'p_j, q_i-q_j)\right) \label{eqPropAlignment2}\\
& = \sum_{i=1}^n \lip\left(D(R'(p_i-p_j), q_i-q_j)\right) \nonumber\\
& \leq \sigma' \cdot \min_{R \in \rot(d)} \sum_{i=1}^n \lip\left(D(R(p_i-p_j), q_i-q_j)\right) \label{eqPropAlignment4}\\
& \leq \sigma' \cdot \sum_{i=1}^n \lip\left(D(p_i-p_j, q_i-q_j)\right) \nonumber\\
& = \sigma' \cdot \sum_{i=1}^n \lip\left(D(p_i-(p_j-q_j), q_i)\right) \label{eqPropAlignment6}\\
& \leq 3\rho  c^r \sigma' \cdot \OPT \label{eqPropAlignment7},
\end{align}
where~\eqref{eqPropAlignment1} is by the definition of $\cost$, \eqref{eqPropAlignment2} holds since translating both sets by the same vector does not change the cost, \eqref{eqPropAlignment4} holds with probability $\frac{1}{2^{d-1}}$ by~\eqref{eq:Rtagapproxopt}, \eqref{eqPropAlignment6} holds since translating both sets by the same vector does not change the cost, and~\eqref{eqPropAlignment7} holds with probability $1/2$ by~\eqref{eqProbT6}.

Therefore, the rotation matrix $R'$ and translation vector $R'p_j-q_j$ satisfy~\eqref{eqProbToProve}, for $\sigma = 3\rho  c^r \sigma'$, with probability, at least $\frac{1}{2^{d}}$.

\paragraph{Bridging the gap.} As assumed throughout this proof, the optimal rotation $R^*$ is assumed to be the identity matrix, and the optimal translation vector $t^*$ is assumed to be the zeros vector. Therefore, we need to show how to compute, in practice, the approximated alignment $(R',R'p_j-q_j)$ above.

\textbf{Computing $j$. }The index $j$ can be simply computed by a uniform random sample from $\br{1,\cdots,n}$.

\textbf{Computing $R'$. }As explained above, $R'$ is the output of a call to \algNameProbRot$(P',Q',r)$, where $Q'$ is simple the translation of $Q$ such that $q_j$ intersects the origin, and $r$ is given. On the other hand, $P'$ is unknown to us since it depends on the optimal alignment $(R^*,t^*)$. Fortunately, $P'$ satisfies that its $j$'th point intersects the origin. Furthermore, as shown in Lemma~\ref{lemProbRotation_proof}, Algorithm~\ref{alg:probrot} is independent of the rotation of the input set $P'$ in the call \algNameProbRot$(P',Q',r)$.
Therefore, by defining $\hat{P}$ to be a translated version of the (given) set $P$, such that $p_j$ intersects the origin, i.e., $\hat{P} = \br{p-p_j \mid p \in P}$, we get that the call \algNameProbRot$(\hat{P},Q',r)$ is equivalent to the call \algNameProbRot$(P',Q',r)$. Therefore, computing $R'$ is indeed feasible, even though $(R^*,t^*)$ is not known.

Algorithm~\ref{alg:prob} picks an index $j\in [n]$ at random, and computes the above rotation matrix $R'$ using a call to Algorithm~\ref{alg:probrot}. It then appends the alignment $(R',R'p_j-q_j)$ to the output set $M$. This process is repeated $\frac{1}{\log\left(\frac{2^d}{2^d-1}\right)}$ times to amplify the success probability. Therefore, with probability at least $1/2$, the output set $M$ contains an alignment that satisfies~\eqref{eqProbToProve}.

The running time of each iteration is $O(nd^2)$ due to the call to Algorithm~\ref{alg:probrot}. Therefore, the total running of the $\log\left(\frac{2^d}{2^d-1}\right)$ iterations is $O\left(\frac{nd^2}{\log\left(\frac{2^d}{2^d-1}\right)}\right)$.
\end{proof}

\end{document}